\documentclass[10pt,twocolumn,letterpaper]{article}

\usepackage[pagenumbers]{cvpr} 

\definecolor{cvprblue}{rgb}{0.21,0.49,0.74}
\usepackage[pagebackref,breaklinks,colorlinks,allcolors=cvprblue]{hyperref}

\usepackage[most]{tcolorbox}

 \usepackage{multirow}
 \usepackage[table]{xcolor}

\usepackage{algorithm}
\usepackage{algorithmic}
\usepackage{amsthm}
\usepackage{amsmath}
\usepackage{comment}
\newtheorem{assumption}{Assumption}
\newtheorem{theorem}{Theorem}

\newtheorem{lemma}{Lemma}

\newtheorem{definition}{Definition}

\usepackage{colortbl}
\usepackage{xcolor}
\definecolor{lightpink}{RGB}{255, 230, 235}
\usepackage{bm}

\definecolor{LightRed}{RGB}{255,182,193}
\definecolor{LightBlue}{RGB}{173,216,230}
\usepackage{pifont}

\def\paperID{*****} 
\def\confName{CVPR}
\def\confYear{2026}

\title{DP-FedAdamW: An Efficient Optimizer for Differentially Private Federated Large Models}

\author{
Jin Liu \quad Yinbin Miao \quad Ning Xi\\
School of Cyber Engineering, Xidian University, Xi'an, China\\
{\tt\small \{jinliu9787,ybmiao,nxi\}@xidian.edu.cn}
\and
Junkang Liu\\
College of Intelligence and Computing, Tianjin University, Tianjin, China\\
{\tt\small junkangliukk@gmail.com}
}

\begin{document}
\maketitle
\begin{abstract}
Balancing convergence efficiency and robustness under Differential Privacy (DP) is a central challenge in Federated Learning (FL). While AdamW accelerates training and fine-tuning in large-scale models, we find that directly applying it to Differentially Private FL (DPFL) suffers from three major issues: (i) data heterogeneity and privacy noise jointly amplify the variance of second-moment estimator, (ii) DP perturbations bias the second-moment estimator, and (iii) DP amplify AdamW’s sensitivity to local overfitting, worsening client drift. We propose DP-FedAdamW, the first AdamW-based optimizer for DPFL. It restores AdamW under DP by stabilizing second-moment variance, removing DP-induced bias, and aligning local updates to the global descent to curb client drift.
Theoretically, we establish an unbiased second-moment estimator and prove a linearly accelerated convergence rate without any heterogeneity assumption, while providing tighter $(\varepsilon,\delta)$-DP guarantees.
Our empirical results demonstrate the effectiveness of DP-FedAdamW across language and vision Transformers and ResNet-18. On Tiny-ImageNet (Swin-Base, $\varepsilon=1$), DP-FedAdamW outperforms the state-of-the-art (SOTA) by 5.83\%. The
code is available in Appendix.

\end{abstract}    
\vspace{-2mm}
\section{Introduction}
\label{sec:intro}

\begin{figure}[h]
    \centering
    \includegraphics[width=0.9\linewidth]{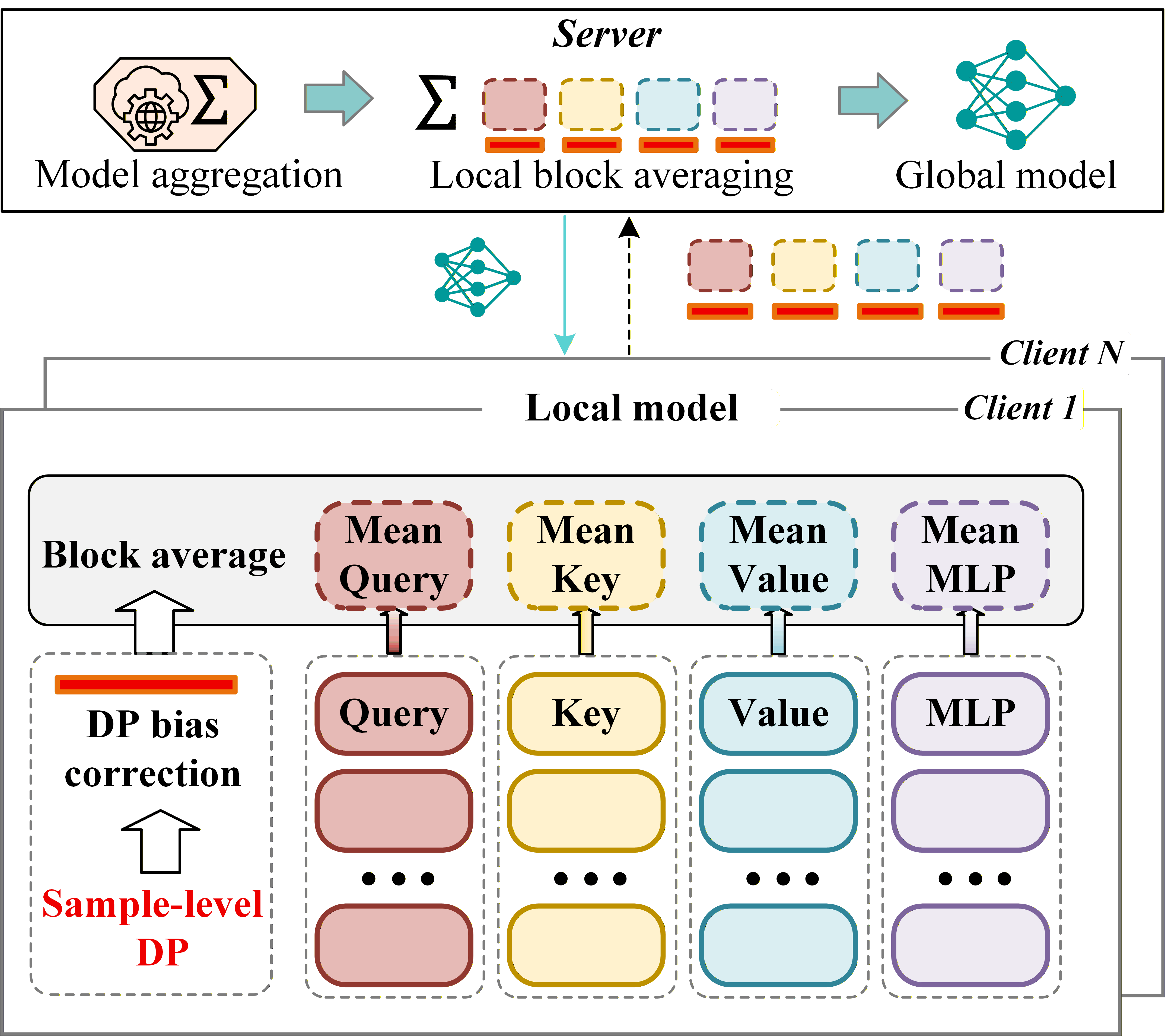}
    \vspace{-2mm}
    \caption{Illustration of our DP-FedAdamW. It aggregates the mean of block-wise Bias-Corrected second moment estimates and performs local–global alignment to stabilize DPFL optimization.}
    \label{figure method}
    \vspace{-2mm}
\end{figure}
\vspace{-2mm}

\begin{figure*}[h]
    \centering
    \includegraphics[width=0.99\linewidth]{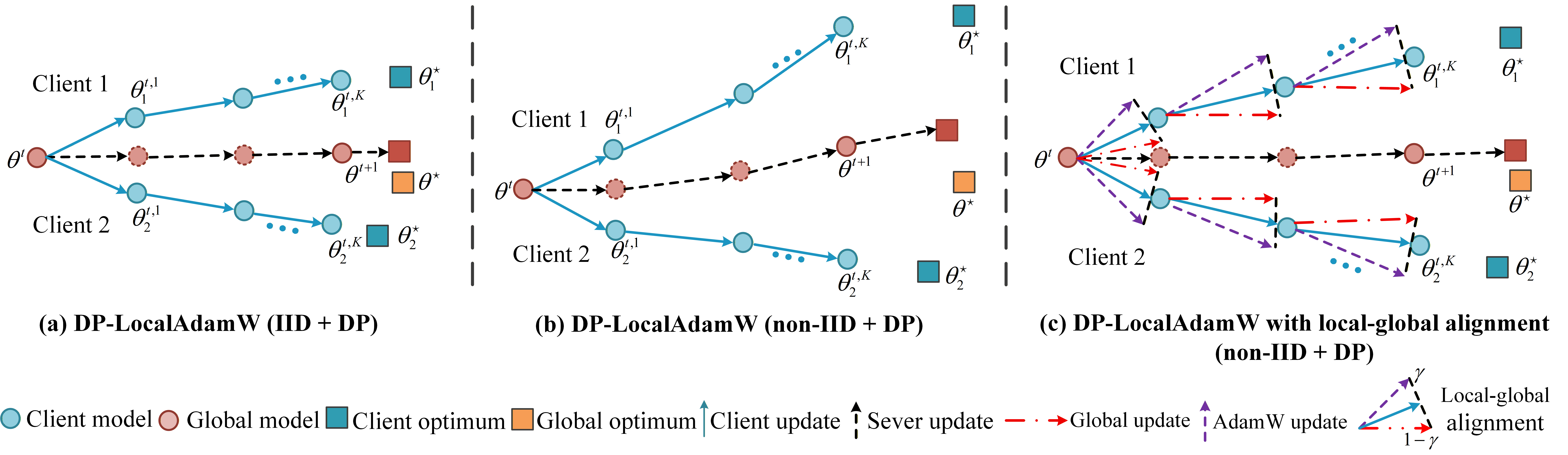}
    \vspace{-2mm}
    \caption{An illustration of local update in DP-FedAdamW, which corrects client drift caused through global update guidance.}
    \label{figure method}
    \vspace{-2mm}
\end{figure*}

As data volumes surge and privacy concerns grow, centralized learning methods are facing increasing limitations in terms of security and collaboration. Federated Learning (FL)~\cite{mcmahan2017communication} enables decentralized training without sharing raw data, reducing privacy risks~\cite{chen2024fair}. However, FL model updates remain vulnerable to attacks~\cite{xu2024dual,fan2025refiner,xu2025risk}. Differential Privacy (DP)~\cite{dwork2006differential} provides formal protection by adding noise, but this increases gradient variance and hinders convergence, especially under non-IID data~\cite{fu2024differentially}.

The emergence of large-scale models such as Swin Transformer~\cite{liu2021swin}, BERT~\cite{kenton2019bert}, and GPT~\cite{achiam2023gpt} has become dominant in vision and language tasks. These deep, parameter-heavy models are highly sensitive to optimization settings, which not only affects training stability but also impacts overall performance. Most DPFL algorithms are built upon the SGD optimizer and adopt the DPSGD~\cite{abadi2016deep} mechanism for privacy preservation~\cite{geyer2017differentially,noble2022differentially,yang2023privatefl,gu2025dp}. 
While effective for small-scale networks, DPSGD becomes inefficient for large Transformer models as the combination of gradient clipping and injected noise severely distorts gradients, amplifying instability and slowing convergence. 


In the pursuit of more efficient optimizers, adaptive methods like AdamW~\cite{loshchilov2017fixing} have become state-of-the-art (SOTA) in deep learning. By decoupling weight decay from adaptive moment estimation, AdamW provides enhanced stability and generalization, making it a compelling choice for various machine learning tasks. These advantages make AdamW a promising candidate for DPFL, where privacy and robustness are crucial. However, our experiments reveal that AdamW fails to deliver its usual benefits in the DPFL setting. In some cases, its performance is comparable to or even worse than SGD. As illustrated in ~\Cref{tab:resnet18_vitbase}, when AdamW is applied locally without any modifications (referred to as DP-LocalAdamW), its performance on the Swin-Tiny model (CIFAR-100, Dirichlet distribution $\alpha{=}0.1$) is worse than DP-FedAvg-LS. These findings suggest that AdamW struggles with the challenges posed by privacy constraints and decentralized FL.

\textbf{Challenges.} We identify three main challenges that contribute to the unexpected results. \textbf{(i) Second-moment estimator variance amplification.} Non-IID client data and DP noise jointly inflate the variance of the second-moment estimator, leading to unstable adaptive scaling in AdamW. \textbf{(ii) Bias in second-moment estimator.} Gradient clipping and noise injection introduce a systematic bias in the second-moment estimator of AdamW. \textbf{(iii) Client crift.} Under non-IID data, DP clipping and noise amplify AdamW’s sensitivity to local overfitting, further worsening client drift. All challenges are explored in more detail in Section~\ref{sec:challenges}. These findings raise a key question: \textbf{How can AdamW maintain effectiveness in the presence of differential privacy and federated heterogeneity?}

To answer this, we propose DP-FedAdamW, a new differentially private adaptive optimizer that restores the effectiveness of AdamW under non-IID data and privacy noise. DP-FedAdamW (i) aggregates second-moment estimates in a block-wise manner to stabilize variance and improve communication efficiency, (ii) eliminates DP-induced bias through an unbiased second-moment correction, and (iii) enforces local–global update alignment to mitigate client drift, achieving more stable and accurate convergence. 

\noindent \textbf{Our contributions} are summarized as follows:

\begin{itemize}[leftmargin=*]
    \item \textbf{Empirical importance of AdamW and challenges in DPFL.} We empirically demonstrate the effectiveness of AdamW in DPFL. Our analysis reveals three key challenges: high variance in second-moment estimator, bias in second-moment estimator, and high client drift.
    \item \textbf{DP-FedAdamW: A principled adaptive optimizer for DPFL.} We propose DP-FedAdamW, a communication-efficient DPFL algorithm. It integrates global update estimation into local updates to suppress client drift, aggregates block-wise means of second-moment statistics across clients to stabilize variance, and introduces a Bias-Corrected mechanism to eliminate second-moment estimation bias caused by DP. 
    \item \textbf{Theoretical and empirical guarantees.} We derive the first convergence guarantee for a DPFL adaptive optimizer that overcomes gradient heterogeneity, achieving a linear speedup rate of $\mathcal{O}(\sqrt{(L \Delta \sigma_l^2)/(S K T \epsilon^2)}+(L \Delta)/T+\sigma^2 G_{g}^2/s^2 R^2)$. Experiments on vision and language benchmarks demonstrate that DP-FedAdamW achieves a better privacy–utility trade-off and consistently outperforms SOTA DPFL baselines in convergence.
   
\end{itemize}

\section{Related work}
\label{sec:related}

\vspace{-2mm}
\begin{table*}[h]
	\centering
    \small
	\setlength{\tabcolsep}{1.3pt}
        \caption{Comparison with existing DPFL optimizers.}
	\vspace{-2mm}
	\renewcommand{\arraystretch}{0.8}
	\begin{tabular}{lcccc>{\columncolor{LightBlue}}c}
		\midrule[0.8pt]
		\multirow{1}{*}{\textbf{Method}} & \textbf{DP-FedAvg} & \textbf{DP-SCAFFOLD} & \textbf{DP-FedSAM}  &\textbf{DP-LocalAdamW}& \textbf{DP-FedAdamW} \\
		\midrule
		Optimizer Type   & SGD & SGD & SAM & AdamW & AdamW\\
        Scalability to Large Models   & Limited  & Limited & Moderate & Moderate & High \\
        Client Drift & Strong & Weakened & Slightly weakened & Slightly weakened & Weakened  \\
		Communication Cost & 1$\times$ & 2$\times$ & 2$\times$ & 1$\times$ & 1$\times$ \\
		\midrule[0.8pt]
	\end{tabular}
  \label{tab:related}
  \vspace{-2mm}
\end{table*}

\noindent
\textbf{Differentially Private Federated Learning (DPFL).}
We adopt sample-level DP for FL, ensuring privacy for each training example against any observer of the updates (the server or third parties).
The most prevalent DP-FedAvg~\cite{noble2022differentially} applies per-sample clipping and Gaussian noise locally before aggregation. DP-SCAFFOLD~\cite{noble2022differentially} introduces control variates to alleviate client drift under privacy constraints. DP-FedAvg-LS \cite{liang2024differentially} improves performance using Laplacian smoothing \cite{osher2022laplacian}. Beyond SGD, DP-FedSAM~\cite{shi2023make} uses Sharpness-Aware Minimization (SAM~\cite{foret2021sharpness}) to find flatter minima, improving robustness to DP noise. Despite progress in refined clipping and personalized DP~\cite{wei2025dc,liu2024cross}, optimization remains the dominant bottleneck. However, these algorithms all employ the SGD optimizer for local updates and do not consider the use of adaptive optimizers. In the optimization of large-scale models, the AdamW optimizer has been shown to outperform SGD significantly in both convergence speed and final performance \cite{zhao2026advances,zhao2024balf,zhao2023benchmark,liao2025convex,ding2026enhancing,ding2025traffic,an2024robust,an2024inspired,an2022numerical,liu2022kill,liu2023single, Feng2022SSR, Feng2023MaskCon, Feng2024NoiseBox, Feng2024CLIPCleaner,Feng2025OpenSet,Sun2024LAFS, Feng2026IdealNoise,Feng2025PROSAC,Feng2026NoisyValid,liu2026health,shen2025aienhanced,shen2026mftformer,sun2025objective,ShenSunQi2025,li2025frequency,li2025sepprune,li2026comprehensive,10605121,qi2026federated,qi2023cross,qi2025federated}.

\noindent
\textbf{Adaptive optimization.}
Adaptive optimizers such as Adam~\cite{kingma2014adam}, AdamW~\cite{loshchilov2017fixing}, AMSGrad~\cite{reddi2019convergence}, Amsgrad~\cite{reddi2018convergence} adapt per-parameter step sizes from gradient history, achieving faster and more stable convergence than vanilla SGD and becoming standard for large models (e.g., Transformers)~\cite{zhang2024transformers, NEURIPS2024_350e718f}.
Among these, AdamW applies weight decay directly on parameters, preventing undesired interaction between regularization and moment adaptation, thus improving convergence and generalization~\cite{loshchilov2017fixing, xie2024implicit,10.1145/3746027.3755226,liu2024fedbcgd,liuimproving}.
\noindent
\textbf{Adaptive optimization in Federated Learning.} 
A line of work brings adaptive updates into FL. FedOpt~\cite{reddiadaptive} introduces server-side adaptivity, instantiating Adam and Yogi at the aggregator. FAFED~\cite{wu2023faster} stabilizes training by aggregating clients’ first- and second-moment estimates of Adam. FedAMS~\cite{chen2020toward} highlights that averaging the second-moment is pivotal to avoid divergence. FedLADA~\cite{sun2023efficient} aggregates the second-moment estimate to reduce communication cost. However, these studies mainly evaluate on CNNs. However, these algorithms do not take into account the impact of noise introduced by DP. FedBCGD \cite{liu2024fedbcgd} proposes an accelerated block coordinate gradient descent framework for FL.
FedSWA \cite{liuimproving} improves generalization under highly heterogeneous data by stochastic weight averaging.
FedAdamW \cite{liu2025fedadamw} introduces a communication-efficient AdamW-style optimizer tailored for federated large models.
FedNSAM \cite{liu2025consistency} studies the consistency relationship between local and global flatness in FL.
FedMuon \cite{liu2025fedmuon} accelerates federated optimization via matrix orthogonalization.
DP-FedPGN \cite{liu2025dp} develops a penalizes gradient norms to encourage globally flatter minima in DP-FL.

\noindent\textbf{Our contributions.} 
To the best of our knowledge, adaptive optimization has not yet been systematically explored in the context of DPFL. Existing DPFL optimizers still struggle to train large models in a stable and efficient manner. To address this gap, we propose a method that stabilizes second-moment statistics via block-wise aggregation, removes DP-induced bias in the second moment, and aligns local updates with an estimated global descent direction in a communication-efficient way. We summarize representative DPFL methods and their characteristics in~\Cref{tab:related}.

\section{DPFL framework}
Consider a general DPFL system with $N$ clients. The FL objective is to minimize the following population risk:
\vspace{-2mm}
\begin{equation}
f(\boldsymbol{\theta}) = \frac{1}{N} \!\sum\nolimits_{i=1}^{N} f_i(\boldsymbol{\theta}),
f_i(\boldsymbol{\theta}) := \mathbb{E}_{\xi_i \sim \mathcal{P}_i}\big[ F_i(\boldsymbol{\theta};\xi_i) \big],
\end{equation}
where $f_i$ is the loss function of client $i$. $\xi_i$ is drawn from distribution $\mathcal{P}_i$ in client $i$, and $\mathbb{E}[\cdot]$ denotes expectation with respect to the sample $\xi_i$. 

In communication round $t$, the server selects $S$ clients and distributes the global model $\boldsymbol{\theta}^t$. The client $i$ computes the gradient for each sample $j$ in local $k$ iterations (with a mini-batch $\mathcal{D}_i^k{=}\lfloor s R\rfloor$, $s$ is data sampling rate, $R$ is number of client samples): $g_{ij}=\nabla f_i(\boldsymbol{\theta}_{i}^{t,k}; \xi_{ij})$, and performs gradient clipping and Gaussian noise addition to satisfy DP:
\vspace{-2mm}
\begin{equation}
    \bar{g}_{ij} = g_{ij} / \max(1, \|g_{ij}\|_2 / C),\label{bar_g}
\end{equation}
\begin{equation}
    \tilde g_i^{t,k} \leftarrow \frac{1}{s R} \sum_{j \in \mathcal{D}_i^k} \bar{g}_{i j}+\frac{ C}{s R}\mathcal{N}\!\left(0, \sigma^2 C^2 I\right),\label{tilde_g}
\end{equation}
where $C$ is the clipping threshold, $\sigma$ is the noise factor.
After $K$ local steps, the server updates the global model $\boldsymbol{\theta}^{t+1}$ by adding the averaged aggregate local update to $\boldsymbol{\theta}^t$.

\begin{figure}[h]
    \centering
    \subcaptionbox{Variance evolution of $v$~\label{fig:sub1}}{\includegraphics[width=0.236\textwidth]{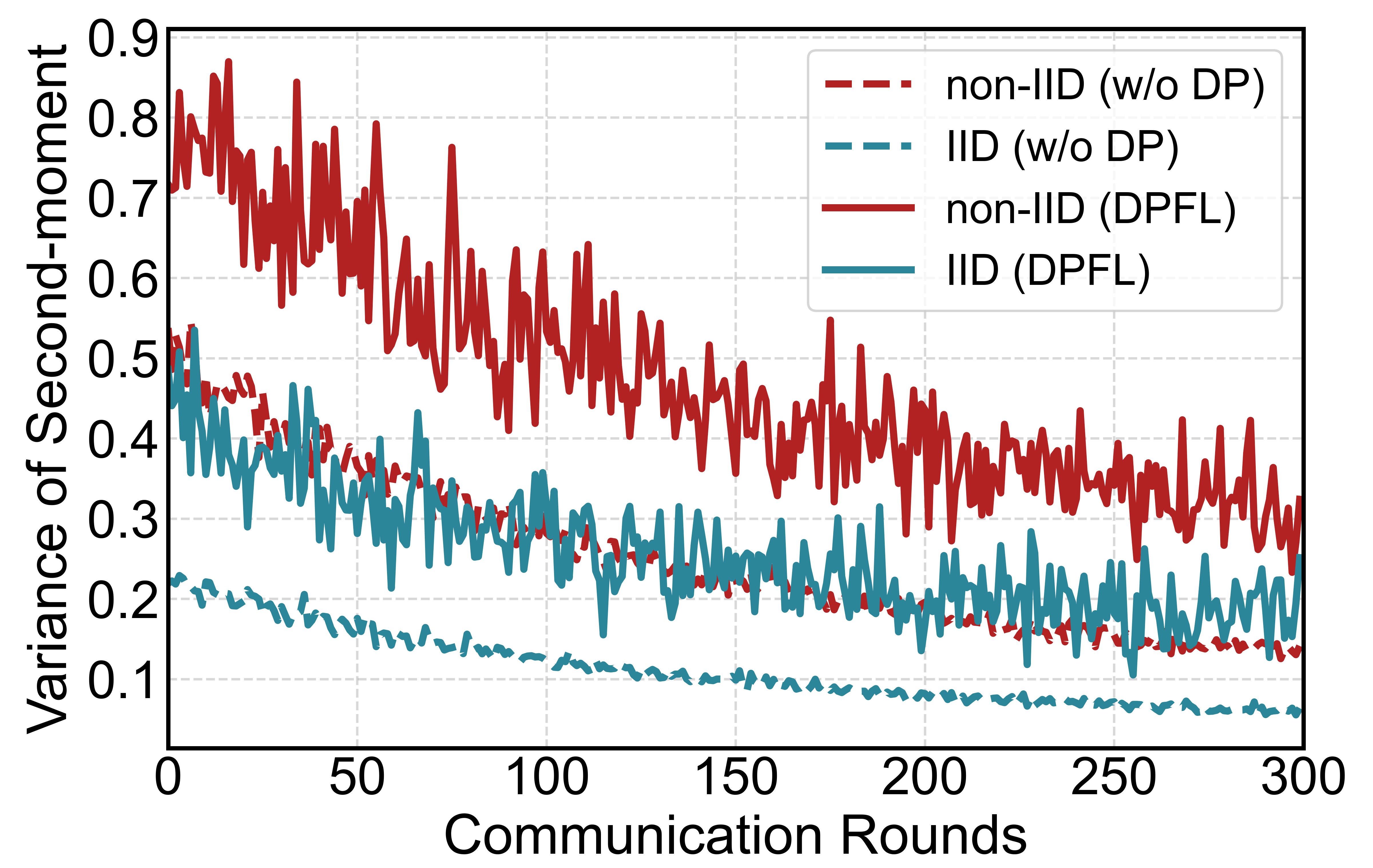}}
    \subcaptionbox{Client drift comparison~\label{fig:sub2}}{\includegraphics[width=0.236\textwidth]{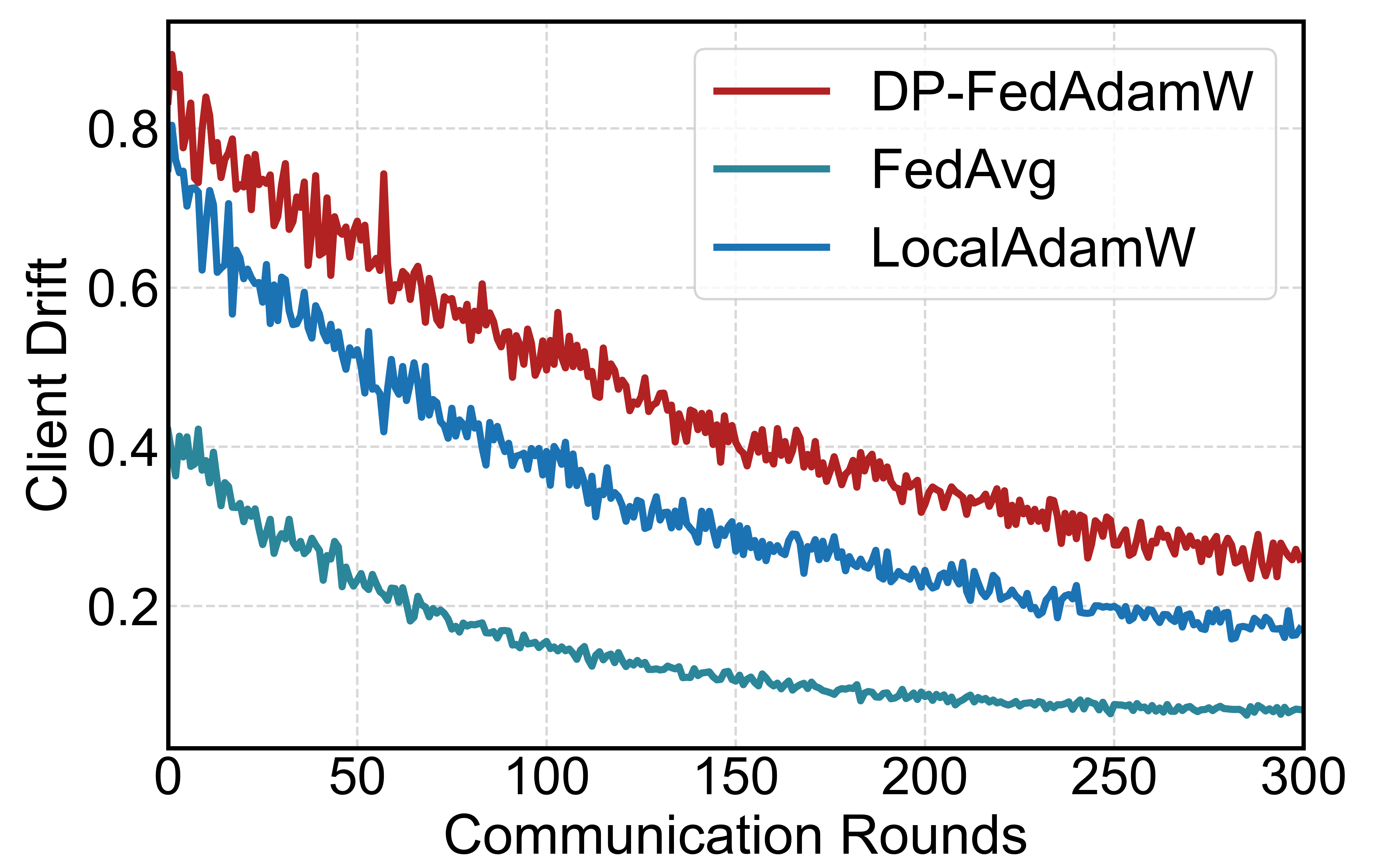}}
    \caption{Training on CIFAR-100, Swin-Tiny, $\sigma{=}1$, $\alpha{=}0.1$. (a) Non-IID (DPFL) causes high variance in second-moment estimator across clients of DP-LocalAdamW. (b) DP-LocalAdamW suffers from more severe client drift than FedAvg and LocalAdamW.}
    \label{fig:challenge1_3}
    \vspace{-2mm}
\end{figure}

\vspace{-4mm}
\section{Motivation and challenges}
\label{sec:challenges}
Prior work~\cite{loshchilov2017fixing, zhou2024towards} shows that AdamW can substantially improve convergence and generalization. However, our key observation is that, in DPFL, AdamW fails to exhibit its typical advantages. In fact, in transformer-based models, AdamW sometimes performs worse than SGD, indicating the benefits of AdamW are diminished when privacy constraints and the FL setup introduce additional complexities.

\noindent
\textbf{Challenge 1: Variance amplification of the second-moment estimator under non-IID and DP}.
In FL, non-IID data disperses client gradients, while DP noise further increases the variance of noisy gradients. This high variance accumulates in AdamW’s exponentially weighted moving averages of the first-moment $\boldsymbol{m}$ and second-moment $\boldsymbol{v}$:
\vspace{-1mm}
\begin{equation}
\begin{aligned}
\boldsymbol{m}_i^{t,k} &\leftarrow \beta_1 \boldsymbol{m}_i^{t,k-1} + (1-\beta_1)\,\tilde g_i^{t,k},\\
\boldsymbol{v}_i^{t,k} &\leftarrow \beta_2 \boldsymbol{v}_i^{t,k-1} + (1-\beta_2)\,\tilde g_i^{t,k} \odot \tilde g_i^{t,k},
\end{aligned}\label{m_v}
\end{equation}
where $\beta_1{=}0.9$, $\beta_2{=}0.999$ are the exponential decay rates. $\odot$ represents the elementwise (Hadamard) product. 
\vspace{-2mm}
\begin{align}
\operatorname{Var}(v^{t})
&\approx
\frac{(1-\beta_{2})^{2}}{1-\beta_{2}^{2}}\,
\operatorname{Var}\!\left(\tilde g^{t}\odot \tilde g^{t}\right).
\end{align}
Since $\tilde g^{t} \odot \tilde g^{t}$ magnifies noise and $\beta_2\!\approx\!1$ induces slow averaging, 
the variance of $\boldsymbol{v}$ grows rapidly and dominates the optimization noise under DP and heterogeneity. ~\Cref{fig:challenge1_3}(a) shows the variance of $\boldsymbol{v}$ under non-IID+DP remains higher and converges much slower than in IID or non-DP cases.

\begin{figure}[h]
\vspace{-1mm}
    \centering
    \subcaptionbox{Histogram of $\boldsymbol{m}^t$~\label{fig:sub1}}{\includegraphics[width=0.236\textwidth]{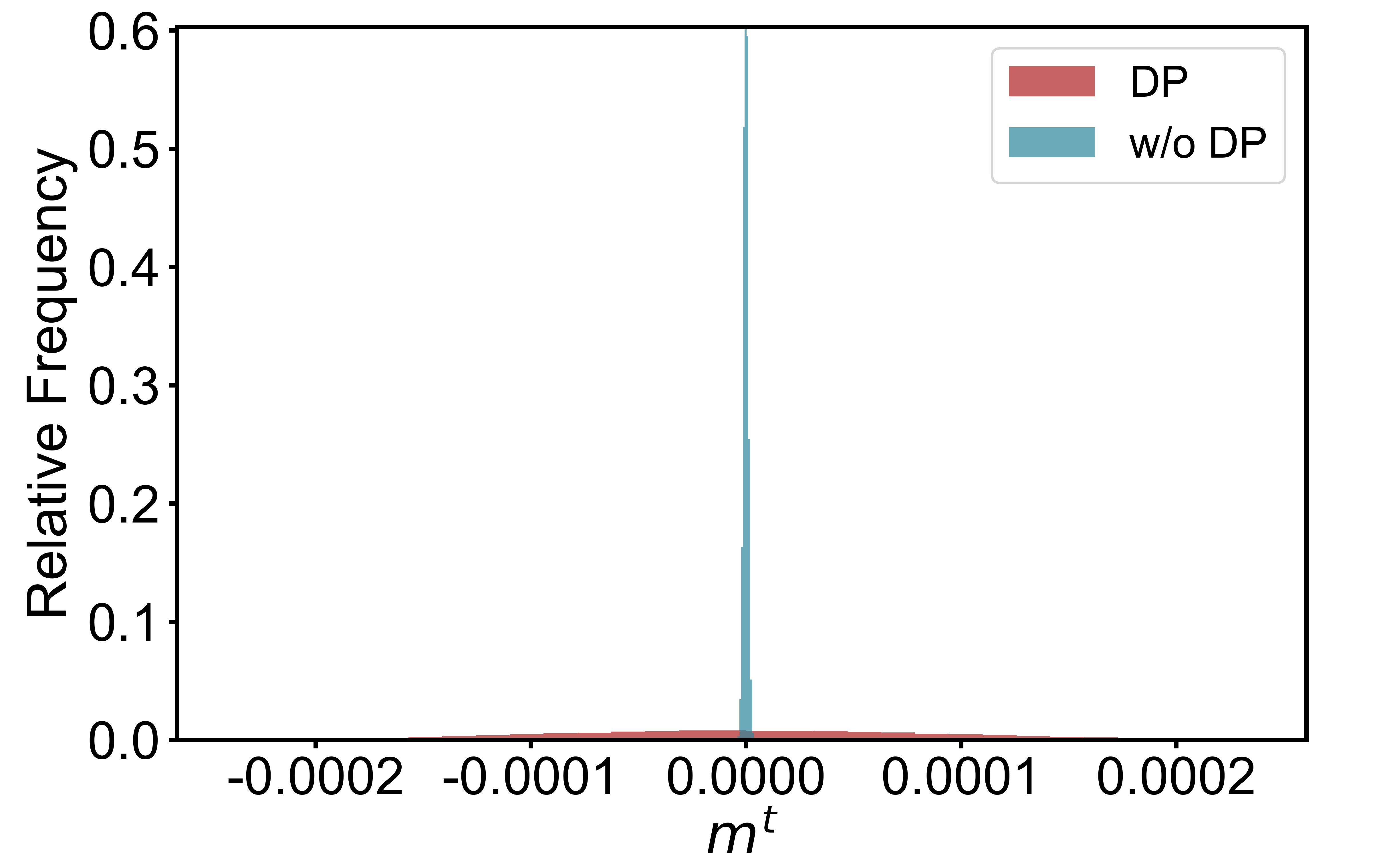}}
    \subcaptionbox{Histogram of $\sqrt{\boldsymbol{v}^t}$~\label{fig:sub2}}{\includegraphics[width=0.236\textwidth]{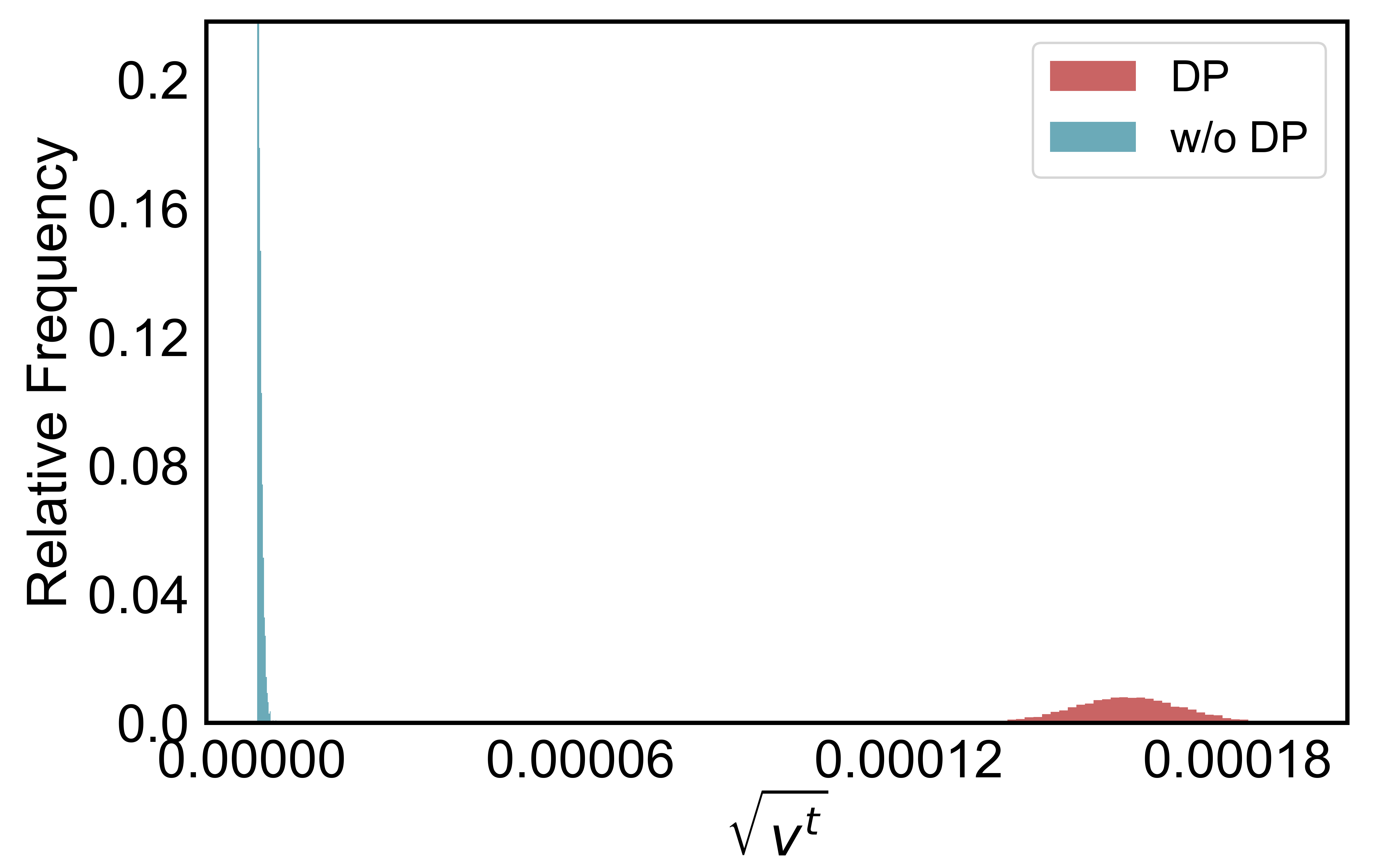}}
    \caption{Histogram for DP-LocalAdamW, CIFAR-10, Swin-Tiny, $\sigma{=}1$, $\alpha{=}0.1$. (a) The distribution centers of $\boldsymbol{m}^t$ are aligned with or without DP, but the variance is larger with DP. (b) The distribution of $\sqrt{\boldsymbol{v}^t}$ shows a significant difference, with the center of the distribution shifting approximately by $\sqrt{\sigma^2 C^2/(sR)^2}$.}
    \label{fig:challenge2}
    \vspace{-3mm}
\end{figure}

\noindent
\textbf{Challenge 2: Systematic bias in the second-moment estimator introduced by DP.} 
Under DP clipping and noise, directly using the perturbed gradient to estimate the second moment produces a systematic bias. This bias is additive and distinct from the initialization bias addressed by Adam’s exponential moving average, so that correction does not remove it in~\Cref{fig:challenge2}. Recall the definition of $\boldsymbol{v}_i^{t,k}$ from Eq.~\ref{m_v}, Eq.~\ref{bar_g}, Eq.~\ref{tilde_g}, we have 
\vspace{-2mm}
\begin{equation}
\begin{aligned}
   & \mathbb{E}\!\left[\tilde g_i^{t,k}\odot \tilde g_i^{t,k}\right]
    = \mathbb{E}\!\left[\bar g_i^{t,k}\odot \bar g_i^{t,k}\right]
       + {\sigma^2 C^2}/{(sR)^2}\, I,\\[3pt]
    &\mathbb{E}\!\left[v_i^{t,k}\right]
    = \mathbb{E}\!\left[v_i^{t,k}\right]_{\mathrm{w/o\ DP}}
       + \bigl(1-\beta_2^{\,k}\bigr)\,{\sigma^2 C^2}/{(sR)^2}\, I.
\end{aligned}
\end{equation}

\noindent
\textbf{Challenge 3: Client drift caused by local adaptivity and DP mechanism.} In FL, each client optimizes its local objective $f(\cdot)$, creating a gap between local and global optima. AdamW accelerates this by pushing clients toward their local optima, increasing client drift. In DPFL, gradient clipping and noise injection reduce effective sample sizes, making updates noisier and amplifying overfitting. This exacerbates client drift, hindering global model convergence. ~\Cref{fig:challenge1_3}(b) shows that DP-FedAdamW results in higher client drift compared to FedAvg and LocalAdamW, where LocalAdamW refers to Federated AdamW without DP.



\vspace{-2mm}
\begin{algorithm}[t]
\caption{DP-FedAdamW}
\label{alg:DP-FedAdamW}
\begin{algorithmic}[1]
\STATE \textbf{Initial} initial model $\boldsymbol{\theta}^0$; rounds $T$; local steps $K$; step size $\eta$; weight decay $\lambda$; AdamW $(\beta_1,\beta_2,\epsilon)$; clip norm $C$; noise multiplier $\sigma$; $S$ clients per round

\FOR{$t=1$ \TO $T$}
  \FOR{selected clients $i=1,\dots,S$ in parallel}
    \STATE $\boldsymbol{\theta}_i^{t,0} \leftarrow \boldsymbol{\theta}^t$;\ \ $\boldsymbol{m}_i^{t,0} \leftarrow \boldsymbol{0}$;\ \ \fcolorbox{LightBlue}{LightBlue}{$\boldsymbol{v}_i^{t,0} \leftarrow \boldsymbol{\bar{v}}^{t}$}
    \FOR{$k=1$ \TO $K$}
      \STATE sample $\mathcal{D}_i^k \subset\mathcal{D}_i$ of size $\lfloor s R\rfloor$
      \FOR{each sample $j \in \mathcal{D}_i^k$}
        \STATE $g_{ij} \leftarrow \nabla f_i(\boldsymbol{\theta}_i^{t,k};\xi_{ij})$
        \STATE $\bar{g}_{ij} \leftarrow g_{ij} / \max(1, \|g_{ij}\|_2 / C)$
      \ENDFOR
      \STATE $\tilde g_i^{t,k} \leftarrow \frac{1}{s R} \sum_{j \in \mathcal{D}_i^k} \bar{g}_{i j}+\frac{ C}{s R}\mathcal{N}\!\left(0, \sigma^2 C^2 I\right)$ 
      \STATE $\boldsymbol{m}_i^{t,k} \leftarrow \beta_1 \boldsymbol{m}_i^{t,k-1} + (1-\beta_1)\,\tilde g_i^{t,k}$
      \STATE $\boldsymbol{v}_i^{t,k} \leftarrow \beta_2 \boldsymbol{v}_i^{t,k-1} + (1-\beta_2)\,\tilde g_i^{t,k} \odot \tilde g_i^{t,k}$
      \STATE $\hat{\boldsymbol{m}}_i^{t,k} \leftarrow \boldsymbol{m}_i^{t,k}/(1-\beta_1^{k})$
      \STATE $\hat{\boldsymbol{v}}_i^{t,k} \leftarrow \boldsymbol{v}_i^{t,k}/(1-\beta_2^{k})$
      \STATE \fcolorbox{LightBlue}{LightBlue}{$\vartheta_{i}^{t,k} \leftarrow 1 / ( \sqrt{\hat{\boldsymbol{v}}_{i}^{t,k}- \left( \frac{\sigma C}{sR} \right)^2}+\epsilon)$}
      \STATE \fcolorbox{LightBlue}{LightBlue}{$\boldsymbol{\theta}_i^{t,k+1}\!\leftarrow\!\boldsymbol{\theta}_i^{t,k}\!-\! \eta(\hat{\boldsymbol{m}}_i^{t,k}\!\odot\!\vartheta_i^{t,k}\!+\!\gamma\boldsymbol{\Delta}_G^t\!-\!\lambda\boldsymbol{\theta}_i^{t,k})$}
    \ENDFOR

    \STATE \fcolorbox{LightBlue}{LightBlue}{Send $( \boldsymbol{\theta}^{t, K}_i\!\!-\!\!\boldsymbol{\theta}^{t, 0}_i, \boldsymbol{\bar{v}}_i\!=\!\texttt{Block\_mean}(\boldsymbol{v}^{t, K}_i) )$}
    \ENDFOR
    \STATE $\boldsymbol{\Delta}_G^{t+1}=\frac{-1}{SK\eta} \sum_{i=1}^S (\boldsymbol{\theta}^{t, K}_i-\boldsymbol{\theta}^{t, 0}_i)$
    \STATE $\boldsymbol{\theta}^{t+1} =\boldsymbol{\theta}^{t} +\frac{1}{S} \sum_{i=1}^S (\boldsymbol{\theta}^{t, K}_i-\boldsymbol{\theta}^{t, 0}_i)$
    \STATE \fcolorbox{LightBlue}{LightBlue}{$\boldsymbol{\bar{v}}^{t+1} =\frac{1}{S} \sum_{i=1}^S \boldsymbol{\bar{v}}_i$}
    \STATE Broadcast $(\boldsymbol{\theta}^{t+1}, \boldsymbol{\bar{v}}^{t+1},\boldsymbol{\Delta}_G^{t+1} ) $ to clients
\ENDFOR

\end{algorithmic}
\end{algorithm}

\section{Methodology}
To restore the effectiveness of AdamW in DPFL, we propose Differentially Private Federated AdamW (DP-FedAdamW). DP-FedAdamW is built upon a unified perspective: 
\emph{the moment statistics of AdamW become distorted under non-IID data and DP.} Our design corrects this distortion along three complementary axes: parameter-block aggregation to stabilize second-moment variance, 
explicit removal of DP-induced bias in the second moment, 
and local–global update alignment to reduce client drift.

\subsection{Second-moment aggregation}
To address \textbf{Challenge 1}, we stabilize the training process by aggregating the second-moment estimates of AdamW. However, directly aggregating per-parameter second moments requires communicating an additional vector with the same dimensionality as the model parameters, which almost doubles the communication overhead. We therefore introduce a block-wise aggregation scheme that communicates only \emph{one} statistic per parameter block.

We adopt a parameter-block averaging strategy, where each block shares a single learning rate to capture local curvature effectively. Based on this idea, we partition the parameters into $B$ blocks and compute the \texttt{mean} of second-moments within each block (line 19 in Algorithm~\ref{alg:DP-FedAdamW}):
\vspace{-2mm}
\begin{equation}
\boldsymbol{\bar{v}}_b = \frac{1}{|\boldsymbol{v}_b|} \sum_{i \in \boldsymbol{v}_b} v_i, \quad b = 1, \dots, B.
\end{equation}

\noindent
\textbf{Block-wise averaging strategy.}
We design the blocks to align with the model architecture. The attention-related parameters are partitioned at the granularity of attention heads, while the remaining modules are partitioned at the granularity of whole layers. For each parameter block, we compute the \texttt{mean} 
of its second-moment estimates and transmit only this block-wise statistic:
\begin{itemize}
    \item \textbf{Head-wise \texttt{Q}, \texttt{K}, \texttt{V}.} 
    According to the number of attention heads in the model, each head-specific slice of the
    \texttt{query} (\texttt{Q}), \texttt{key} (\texttt{K}), and \texttt{value} (\texttt{V}) 
    weight matrix is treated as an independent parameter block. For every such 
    Q/K/V block, we compute and transmit the \texttt{mean} of its second-moment estimates.
    \item \textbf{\texttt{attn.proj} and MLP layers.} 
    Each attention output projection layer (\texttt{attn.proj}) and each MLP layer is regarded as a single parameter block. For each layer, we aggregate the second-moment estimates of all parameters within the layer and transmit the block-wise \texttt{mean}.
    \item \textbf{\texttt{Embedding} and \texttt{output} layers.} 
    Similarly, each \texttt{Embedding} layer and each \texttt{output} (e.g., classification) layer is treated as one parameter block. For these blocks, we also compute and transmit the \texttt{mean} of the corresponding second-moment estimates.
\end{itemize}

\textbf{CNNs (e.g., ResNet):} For convolutional networks, we define each convolutional layer or residual block as one parameter block. This reduces the communication cost from per-parameter second moments to $B$ block-level values while preserving adaptive behavior across different components of the network. 

\subsection{Unbiased second-moment correction}
To address \textbf{Challenge 2}, we propose a Bias-Corrected (BC). The correction term subtracts the variance contribution of the noise from the second-moment estimate of DP-AdamW, restoring the scaling behavior of non-private Adam and removing the influence of DP on the second-moment estimation. The update rule is given by (line 16 in Algorithm \ref{alg:DP-FedAdamW}):
\vspace{-3mm}
\begin{equation}
\vartheta_{i}^{t,k}=1 / ( \sqrt{\hat{\boldsymbol{v}}_{i}^{t,k}- \left( \frac{\sigma C}{sR} \right)^2}+\epsilon)
\end{equation}
Here, $\hat{\boldsymbol{v}}_{i}^{t,k}$ denotes the exponential moving average of the (noisy) second moment of the gradient for parameter $i$, $\sigma$ is the noise multiplier, $C$ is the clipping norm, and $sR$ is the batch size. The term $\frac{\sigma C}{sR}$corresponds to the variance of the Gaussian noise added for differential privacy, and subtracting it removes the constant bias introduced by DP noise in the second-moment estimate. 

\subsection{Local-global alignment}

To mitigate \textbf{Challenge~3} (client drift under non-IID data with DP), we augment the local AdamW update with an explicit alignment toward the global descent direction. Concretely, we modify the local update rule (line~17 in Algorithm~\ref{alg:DP-FedAdamW}) as
\vspace{-1mm}
\begin{equation}
\boldsymbol{\theta}_i^{t,k+1} = \boldsymbol{\theta}_i^{t,k} - \eta \left( \hat{\boldsymbol{m}}^{t,k}_i \odot \vartheta_i^{t,k} - \lambda \boldsymbol{\theta}_i^{t,k} + \gamma \boldsymbol{\Delta}_G^t \right),
\end{equation}
where $\boldsymbol{\Delta}_G^t=\frac{-1}{SK\eta} \sum_{i=1}^S (\boldsymbol{\theta}_i^{t,K} \!-\! \boldsymbol{\theta}_i^{t,0})$ is an empirical estimate of the global update over $S$ participating clients.
 
Under non-IID data, the adaptive local gradient $\hat{\boldsymbol{m}}^{t,k}_i \odot \vartheta_i^{t,k}$ is biased toward each client's own objective. DP clipping distorts the magnitude and direction of these gradients, and DP noise adds high-variance perturbations, pushing the local parameters $\boldsymbol{\theta}_i^{t,k}$ toward different client-specific optima and amplifying the client drift identified in \textbf{Challenge~3}.
The alignment term $\gamma \boldsymbol{\Delta}_G^t$ softly regularizes each local step toward the global descent direction. It is nearly inactive when a client's update already follows the global trend, but when non-IID data and DP operations drive the local direction away, the alignment term counteracts this deviation and steers the trajectory back toward the global path. As illustrated in Figure~\ref{figure method}, this local--global alignment tightens the spread of client models, reduces cross-client update variance at aggregation, and yields a smoother, more stable global optimization trajectory under strong non-IID and DP.

\section{Theoretical analysis}

\subsection{Convergence analysis}

\label{convergence_analysis}
In this part, we give the convergence theoretical analysis of our proposed DP-FedAdamW algorithm. Firstly, we give some standard assumptions for the non-convex function $f$.
\begin{assumption}[Smoothness]
	\label{smoothness}
	(Smoothness) \textit{The non-convex $f_{i}$ is a $L$-smooth function for all $i\in[m]$, i.e., $\Vert\nabla f_{i}(\boldsymbol{\theta}_1)-\nabla f_{i}(\boldsymbol{\theta}_2)\Vert\leq L\Vert\boldsymbol{\theta}_1-\boldsymbol{\theta}_2\Vert$, for all $\boldsymbol{\theta}_1,\boldsymbol{\theta}_2\in\mathbb{R}^{d}$.}
\end{assumption}
\begin{assumption}[Bounded Stochastic Gradient]
	\label{bounded_stochastic_gradient_I}
    \textit{$\boldsymbol{g}_{i}^{t}=\nabla f_{i}(\boldsymbol{\theta}_{i}^{t}, \xi_i^{t})$ computed by using a sampled mini-batch data $\xi_i^{t}$ in the local client $i$ is an unbiased estimator of $\nabla f_{i}$ with bounded variance, i.e., $\mathbb{E}_{\xi_i^{t}}[\boldsymbol{g}_{i}^{t}]=\nabla f_{i}(\boldsymbol{\theta}_{i}^{t})$ and     $\mathbb{E}_{\xi_i^{t}}\Vert g_{i}^{t} - \nabla f_{i}(\boldsymbol{\theta}_{i}^{t})\Vert^{2} \leq \sigma_{l}^{2}$, for all $\boldsymbol{\theta}_{i}^{t}\in\mathbb{R}^{d}$.}
\end{assumption}
\begin{assumption}[Bounded Stochastic Gradient II]
	\label{bounded_stochastic_gradient_II}
	 \textit{Each element of stochastic gradient $\boldsymbol{g}_{i}^{t}$ is bounded, i.e., $\Vert\boldsymbol{g}_{i}^{t}\Vert_{\infty}=\Vert f_{i}(\boldsymbol{\theta}_{i}^{t},\xi_i^{t})\Vert_{\infty}\leq G_{g}$, for all $\boldsymbol{\theta}_{i}^{t}\in\mathbb{R}^{d}$ and any sampled mini-batch data $\xi_i^{t}$.}
\end{assumption}

\begin{theorem}[Convergence for non-convex functions]\label{theorem_convergence_rate1}
	Under Assumptions \ref{smoothness}, \ref{bounded_stochastic_gradient_I}, and \ref{bounded_stochastic_gradient_II}, if we take $g^0=0$,
	then DP-FedAdamW converges as follows
     \vspace{-2mm}
	\begin{equation}
    \frac{1}{T}\!\sum_{t=0}^{T-1}\!
    \mathbb{E}\!\left[\|\nabla f(\boldsymbol{\theta}^{t})\|^2\right]
    \lesssim
    \mathcal{O}\!\left(
    \sqrt{\frac{L\Delta\sigma_l^2}{SKT\epsilon^2}}
    +\frac{L\Delta}{T}
    +\frac{\sigma^2 G_{g}^2}{s^2 R^2}
    \right).
\end{equation}
	Here $G_0:=\frac{1}{N} \sum_{i=1}^N\left\|\nabla f_i\left(\boldsymbol{\theta}^0\right)\right\|^2$,$\Delta=f\left(\boldsymbol{\theta}^0\right)-f^{\star} $, $S$ is the number of participating clients per round, $\sigma$ is DP noise level, $\sigma_l$ is noise of stochastic gradient, $K$ is the number of local iterations, and $T$ is the total number of communication rounds. 
\end{theorem}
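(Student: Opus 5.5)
Our plan is the standard "virtual sequence plus preconditioner bounds" analysis for federated adaptive methods, with the DP error carried as an additive bias and variance term. First we bound the effective preconditioner. By Assumption~\ref{bounded_stochastic_gradient_II} and clipping, every coordinate of $\tilde g_i^{t,k}$ has bounded second moment. Because the bias correction subtracts $(\sigma C/sR)^2$, the corrected quantity $\hat v_i^{t,k}-(\sigma C/sR)^2$ estimates $\mathbb{E}[\bar g\odot\bar g]$ without DP-induced bias, following the computation under Challenge~2. We will assume, as in the algorithm, that the quantity is clipped at $0$. This yields two-sided bounds of the form $\frac{1}{G_g+\epsilon}\le \vartheta_i^{t,k}\le \frac1\epsilon$ coordinatewise. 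Block averaging and the server averaging of $\bar v$ preserve these bounds, since they are convex combinations. Consequently each local direction $\hat m_i^{t,k}\odot\vartheta_i^{t,k}$ is a preconditioned gradient with preconditioner spectrum in $[1/(G_g+\epsilon),1/\epsilon]$.

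Second, we write the global update in the form $\theta^{t+1}-\theta^t=-\eta K(1+\gamma')\,\Delta^t$, where $\Delta^t$ averages the preconditioned local momenta. Since $\Delta_G^t$ is itself the previous round's average of these directions, the alignment term acts as a heavy-ball momentum at the server level. We then apply $L$-smoothness (Assumption~\ref{smoothness}) to get
\[
\mathbb{E} f(\theta^{t+1})\le \mathbb{E} f(\theta^t)-\eta K c_1\,\mathbb{E}\|\nabla f(\theta^t)\|^2+\eta K c_2\,\mathcal{E}_t+\tfrac{L}{2}\eta^2K^2\,\mathbb{E}\|\Delta^t\|^2 ,
\]
where $\mathcal{E}_t$ collects three errors: the momentum lag, the client drift $\frac1{SK}\sum_{i,k}\mathbb{E}\|\theta_i^{t,k}-\theta^t\|^2$, and the clipping/noise bias. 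The weight-decay term is absorbed by treating $\lambda$ as small, or by folding it into the smooth objective.

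Third, we bound the pieces of $\mathcal{E}_t$. The stochastic variance across $S$ clients and $K$ steps contributes $\sigma_l^2/(SK\epsilon^2)$. The Gaussian noise has coordinate variance $\sigma^2C^2/(sR)^2$; combined with the clipping bias, which we control using $C\gtrsim G_g$, it contributes the non-vanishing floor $\sigma^2G_g^2/(s^2R^2)$. The drift term is handled by a recursion in $k$ under $\eta\lesssim 1/(LK)$. We avoid a heterogeneity assumption by expressing local gradients through $\nabla f(\theta^t)$ plus the global-direction correction $\Delta_G^t$, in the SCAFFOLD-like manner that the alignment provides, together with $G_0$ at initialization, using $g^0=0$. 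We then telescope over $t$, divide by $\eta KT$, and choose $\eta\asymp\min\{1/(LK),\sqrt{\Delta S\epsilon^2/(LKT\sigma_l^2)}\}$, which gives the three stated terms.

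The main obstacle is the drift bound without heterogeneity. The alignment term uses the stale quantity $\Delta_G^t$ rather than exact control variates, so cancelling $\nabla f_i-\nabla f$ requires relating $\Delta_G^t$ to $\nabla f(\theta^{t})$ through a one-round-lagged Lyapunov function $f(\theta^t)+c\|\Delta_G^t-\nabla f(\theta^{t-1})\|^2$. Our first attempt at this step is to bound $\|\nabla f_i\|$ uniformly by $G_g$ via Assumption~\ref{bounded_stochastic_gradient_II}. That bound yields the drift at order $\eta^2K^2G_g^2$, which is absorbed into the $L\Delta/T$ term under the chosen step size.
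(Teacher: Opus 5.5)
The plan breaks at the step you flag as the main obstacle: the drift bound. The coordinatewise bound only gives $\|\nabla f_i\|\le\sqrt{d}\,G_g$. Your drift is therefore $U_t\lesssim\eta^2K^2G^2$ with $G^2:=dG_g^2/\epsilon^2$. After telescoping and dividing by $\eta KT$, this leaves $L^2\eta^2K^2G^2$ in the averaged bound.

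\begin{itemize}
\item With $\eta\asymp\sqrt{\Delta S\epsilon^2/(LKT\sigma_l^2)}$ this equals $\frac{L\Delta}{T}\cdot\frac{SKdG_g^2}{\sigma_l^2}$. That is the familiar $O(SK/T)$ penalty of local SGD under bounded gradients, and it is not absorbed into $L\Delta/T$.
\item On the branch $\eta\asymp 1/(LK)$ it is the constant $G^2$, which does not vanish at all.
\item Forcing absorption requires $\eta\lesssim\frac1K\sqrt{\Delta/(LG^2T)}$. But then $\Delta/(\eta KT)\gtrsim\sqrt{L\Delta G^2/T}$, which destroys the $1/\sqrt{SKT}$ speedup.
\end{itemize}

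Conceptually, a uniform gradient bound is a bounded-heterogeneity assumption in disguise. Your drift argument also never uses the alignment term, so it would give the same rate for DP-LocalAdamW. Nor can the alignment act in a ``SCAFFOLD-like'' way. $\gamma\boldsymbol{\Delta}_G^t$ is the same vector on every client, so it cannot cancel the client-specific deviation $\nabla f_i-\nabla f$ the way a control variate $c-c_i$ does.

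Your lagged Lyapunov term $\|\boldsymbol{\Delta}_G^t-\nabla f(\boldsymbol{\theta}^{t-1})\|^2$ is essentially the paper's $\mathcal{E}_{t-1}$, so that part matches. What is missing is the paper's treatment of heterogeneity inside the drift.

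\begin{itemize}
\item The proof models each local step as $\xi_i^{t,k}=-\eta\bigl((1-\gamma)\boldsymbol{\Delta}_G^t+\gamma\nabla f_i(\boldsymbol{\theta}_i^{t,k})\odot\vartheta_i^{t,k}\bigr)$, so the client gradient carries the small weight $\gamma$.
\item It bounds $U_t\le 2eK^2\Xi_t+\dots$, where $\Xi_t$ is driven by $\frac1N\sum_i\|\nabla f_i(\boldsymbol{\theta}^t)\|^2$.
\item It then unrolls $\mathbb{E}\|\nabla f_i(\boldsymbol{\theta}^t)\|^2\le e\,\mathbb{E}\|\nabla f_i(\boldsymbol{\theta}^0)\|^2+2e(t+1)\gamma^2L^2\sum_{j<t}\bigl(\mathcal{E}_j+\mathbb{E}\|\nabla f(\boldsymbol{\theta}^j)\|^2\bigr)$. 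This trades heterogeneity for $G_0$ plus quantities already controlled by the $\mathcal{E}_t$ recursion.
\item The residual $(\gamma\eta KL)^2G_0/\epsilon^4$ is removed by shrinking the local step, $\eta KL\lesssim(L\Delta/(G_0\gamma^3T))^{1/2}$.
\end{itemize}

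That last step is free only because the proof writes the server update as $\boldsymbol{\theta}^{t+1}=\boldsymbol{\theta}^t-\gamma\boldsymbol{\Delta}_G^{t+1}$, with a server step separate from $\eta$. (The paper overloads $\gamma$ for both roles.) Your $\boldsymbol{\theta}^{t+1}-\boldsymbol{\theta}^t=-\eta K(\cdots)$, which follows Algorithm~\ref{alg:DP-FedAdamW}, ties the server step to $\eta$. So even with the $G_0$ recursion you would need that decoupling to reach the stated rate.

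Two smaller points, on which the paper's proof is equally loose:
\begin{itemize}
\item $\vartheta\ge1/(G_g+\epsilon)$ cannot hold almost surely, because the Gaussian noise entering $\hat{\boldsymbol{v}}$ is unbounded.
\item Taking $C\gtrsim G_g$ makes the noise contribution $\sigma^2C^2/(sR)^2$ at least, not at most, $\sigma^2G_g^2/(s^2R^2)$.
\end{itemize}
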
	
  
The proof is provided in Appendix~\ref{app:convergence_analysis}. The convergence rate of DP-FedAdamW is faster than that of DP-LocalAdamW's $\mathcal{O}\left(\sqrt{\frac{L \Delta (\sigma_l^2+\sigma_g^2)}{S K T \epsilon^2}}+\frac{L \Delta}{T}+\frac{\sigma^2 G_{g}^2}{s^2 R^2} \right)$, and we do not need bounded heterogeneity assumption. This is due to the suppression of local drift by the global update estimation $\boldsymbol{\Delta}_G^t$. Unlike prior adaptive FL optimizers \cite{sun2023efficient} which rely on heterogeneity assumption, our convergence result holds without such restrictions, making it more widely applicable.

\subsection{Privacy analysis}
Differential privacy (DP) \cite{dwork2014algorithmic} provides a formal framework for quantifying disclosure risk. We conduct privacy accounting under R\'enyi DP (RDP)~\cite{mironov2017renyi}, which gives tight bounds under composition and subsampling. Following~\cite{noble2022differentially}, we finally convert RDP to $(\varepsilon,\delta)$-DP~\cite{mironov2017renyi}.

\begin{definition}(Sample-level DP, \cite{fu2024differentially}). 
    For any two neighboring datasets $\mathcal{D}$ and $\mathcal{D}^{\prime}$ that differ in a single sample or record (either by addition or removal), a randomized mechanism $\mathcal M:\mathcal X^n\!\to\!\mathcal Y$ satisfies $(\varepsilon, \delta)$-DP at the sample level if, for every possible output set $O \subseteq \mathbb{Y}$, the following holds:
    \vspace{-1mm}
	\begin{equation}
		\operatorname{Pr}[\mathcal{M}(\mathcal{D}) \in O] \leq e^{\varepsilon} \operatorname{Pr}\left[\mathcal{M}\left(\mathcal{D}^{\prime}\right) \in O\right]+\delta,
	\end{equation}
where $\varepsilon$ is the privacy budget, $\delta$ is the failure probability.
\end{definition}
\vspace{-1mm}
\begin{definition}
(Rényi DP, \cite{mironov2017renyi}). For any order $\zeta \in (1, \infty )$, $\mathcal{M}$ satisfies $(\zeta, \varepsilon_{\mathrm T})$-RDP if for any two neighboring datasets $\mathcal{D}$, $\mathcal{D}^\prime$ that differ by a single sample, it has:
\vspace{-2mm}
\begin{equation}
\!\!\!D_{\zeta}\left[\mathcal{M}(\mathcal{D}) \| \mathcal{M}\left(\mathcal{D}^{\prime}\right)\right]\! :=\!\frac{1}{\zeta\!-\!1} \log \mathbb{E}\left[\left(\frac{p_{\mathcal{M}(\mathcal{D})}\left( Y\right)}{p_{\mathcal{M}\left(\mathcal{D}^{\prime}\right)}\left( Y\right)}\right)^{\zeta}\!\right] \! \leq \!\varepsilon_{\mathrm R},
\end{equation}
The expectation $\mathbb{E}$ is taken over the output $Y \sim \mathcal{M}(\mathcal{D}^\prime)$.
\end{definition}

We adopt standard RDP accounting following~\cite{noble2022differentially}, the cumulative privacy over $K$ local steps and $T$ rounds is:
\begin{theorem} [Privacy guarantee] For DP-FedAdamW, $\boldsymbol\theta^T$ is  $(\varepsilon, \delta)$-DP towards a third party
\vspace{-2mm}
\begin{equation} \label{eq:13}
\varepsilon=\mathcal{O}\left(\frac{s \sqrt{T K \log (2 / \delta) \log (2 T / \delta)}}{\sigma}\right),
\vspace{-1mm}
\end{equation}
Towards the server, accumulative $(\varepsilon_s, \delta_s)$-DP is
\vspace{-2mm}
\begin{equation} \label{eq:14}
\varepsilon_s=\varepsilon \sqrt{\frac{N}{l}}, \delta_s=\frac{\delta}{2}\left(\frac{1}{l}+1\right),
\end{equation}
where $N$ is the number of clients, $l$ is clients sample rate, and $s$ is client data sample rate.
\end{theorem}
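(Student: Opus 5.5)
The plan follows the standard subsampled-Gaussian RDP accounting of DP-SCAFFOLD/DP-FedAvg \cite{noble2022differentially}. The first step is to reduce everything to the noisy gradient releases. The only data-dependent quantities in DP-FedAdamW are the privatized gradients $\tilde g_i^{t,k}$ of Eq.~(\ref{tilde_g}). The moment updates $\boldsymbol m_i^{t,k},\boldsymbol v_i^{t,k}$, the bias-corrected scaling $\vartheta_i^{t,k}$ (which subtracts the public constant $(\sigma C/sR)^2$), the alignment term $\gamma\boldsymbol\Delta_G^t$, the block means $\boldsymbol{\bar v}_i$, and the server aggregates are all deterministic functions of (i) the sequence of $\tilde g_i^{t,k}$ and (ii) quantities broadcast in earlier rounds. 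So, by post-processing and adaptive composition, it suffices to account for the Gaussian releases. One point must be checked explicitly: the extra transmitted statistic $\texttt{Block\_mean}(\boldsymbol v_i^{t,K})$ is a function of the already-privatized $\tilde g$'s, so it incurs no additional cost. This is why the bound matches DP-FedAvg despite the extra communication.

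The second step is to analyse a single step. With per-sample clipping at $C$, adding or removing one sample changes $\sum_j \bar g_{ij}$ by at most $C$ in $\ell_2$. The release $\sum_j\bar g_{ij}+\mathcal N(0,\sigma^2C^2 I)$, up to the public rescaling by $1/(sR)$, is therefore a Gaussian mechanism with noise multiplier $\sigma$. Since minibatches are drawn with rate $s$, we invoke the RDP bound for the subsampled Gaussian mechanism \cite{mironov2017renyi}: for moderate orders $\zeta$ and small $s$, each step is $(\zeta, \mathcal O(s^2\zeta/\sigma^2))$-RDP.

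The third step is composition towards a third party. A third party sees only $\boldsymbol\theta^T$, or at most the global iterates. Each sample of a given client affects that client's $K$ local steps in each round. Composing over $TK$ steps gives $(\zeta, \mathcal O(TKs^2\zeta/\sigma^2))$-RDP. Converting via $\varepsilon=\varepsilon_R+\log(1/\delta)/(\zeta-1)$ and optimizing at $\zeta\approx \sigma\sqrt{\log(1/\delta)}/(s\sqrt{TK})$ gives $\varepsilon=\mathcal O(s\sqrt{TK\log(1/\delta)}/\sigma)$. The additional $\sqrt{\log(2T/\delta)}$ factor and the $\delta/2$ splitting enter as in \cite{noble2022differentially}: part of $\delta$ is spent on the constraint that the admissible order range, i.e.\ the condition on $\sigma$ relative to $s$ and $\log(T/\delta)$, holds uniformly over the $T$ rounds. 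This yields Eq.~(\ref{eq:13}).

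The fourth step concerns the guarantee towards the server. The server sees each participating client's individual update, so there is no amplification from client subsampling. Following the argument of \cite{noble2022differentially}, a client participates in roughly $lT$ rounds, and relative to the third-party analysis the effective amplification is lost by a factor $N/l$ in the RDP exponent, which becomes $\sqrt{N/l}$ in $\varepsilon$. The failure probability is adjusted to $\delta_s=\frac{\delta}{2}(\frac1l+1)$, giving Eq.~(\ref{eq:14}). I expect this last bookkeeping step to be the main obstacle, namely matching the exact $\sqrt{N/l}$ and $\delta_s$ constants together with the $\log(2T/\delta)$ factor. The approach I would take first is to import the corresponding theorem of \cite{noble2022differentially} verbatim after establishing the post-processing reduction of the first step, rather than rederiving the subsampled-RDP constants.
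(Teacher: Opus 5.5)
Your route is the paper's route. The paper gives no argument beyond invoking the RDP accounting of \cite{noble2022differentially}, and your first step supplies the post-processing reduction that this citation leaves implicit. That reduction is correct:
\begin{itemize}
\item the bias-correction constant $(\sigma C/sR)^2$ is public;
\item $\boldsymbol{\Delta}_G^{t+1}=-(\boldsymbol{\theta}^{t+1}-\boldsymbol{\theta}^{t})/(K\eta)$ is a function of broadcast models;
\item $\texttt{Block\_mean}(\boldsymbol{v}_i^{t,K})$ and $\boldsymbol{\bar v}^{t}$ are functions of already-privatized gradients.
\end{itemize}
Steps 2--3 then give the third-party claim, modulo the regime conditions noted below. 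Two parts of your argument are wrong as written, although the conclusion survives.

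First, your step-4 justification contradicts your own step 3. Your third-party bound contains no client-level amplification: no factor $l$, and no averaging of the $lN$ clients' independent noises. So there is no ``$N/l$ amplification'' to lose. Counting only the roughly $lT$ rounds in which a client participates would actually give about $\sqrt{l}\,\varepsilon$. In \cite{noble2022differentially} the factor $\sqrt{N/l}$ compensates for client-sampling and aggregation amplification built into their third-party bound; the stated $\varepsilon$ here has none.

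The correct argument is trivial. Step 3 already bounds an observer who sees every individual $\tilde g_i^{t,k}$ over all $T$ rounds. The server's view (each client's model difference and block means) is post-processing of these, so the algorithm is $(\varepsilon,\delta)$-DP towards the server. This implies $(\varepsilon\sqrt{N/l},\delta_s)$, since $\sqrt{N/l}\ge 1$ and $\delta_s=\frac{\delta}{2}(\frac{1}{l}+1)\ge\delta$ for $l\le 1$. The ``main obstacle'' you anticipate therefore does not exist. Importing Noble et al.'s theorem verbatim is, if anything, less safe. Their aggregation amplification is not covered by your reduction once local updates are nonlinear in the noisy gradients (AdamW scaling) and $\boldsymbol{\bar v}^{t+1}$ is also broadcast.

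Second, there is no $\sqrt{\log(2T/\delta)}$ factor to account for. Your computation gives $\varepsilon=\mathcal{O}(s\sqrt{TK\log(1/\delta)}/\sigma)$, which is at most a constant times the stated bound. A smaller $\varepsilon$ implies the stated guarantee. Your proposed explanation, that part of $\delta$ is spent on the admissible order range, is not how RDP accounting works. Admissibility of the optimizing order is a deterministic regime condition, not an event with a failure probability. Concretely, $\zeta^\ast\approx 2\log(1/\delta)/\varepsilon$ must exceed $1$. It must also lie where the $\mathcal{O}(s^2\zeta/\sigma^2)$ subsampled bound holds, which in moments-accountant form means $\varepsilon$ roughly at most of order $s^2TK$, with $s$ small and $\sigma$ of order at least $1$. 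State these as hypotheses; the paper's $\mathcal{O}(\cdot)$ leaves them implicit too. The $\log(2/\delta)\log(2T/\delta)$ product typically comes from converting each round to $(\varepsilon_0,\delta/(2T))$-DP and then applying strong composition with the remaining $\delta/2$.

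Minor points:
\begin{itemize}
\item As printed, the noise term in $\tilde g_i^{t,k}$ is $\frac{C}{sR}\mathcal{N}(0,\sigma^2C^2I)$, whose standard deviation is $\sigma C^2/(sR)$. Your reading (standard deviation $\sigma C/(sR)$, matching the bias-correction constant) should be made explicit. Taken literally, the noise multiplier would be $\sigma C$.
\item With fixed-size minibatches of size $\lfloor sR\rfloor$, the natural adjacency for the subsampling bound is substitution, with sensitivity $2C$. This affects only constants.
\item The subsampled-Gaussian RDP bound comes from the moments accountant \cite{abadi2016deep} and its refinements, not from \cite{mironov2017renyi}.
\end{itemize}
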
  

\begin{table*}[tb]
	\centering
    \small
	\setlength{\tabcolsep}{1.3pt}
    \caption{Averaged test accuracy (\%) and privacy budget comparison on CIFAR-10 and CIFAR-100 using ResNet-18 and ViT-Base. Assuming DP-FedAvg is $(\varepsilon,\delta)$-DP, all other methods share this budget except DP-FedSAM, which is approximately $(2\varepsilon,\delta)$-DP.}
	\vspace{-2mm}
	\renewcommand{\arraystretch}{0.8}
	\begin{tabular}{lccccccccc}
		\midrule[0.8pt]
		\multirow{2}{*}{\textbf{Method}} 
        & \multicolumn{2}{c}{\textbf{CIFAR-10 (ResNet-18)}} 
        & \multicolumn{2}{c}{\textbf{CIFAR-100 (ResNet-18)}} 
        & \multicolumn{2}{c}{\textbf{CIFAR-10 (ViT-Base)}} 
        & \multicolumn{2}{c}{\textbf{CIFAR-100 (ViT-Base)}} 
        & \multirow{2}{*}{\shortstack{\textbf{Privacy} \\ \textbf{Budget}}}\\
		\cmidrule(lr){2-3} \cmidrule(lr){4-5} \cmidrule(lr){6-7} \cmidrule(lr){8-9}
		   & $\alpha{=}0.6$ & $\alpha{=}0.1$   
           & $\alpha{=}0.6$ & $\alpha{=}0.1$
           & $\alpha{=}0.6$ & $\alpha{=}0.1$   
           & $\alpha{=}0.6$ & $\alpha{=}0.1$ \\
		\midrule
		DP-FedAvg      
            & $71.69_{\pm0.37}$ & $60.38_{\pm0.21}$ 
            & $31.13_{\pm0.57}$ & $25.04_{\pm0.72}$  
            & $90.51_{\pm0.37}$ & $78.85_{\pm0.21}$ 
            & $45.86_{\pm0.57}$ & $13.41_{\pm0.72}$   & $(\varepsilon,\delta)$\\
        DP-SCAFFOLD    
            & $71.79_{\pm0.44}$ & $59.04_{\pm0.39}$ 
            & $31.52_{\pm0.36}$ & $24.94_{\pm0.58}$  
            & $90.88_{\pm0.52}$ & $80.69_{\pm0.64}$ 
            & $62.45_{\pm0.57}$ & $64.69_{\pm0.76}$   & $(\varepsilon,\delta)$\\
        DP-FedAvg-LS   
            & $72.27_{\pm0.57}$ & $60.84_{\pm0.45}$ 
            & $35.91_{\pm0.54}$ & $28.68_{\pm0.52}$  
            & $91.48_{\pm0.47}$ & $88.48_{\pm0.50}$ 
            & $67.76_{\pm0.58}$ & $65.85_{\pm0.63}$    & $(\varepsilon,\delta)$\\
        DP-FedSAM      
            & $67.39_{\pm0.41}$ & $57.04_{\pm0.46}$ 
            & $30.36_{\pm0.50}$ & $24.03_{\pm0.66}$  
            & $91.76_{\pm0.55}$ & $89.35_{\pm0.65}$ 
            & $70.24_{\pm0.63}$ & $66.27_{\pm0.65}$   & $(2\varepsilon,\delta)$\\
		DP-LocalAdamW  
            & $71.73_{\pm0.47}$ & $60.55_{\pm0.53}$ 
            & $34.92_{\pm0.69}$ & $28.70_{\pm0.81}$  
            & $91.16_{\pm0.47}$ & $89.85_{\pm0.53}$ 
            & $68.21_{\pm0.69}$ & $65.24_{\pm0.81}$   & $(\varepsilon,\delta)$\\
        \rowcolor{gray!10}
		DP-FedAdamW    
            & $\color{red}{74.81}_{\pm0.19}$ & $\color{red}{64.65}_{\pm0.24}$ 
            & $\color{red}{39.56}_{\pm0.42}$ & $\color{red}{33.55}_{\pm0.50}$ 
            & $\color{red}{92.02}_{\pm0.49}$ & $\color{red}{90.30}_{\pm0.24}$ 
            & $\color{red}{71.59}_{\pm0.42}$ & $\color{red}{67.56}_{\pm0.50}$ & $(\varepsilon,\delta)$\\
		\midrule[0.8pt]
	\end{tabular}
  \label{tab:resnet18_vitbase}
  \vspace{-2mm}
\end{table*}

\begin{table*}[tb]
	\centering
    \small
	\setlength{\tabcolsep}{1.3pt}
    \caption{Averaged test accuracy (\%) and privacy budget comparison on CIFAR-100 and Tiny-ImageNet using Swin-Tiny and Swin-Base.}
	\vspace{-2mm}
	\renewcommand{\arraystretch}{0.9}
	\begin{tabular}{lccccccccc}
		\midrule[0.8pt]
		\multirow{2}{*}{\textbf{Method}} 
        & \multicolumn{2}{c}{\textbf{CIFAR-100 (Swin-T)}} 
        & \multicolumn{2}{c}{\textbf{CIFAR-100 (Swin-B)}} 
        & \multicolumn{2}{c}{\textbf{Tiny-ImageNet (Swin-T)}} 
        & \multicolumn{2}{c}{\textbf{Tiny-ImageNet (Swin-B)}} 
        & \multirow{2}{*}{\shortstack{\textbf{Privacy} \\ \textbf{Budget}}}\\
		\cmidrule(lr){2-3} \cmidrule(lr){4-5} \cmidrule(lr){6-7} \cmidrule(lr){8-9}
		   & $\alpha{=}0.6$ & $\alpha{=}0.1$   
           & $\alpha{=}0.6$ & $\alpha{=}0.1$
           & $\alpha{=}0.6$ & $\alpha{=}0.1$   
           & $\alpha{=}0.6$ & $\alpha{=}0.1$ \\
		\midrule
		DP-FedAvg     
            & $36.96_{\pm0.50}$ & $28.07_{\pm0.31}$ 
            & $65.61_{\pm0.27}$ & $60.77_{\pm0.32}$
            & $18.82_{\pm0.94}$ & $10.12_{\pm1.36}$ 
            & $52.04_{\pm0.66}$ & $36.19_{\pm0.74}$ & $(\varepsilon,\delta)$\\
        DP-SCAFFOLD      
            & $41.68_{\pm0.64}$ & $33.52_{\pm0.65}$ 
            & $68.03_{\pm0.55}$ & $65.34_{\pm0.49}$
            & $21.91_{\pm0.78}$ & $14.54_{\pm0.72}$ 
            & $53.78_{\pm0.79}$ & $43.55_{\pm0.60}$ & $(\varepsilon,\delta)$\\
        DP-FedAvg-LS    
            & $41.95_{\pm0.53}$ & $36.72_{\pm0.65}$ 
            & $68.80_{\pm0.52}$ & $66.08_{\pm0.56}$
            & $23.03_{\pm0.64}$ & $15.99_{\pm0.68}$ 
            & $55.83_{\pm0.66}$ & $47.46_{\pm0.57}$ & $(\varepsilon,\delta)$\\
        DP-FedSAM     
            & $44.19_{\pm0.48}$ & $38.43_{\pm0.62}$ 
            & $70.44_{\pm0.57}$ & $66.28_{\pm0.64}$
            & $22.75_{\pm0.65}$ & $16.86_{\pm0.77}$ 
            & $56.12_{\pm0.74}$ & $48.08_{\pm0.62}$ & $(2\varepsilon,\delta)$\\
		DP-LocalAdamW  
            & $42.04_{\pm0.63}$ & $35.51_{\pm0.58}$ 
            & $68.33_{\pm0.33}$ & $65.28_{\pm0.39}$
            & $23.41_{\pm0.87}$ & $15.81_{\pm0.73}$ 
            & $54.58_{\pm0.60}$ & $46.52_{\pm0.64}$ & $(\varepsilon,\delta)$\\
        \rowcolor{gray!10}
		DP-FedAdamW  
            & $\color{red}{46.77}_{\pm0.38}$ & $\color{red}{39.36}_{\pm0.55}$ 
            & $\color{red}{71.58}_{\pm0.30}$ & $\color{red}{67.45}_{\pm0.26}$
            & $\color{red}{23.96}_{\pm0.50}$ & $\color{red}{19.19}_{\pm0.55}$ 
            & $\color{red}{58.44}_{\pm0.54}$ & $\color{red}{50.76}_{\pm0.51}$ & $(\varepsilon,\delta)$\\
		\midrule[0.8pt]
	\end{tabular}
  \label{tab:swin_cifar100_tiny}
  \vspace{-2mm}
\end{table*}

\vspace{-2mm}
\begin{table}[tb]
	\centering
    \small
	\setlength{\tabcolsep}{1.3pt}
        \caption{Averaged test accuracy (\%) on RoBERTa-Base.}
	\vspace{-2mm}
	\renewcommand{\arraystretch}{0.8}
	\begin{tabular}{lcccc}
		\midrule[0.8pt]
		\multirow{1}{*}{\textbf{Method}} & \textbf{SST-2} & \textbf{QQP}  &\textbf{QNLI}& \textbf{MNLI} \\
		\midrule
		DP-FedAvg    & $49.08_{\pm0.12}$ & $63.18_{\pm0.50}$ & $49.46_{\pm0.44}$ & $35.45_{\pm0.20}$  \\
        DP-SCAFFOLD    & $52.35_{\pm0.18}$ & $66.87_{\pm0.42}$ & $52.62_{\pm0.52}$ & $38.74_{\pm0.32}$  \\
        DP-FedAvg-LS    & $68.38_{\pm0.50}$ & $72.59_{\pm0.48}$ & $69.22_{\pm0.68}$ & $50.46_{\pm0.55}$  \\
        DP-FedSAM    & $70.95_{\pm0.62}$ & $75.66_{\pm0.53}$ & $70.54_{\pm0.63}$ & $56.88_{\pm0.56}$  \\
		DP-LocalAdamW  & $ 91.04_{\pm0.53}$ & $82.33_{\pm0.59}$ & $86.07_{\pm0.47}$ & $75.20_{\pm0.44}$  \\
        \rowcolor{gray!10}
		DP-FedAdamW & $\color{red}{91.17}_{\pm0.32}$ & $\color{red}{83.34}_{\pm0.39}$ & $\color{red}{  86.33}_{\pm0.34}$ & $\color{red}{ 78.68}_{\pm0.38}$ \\
		\midrule[0.8pt]
	\end{tabular}
  \label{tab:roberta_base}
  \vspace{-2mm}
\end{table}

\begin{table*}[tb]
	\centering
    \small
	\setlength{\tabcolsep}{1.3pt}
        \caption{Effect of privacy budgets ($\varepsilon$) on test accuracy (\%) using Swin-Base, $\alpha=0.1$.}
	\vspace{-2mm}
	\renewcommand{\arraystretch}{0.95}
	\begin{tabular}{lccccccccc}
		\midrule[0.8pt]
		\multirow{2}{*}{\textbf{Method}} & \multicolumn{3}{c}{\textbf{CIFAR-10}} & \multicolumn{3}{c}{\textbf{CIFAR-100}} & \multicolumn{3}{c}{\textbf{Tiny-ImageNet}}\\
		\cmidrule(lr){2-4} \cmidrule(lr){5-7} \cmidrule(lr){8-10}
		   & $\varepsilon{=}1$ & $\varepsilon{=}2$   &$\varepsilon{=}3$ & $\varepsilon{=}1$ & $\varepsilon{=}2$ & $\varepsilon{=}3$   &$\varepsilon{=}1$ & $\varepsilon{=}2$ & $\varepsilon{=}3$\\
		\midrule
		DP-FedAvg    & $66.52_{\pm0.89}$ & $72.63_{\pm0.55}$ & $88.02_{\pm0.76}$ & $19.66_{\pm1.52}$  & $42.98_{\pm0.60}$ & $57.73_{\pm0.55}$ & $20.65_{\pm0.77}$ & $37.48_{\pm0.54}$ & $45.32_{\pm0.47}$\\
        DP-SCAFFOLD    & $66.39_{\pm0.68}$ & $72.81_{\pm0.53}$ & $88.07_{\pm0.66}$ & $22.38_{\pm0.67}$  & $44.52_{\pm0.68}$ & $59.85_{\pm0.52}$ & $20.93_{\pm0.66}$ & $38.40_{\pm0.51}$ & $47.06_{\pm0.30}$\\
        DP-FedAvg-LS    & $71.57_{\pm1.02}$ & $75.02_{\pm0.59}$ & $88.96_{\pm0.54}$ & $25.77_{\pm0.61}$  & $49.54_{\pm0.59}$ & $62.46_{\pm0.53}$ & $26.89_{\pm0.46}$ & $46.57_{\pm0.44}$ & $48.83_{\pm0.21}$\\
        DP-FedSAM    & $53.51_{\pm1.39}$ & $59.33_{\pm0.86}$ & $76.82_{\pm0.97}$ & $12.33_{\pm0.64}$  & $28.32_{\pm0.49}$ & $46.93_{\pm0.50}$ & $10.88_{\pm0.95}$ & $26.76_{\pm0.78}$ & $41.69_{\pm0.45}$\\
		DP-LocalAdamW & $66.57_{\pm1.35}$ & $73.04_{\pm0.58}$ & $88.83_{\pm0.71}$ & $24.69_{\pm0.88}$   & $47.26_{\pm0.59}$ & $62.18_{\pm0.61}$ & $28.40_{\pm0.72}$ & $46.62_{\pm0.37}$ & $48.55_{\pm0.28}$\\
		\rowcolor{gray!10} 
        DP-FedAdamW & $\color{red}{77.50}_{\pm0.54}$ & $\color{red}{80.37}_{\pm0.58}$ & $\color{red}{89.98}_{\pm0.55}$ & $\color{red}{30.06}_{\pm0.41}$ &$\color{red}{52.11}_{\pm0.54}$ & $\color{red}{65.47}_{\pm0.42}$ & $\color{red}{34.23}_{\pm0.50}$ & $\color{red}{50.95}_{\pm0.26}$ & $\color{red}{52.53}_{\pm0.22}$\\
		\midrule[0.8pt]
	\end{tabular}
  \label{tab:swin_base_epsilon}
  \vspace{-2mm}
\end{table*}
\begin{figure*}[tb]
    \centering
    \subcaptionbox{ResNet-18, $\alpha= 0.6$ \label{fig:sub1}}{\includegraphics[width=0.24\textwidth]{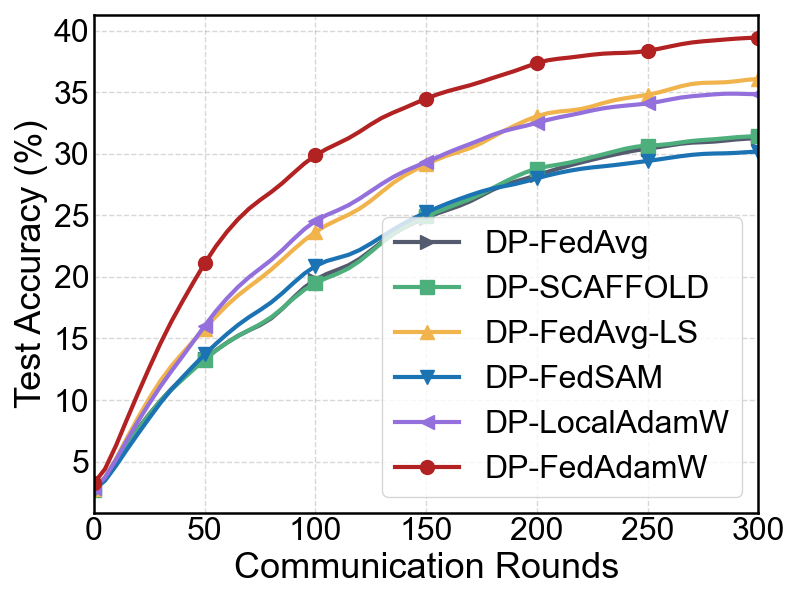}}
    \subcaptionbox{ResNet-18, $\alpha= 0.1$ \label{fig:sub2}}{\includegraphics[width=0.24\textwidth]{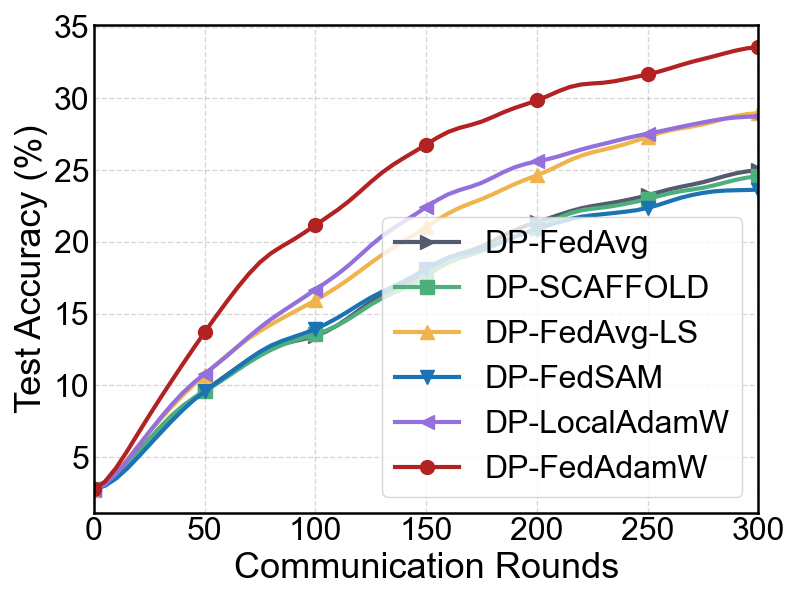}}
    \subcaptionbox{Swin-Tiny, $\alpha= 0.6$ \label{fig:sub3}}{\includegraphics[width=0.24\textwidth]{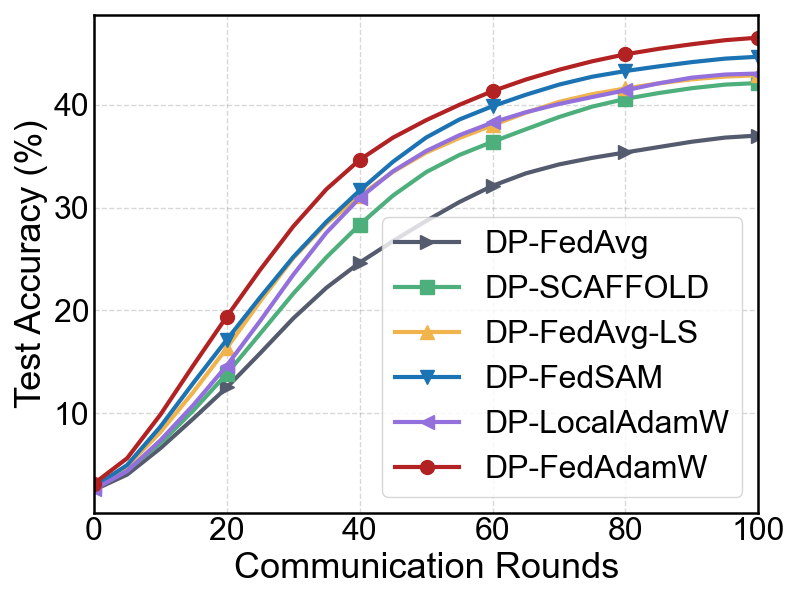}}
    \subcaptionbox{Swin-Tiny, $\alpha= 0.1$ \label{fig:sub4}}{\includegraphics[width=0.24\textwidth]{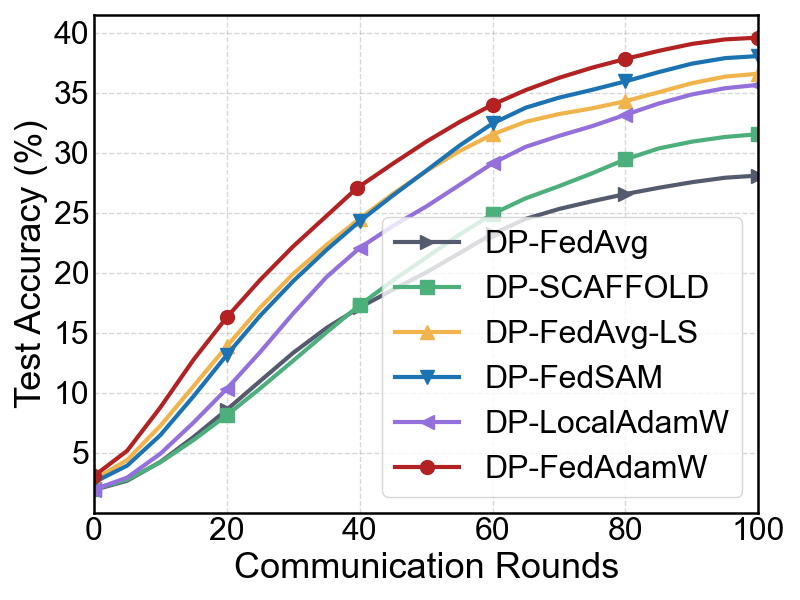}}
    \vspace{-2mm}
    \caption{Test accuracy (\%) on CIFAR-100 using ResNet-18 and Swin-Tiny under the Dirichlet $\alpha=0.6$ and $\alpha=0.1$ settings.}
    \label{fig:cifar100}
\end{figure*}

\begin{table}[tb]
\small
  \centering
  \caption{Ablation on the main components of DP-FedAdamW on Swin-Base.
  We report test accuracy (\%) on CIFAR-100 and Tiny-ImageNet under
  non-IID setting ($\alpha=0.1$). ``Agg'' denotes block-wise second-moment
  aggregation, ``BC'' the DP bias correction, and ``Align'' the
  local--global alignment term.}
  \vspace{-2mm}
  \label{tab:ablation_components}
  \renewcommand{\arraystretch}{0.9}
  \begin{tabular}{lcc}
    \toprule
    \textbf{Method} & \textbf{CIFAR-100} & \textbf{Tiny-ImageNet} \\
    \midrule
    DP-LocalAdamW
      & $65.28$ & $46.52$ \\
    w/o Agg
      & $66.41$ & $47.35$ \\
    w/o BC
      & $67.02$ & $48.11$ \\
    w/o Align
      & $66.37$ & $47.82$ \\
      \rowcolor{gray!10}
    DP-FedAdamW (full)
      & $\color{red}{67.42}$& $\color{red}{50.73}$ \\
    \bottomrule
  \end{tabular}
\end{table}
\vspace{-1mm}
\begin{table}[tb]
\small
  \centering
  \caption{\small Ablation study of moment aggregation strategies of DP-LocalAdamW on CIFAR-100 with Swin-Base under $\alpha{=}0.1$.}
  \vspace{-2mm}
  \label{tab:avg}
  \setlength{\tabcolsep}{2pt}
  \renewcommand{\arraystretch}{0.9}
  \begin{tabular}{lccc}
    \toprule
    \textbf{Strategy} & \textbf{CIFAR-100} & \textbf{Tiny-ImageNet} & \textbf{Comm(↑)} \\
    \midrule
    NoAgg      & $66.21_{\pm0.11}$ & $49.12_{\pm0.12}$ & $5.7$M \\

    Agg-$\boldsymbol{v}$     & $67.52_{\pm0.12}$ & $50.88_{\pm0.12}$ & $11.4$M \\    
    \rowcolor{gray!10}
    Agg-\texttt{mean}-$\boldsymbol{v}$    & $\color{red}{67.45}_{\pm0.10}$ & $\color{red}{50.76}_{\pm0.10}$ & 5.7M \\     
    \bottomrule
  \end{tabular}
\end{table}

\vspace{-1mm}
\begin{table}[h]
\small
  \centering
  \caption{\small Impact of $\gamma$ using Swin-Base ($\alpha{=}0.1$).}
  \vspace{-2mm}
  \label{tab:alpha_ablation}
  \setlength{\tabcolsep}{3pt}
  \renewcommand{\arraystretch}{0.9}
  \begin{tabular}{l|ccccc}
    \toprule
    $\gamma$ & $0.00$ & $0.25$ & \textbf{$\color{red}0.50$} & $0.75$ & $1.00$ \\
    \midrule
    CIFAR-100 (\%)      & $66.37$ & $67.01$ & \textbf{$\color{red}67.45$} & $67.10$ & $66.52$ \\
    Tiny-ImageNet (\%)  & $47.82$ & $49.36$ & \textbf{$\color{red}50.76$} & $50.21$ & $49.18$\\
    \bottomrule
  \end{tabular}
  \vspace{-2mm}
\end{table}

\begin{table}[h]
\small
  \centering
  \caption{Comparison with DP variants of federated adaptive optimizers under the same DP settings (Swin-Base, $\alpha{=}0.1$, $\sigma{=}1$).}
  \vspace{-2mm}
  \label{tab:ablation_components}
  \renewcommand{\arraystretch}{0.9}
  \begin{tabular}{lcc}
    \toprule
    \textbf{Method} & \textbf{CIFAR-100} & \textbf{Tiny-ImageNet} \\
    \midrule
    DP-FedOpt(Adam)~\cite{reddiadaptive}
      & ${64.54}_{\pm0.56}$ & ${44.81}_{\pm0.62}$ \\
    DP-FAFED~\cite{wu2023faster}
      & $65.75_{\pm0.54}$ & $46.10_{\pm0.67}$ \\
    DP-FedLADA~\cite{sun2023efficient}
      & $66.03_{\pm0.61}$ & $47.85_{\pm0.57}$ \\
      \rowcolor{gray!10}
    DP-FedAdamW (ours)
      &  $\color{red}{67.45}_{\pm0.26}$ & $\color{red}{50.76}_{\pm0.51}$ \\
    \bottomrule
  \end{tabular}
  \vspace{-2mm}
\end{table}

\section{Experiments}

\subsection{Experimental setup}
\noindent
\textbf{DPFL Algorithms.} We conduct experiments on six DPFL
algorithms: DP-FedAvg~\cite{noble2022differentially}, DP-SCAFFOLD~\cite{noble2022differentially},
DP-FedAvg-LS~\cite{liang2024differentially}, DP-FedSAM~\cite{shi2023make}, DP-LocalAdamW, and our DP-FedAdamW. In all performance evaluation experiments, the DP noise multiplier $\sigma$ is fixed. Under the same clipping norm and communication rounds, DP-FedSAM requires roughly twice the privacy budget of other methods, as it injects independent DP noise into two gradient evaluations at each local step. All methods are implemented using PyTorch and executed on NVIDIA RTX 4090 GPUs.

\noindent
\textbf{Datasets and data partition.} We evaluate DP-FedAdamW on seven datasets covering both vision and language tasks. For image classification,  we adopt CIFAR-10, CIFAR-100~\cite{krizhevsky2009learning}, and Tiny-ImageNet~\cite{tinyimagenet}. For natural language understanding, we consider four GLUE~\cite{wang2018glue} benchmarks: SST-2, QQP, QNLI, and MNLI. We simulate non-IID client data using a Dirichlet partition~\cite{hsu2019measuring}, where each client’s distribution is sampled from Dir($\alpha$) with $\alpha \in \{0.1, 0.6, 0.8\}$.

\noindent
\textbf{Model architectures.} We study three representative architecture families. (1) convolutional neural networks(CNNs): GNResNet-18 evaluated on CIFAR-10/100. (2) vision transformers: ViT-Base for CIFAR-10/100 and Swin-Tiny/Base for CIFAR-10/100 and Tiny-ImageNet. (3) language transformer: RoBERTa-Base assessed on the GLUE.

\noindent
\textbf{Configurations.}
For DP-FedAvg, DP-SCAFFOLD, DP-FedAvg-LS, and DP-FedSAM, the learning rate $\eta$ is selected from $\{10^{-2},\,3\times 10^{-2},\,5\times 10^{-2},\, 10^{-1},\,3\times 10^{-1}\}$ with a weight decay of $0.001$. For DP-LocalAdamW, DP-FedAdamW, $\eta$ is selected from $\{10^{-4},\,3\times 10^{-4},\,5\times 10^{-4},\,8\times 10^{-4},\, 10^{-3}\}$ with weight decay $0.01$ or $0.001$, $\beta_1 = 0.9$, $\beta_2 = 0.999$. We apply cosine learning
rate decay, and set DP-FedAdamW to $\gamma{=}0.5$, weight decay $\lambda{=}0.01$. For ResNet-18, ronds $T{=}300$, batch size $50$, local step $K{=}50$. For Transformers, ronds $T{=}100$, batch size $16$, local step $K{=}20$.
The noise multiplier is $\sigma{=}1$. The privacy failure probability is set to $\delta=10^{-5}$ for CIFAR10/100 and $\delta=10^{-6}$ for Tiny-ImageNet. Please refer to Appendix~\ref{app: experimental setup}
for more details of the experimental settings. All results are averaged
over 5 runs with std reported.

\subsection{Performance on vision datasets}

\noindent
\textbf{ResNet-18 on CIFAR-10/100.} 
Table~\ref{tab:resnet18_vitbase} summarizes the results on CIFAR-10/100 under different heterogeneity levels ($\alpha{=}0.6$ and $\alpha{=}0.1$). DP-FedAdamW achieves the best performance across all settings, reaching $74.81\%$ and $64.65\%$ on CIFAR-10, and $39.56\%$ and $33.55\%$ on CIFAR-100. When the data heterogeneity is severe ($\alpha{=}0.1$), DP-FedAdamW surpasses the SOTA baselines (DP-FedAvg-LS for CIFAR-10 and DP-LocalAdamW for CIFAR-100) by $\mathbf{3.81\%}$ and $\mathbf{4.85\%}$, respectively.

\noindent
\textbf{ViT-Base on CIFAR-10/100.}
We fine-tune ViT-Base on CIFAR-10/100, and the results are reported in Table~\ref{tab:resnet18_vitbase} under $\alpha{=}0.6$ and $\alpha{=}0.1$. DP-FedAdamW consistently achieves the best performance. On CIFAR-100 with $\alpha{=}0.1$, it outperforms the strongest baseline DP-FedSAM by $\mathbf{1.29\%}$. Recall that, under our DP configuration, DP-FedSAM is trained with a much weaker privacy guarantee (it uses twice the privacy budget of the other methods).

\noindent
\textbf{Swin-Tiny/Base on CIFAR-10/100/Tiny-ImageNet.}
We fine-tune Swin-Tiny and Swin-Base on CIFAR-10, CIFAR-100, and Tiny-ImageNet, and report the results in Tables~\ref{tab:swin_cifar100_tiny},~\ref{tab:swin_cifar10}. DP-FedAdamW consistently attains the highest accuracy across all settings. For example, on CIFAR-100 with Swin-Tiny at $\alpha{=}0.6$, it reaches $46.77\%$, outperforming the strongest baseline DP-FedSAM by $\mathbf{2.58\%}$. on Tiny-ImageNet with Swin-Base at $\alpha{=}0.1$, it achieves $50.85\%$, exceeding DP-FedSAM by $\mathbf{2.77\%}$.~\Cref{fig:cifar100} (c,d) shows that DP-FedAdamW achieves advantage on Swin-Tiny.

\vspace{-2mm}
\subsection{Performance on language datasets}
\noindent\textbf{RoBERTa-Base on GLUE.}
We further evaluate our method on four GLUE tasks (SST-2, QQP, QNLI, MNLI) by fine-tuning RoBERTa-Base under $\alpha{=}0.8$, and report the results in Table~\ref{tab:roberta_base}. DP-FedAdamW attains the best accuracy on all tasks, reaching $91.17\%$, $83.34\%$, $86.33\%$, and $78.68\%$, respectively. on the MNLI, DP-FedAdamW outperforms the baseline DP-LocalAdamW by $\mathbf{3.48\%}$, showing that the benefits of DP-FedAdamW extend from vision benchmarks to language understanding in the DPFL setting.

\subsection{Discussion for DP with AdamW in FL}

\noindent
\textbf{Performance under different privacy budgets $\varepsilon$.}
Table~\ref{tab:swin_base_epsilon} reports the results of Swin-Base on CIFAR-10, CIFAR-100, and Tiny-ImageNet under $\varepsilon{=}1,2,3$ with $\alpha{=}0.1$. As expected, all methods benefit from larger $\varepsilon$ (weaker privacy). Notably, the performance gap between DP-FedAdamW and other methods is most pronounced when $\varepsilon$ is small, i.e., under stronger privacy constraints. For instance, on CIFAR-10, DP-FedAdamW achieves $77.50\%$ at $\varepsilon{=}1$, surpassing the best baseline by $\mathbf{5.93\%}$. These results show that DP-FedAdamW achieves a superior utility–privacy trade-off.

\noindent
\textbf{Noise multiplier $\sigma$.} Results are provided in Appendix~\ref{app:Noise Multiplier Results}.

\subsection{Ablation study}



\noindent

\noindent\textbf{Effect of each component.}
DP-FedAdamW differs from DP-LocalAdamW in three aspects:
(i) block-wise second-moment aggregation (Agg);
(ii) DP bias correction on the second-moment (BC);
(iii) local--global alignment term (Align).
We construct several variants by removing one component at a time, while keeping all other settings fixed.

Table~\ref{tab:ablation_components} shows that each component yields a gain, and their combination achieves the best performance. Bias
correction (BC) improves robustness to DP noise, the local--global
alignment stabilizes optimization under non-IID and DP, and
block-wise aggregation provides an additional improvement with
negligible communication overhead.

\noindent\textbf{Impact of aggregation strategy.} Table \ref{tab:avg} shows that our block-wise averaging strategy, Agg-\texttt{mean}-$\boldsymbol{v}$, achieves the best balance between accuracy and communication cost. Agg-\texttt{mean}-$\boldsymbol{v}$ achieves comparable gains with only $\mathcal{O}(B)$ communication, where $B$ is the number of blocks, demonstrating its scalability and ability to stabilize updates even under strict communication budgets.

\noindent\textbf{Effect of the alignment coefficient $\gamma$.} 
Table~\ref{tab:alpha_ablation} varies $\gamma$ on Swin-Base ($\alpha{=}0.1$). Increasing $\gamma$ from $0.0$ to $0.5$ improves accuracy on CIFAR-100 and Tiny-ImageNet, 
indicating that a moderate alignment term effectively counteracts client drift and stabilizes optimization under DP noise. Further increasing $\gamma$ beyond $0.5$ leads to a slight degradation, as overly strong alignment over-constrains local updates. We therefore set $\gamma{=}0.5$ as the default.

\noindent\textbf{Comparison with DP variants of federated adaptive optimizers.} As shown in Table~\ref{tab:ablation_components}, on Swin-Base with Dirichlet $\alpha{=}0.1$, DP noise $\sigma{=}1$, simply adding DP to existing federated adaptive optimizers yields limited gains. DP-FAFED and DP-FedLADA slightly outperform DP-FedOpt(Adam) on both CIFAR-100 and Tiny-ImageNet. Our DP-FedAdamW consistently achieves the best performance, with roughly $1.42\%$ on CIFAR-100 and $2.91\%$ on Tiny-ImageNet compared to the SOTA DP baseline.

\vspace{-1mm}
\section{Conclusion}

In this work, we revisited AdamW for differentially private federated learning and revealed that a naïve adoption leads to amplified second-moment variance, systematic DP-induced bias, and pronounced client drift under small-batch local training. Building on these insights, we introduced DP-FedAdamW, which integrates block-wise second-moment aggregation, explicit DP bias correction, and a local–global alignment mechanism, with non-convex convergence and sample-level privacy guarantees. Experiments on vision (CIFAR-10/100, Tiny-ImageNet with ResNet/ViT/Swin) and language (GLUE with RoBERTa) benchmarks demonstrate DP-FedAdamW consistently surpasses SOTA DPFL baselines under the same privacy budgets, especially in highly non-IID regimes.
\newpage
{
    \small
    \bibliographystyle{ieeenat_fullname}
    \bibliography{main}
}

\onecolumn
\setcounter{page}{1}
\thispagestyle{empty}
\begin{center}
  {\Large\bfseries DP-FedAdamW: An Efficient Optimizer for Differentially Private Federated \\[2mm] Large Models}\\[4mm]
  {\Large Supplementary Material}
\end{center}

\section{More Implementation Detail}
\label{app: experiments}

\subsection{More Results}

\noindent
\textbf{Swin-Tiny/Base on CIFAR-10.} Table~\ref{tab:swin_cifar10} reports the averaged test accuracy on CIFAR-10 for six different DPFL methods evaluated on Swin-Tiny and Swin-Base, under two data heterogeneity levels (Dirichlet $\alpha{=}0.6$ and $\alpha{=}0.1$). Overall, DP-FedAdamW consistently achieves the best performance, while Swin-Base is markedly more accurate than Swin-Tiny and all methods degrade when the data becomes more heterogeneous (smaller $\alpha$). For Swin-Base with $\alpha{=}0.1$, DP-FedAdamW attains \textbf{$90.68\%$} test accuracy, outperforming DP-FedSAM ($89.82\%$) and DP-FedAvg ($88.23\%$), even though DP-FedSAM needs roughly twice the privacy budget.

\begin{table}[h]
	\vspace{-1mm}
	\centering
    \small
	\setlength{\tabcolsep}{1.3pt}
        \caption{Averaged test accuracy (\%) comparison on CIFAR-10. Assuming DP-FedAvg is $(\varepsilon,\delta)$-DP, all other methods share this budget except DP-FedSAM, which is approximately $(2\varepsilon,\delta)$-DP.}
	\renewcommand{\arraystretch}{0.8}
	\begin{tabular}{lccccc}
		\midrule[0.8pt]
		\multirow{2}{*}{\textbf{Method}} & \multicolumn{2}{c}{\textbf{Swin-Tiny}} & \multicolumn{2}{c}{\textbf{Swin-Base}} & \multirow{2}{*}{\shortstack{\textbf{Privacy} \\ \textbf{Budget}}}\\
		\cmidrule(lr){2-3} \cmidrule(lr){4-5}
		   & $\alpha{=}0.6$ & $\alpha{=}0.1$   &$\alpha{=}0.6$ & $\alpha{=}0.1$ \\
		\midrule
		DP-FedAvg    & $82.93_{\pm0.53}$ & $78.24_{\pm0.59}$ & $89.56_{\pm0.47}$ & $88.23_{\pm0.44}$  & $(\varepsilon,\delta)$\\
        DP-SCAFFOLD      & $82.85_{\pm0.50}$ & $78.53_{\pm0.54}$ & $90.07_{\pm0.48}$ & $89.08_{\pm0.62}$ & $(\varepsilon,\delta)$\\
        DP-FedAvg-LS    & $83.94_{\pm0.62}$ & $79.67_{\pm0.68}$ & $90.97_{\pm0.52}$ & $90.14_{\pm0.56}$ & $(\varepsilon,\delta)$\\
        DP-FedSAM     & $84.05_{\pm0.51}$ & $80.36_{\pm0.57}$ & $91.25_{\pm0.54}$ & $89.82_{\pm0.60}$ & $(2\varepsilon,\delta)$\\
		DP-LocalAdamW  & $83.73_{\pm0.58}$ & $79.28_{\pm0.64}$ & $90.84_{\pm0.39}$ & $89.08_{\pm0.41}$  & $(\varepsilon,\delta)$\\
       \rowcolor{gray!10}
		DP-FedAdamW & $\color{red}{84.23}_{\pm0.32}$ & $\color{red}{81.23}_{\pm0.39}$ & $\color{red}{91.79}_{\pm0.34}$ & $\color{red}{90.68}_{\pm0.38}$ & $(\varepsilon,\delta)$\\
		\midrule[0.8pt]
	\end{tabular}
  \label{tab:swin_cifar10}
  \vspace{-2mm}
\end{table}

\noindent
\textbf{Noise multiplier results.} 
\label{app:Noise Multiplier Results}
To further analyze the impact of privacy noise on algorithm performance, we evaluate different noise
multipliers $\sigma \in \{0.5, 1.0, 1.5, 2.0\}$ on CIFAR-100 using the Swin-Base model under Dirichlet $\alpha{=}0.6$ and $\alpha{=}0.1$. As shown in Table~\ref{tab:noise_multiplier}, increasing the noise multiplier consistently degrades
test accuracy for all methods, while DP-FedAdamW maintains the best performance across all $\sigma$. Although $\sigma{=}0.5$ yields
the highest accuracy, it also consumes a much larger privacy budget. In contrast, setting $\sigma{=}1$ significantly strengthens the formal privacy guarantee while only causing a small drop in accuracy, whereas larger noise levels ($\sigma{\ge}1.5$) lead to a much more pronounced loss in utility. Therefore, we adopt $\sigma{=}1$ as the default noise multiplier in our main experiments, as it offers a favorable privacy–utility trade-off.

\begin{table}[h]
	\centering
    \small
	\setlength{\tabcolsep}{1.3pt}
    \caption{Test accuracy (\%) on CIFAR-100 with Swin-Base under different noise multipliers $\sigma$.}
	\vspace{-2mm}
	\renewcommand{\arraystretch}{0.8}
	\begin{tabular}{lcccccccc}
		\midrule[0.8pt]
        \multirow{2}{*}{\textbf{Method}} & \multicolumn{4}{c}{\textbf{$\alpha=0.6$}} & \multicolumn{4}{c}{\textbf{$\alpha=0.1$}} \\
		\cmidrule(lr){2-5} \cmidrule(lr){6-9}
		   & $\sigma{=}0.5$ &$\sigma{=}1$  &$\sigma{=}1.5$& $\sigma{=}2$ & $\sigma{=}0.5$ &$\sigma{=}1$  &$\sigma{=}1.5$& $\sigma{=}2$\\
		\midrule
		DP-FedAvg    & $67.23_{\pm0.33}$ & $65.61_{\pm0.27}$ & $63.78_{\pm0.34}$ & $60.22_{\pm0.64}$ & $62.49_{\pm0.26}$ & $60.77_{\pm0.32}$ & $57.85_{\pm0.48}$ & $54.34_{\pm0.79}$ \\
        DP-SCAFFOLD    & $70.41_{\pm0.25}$ & $68.03_{\pm0.55}$ & $64.64_{\pm0.48}$ & $63.14_{\pm0.58}$ & $66.72_{\pm0.38}$ & $65.34_{\pm0.49}$ & $61.73_{\pm0.44}$ & $58.23_{\pm0.66}$\\
        DP-FedAvg-LS    & $70.93_{\pm0.40}$ & $68.80_{\pm0.52}$ & $66.52_{\pm0.43}$ & $64.77_{\pm0.63}$ & $68.36_{\pm0.39}$ & $66.08_{\pm0.56}$ & $62.87_{\pm0.52}$ & $59.98_{\pm0.58}$\\
        DP-FedSAM    & $72.55_{\pm0.22}$ & $70.44_{\pm0.57}$ & $67.45_{\pm0.33}$ & $65.75_{\pm0.69}$  & $67.94_{\pm0.51}$ & $66.28_{\pm0.64}$  & $62.21_{\pm0.60}$ & $60.84_{\pm0.71}$\\
		DP-LocalAdamW  & $70.74_{\pm0.13}$ & $68.33_{\pm0.33}$ & $66.46_{\pm0.53}$ & $64.10_{\pm0.60}$ & $67.35_{\pm0.31}$ & $65.28_{\pm0.39}$  & $60.55_{\pm0.43}$ & $58.78_{\pm0.74}$\\
        \rowcolor{gray!10}
		DP-FedAdamW & $\color{red}{73.62}_{\pm0.36}$ & $\color{red}{71.58}_{\pm0.30}$ & $\color{red}{68.52}_{\pm0.45}$ & $\color{red}{66.79}_{\pm0.51}$ & $\color{red}{69.22}_{\pm0.37}$ & $\color{red}{67.45}_{\pm0.26}$ & $\color{red}{63.28}_{\pm0.42}$ & $\color{red}{61.05}_{\pm0.61}$\\

		\midrule[0.8pt]
	\end{tabular}
  \label{tab:noise_multiplier}
  \vspace{-2mm}
\end{table}

\subsection{Datasets}

\noindent
\textbf{Vision datasets.}
We evaluate our methods on three widely used image classification benchmarks: CIFAR-10, CIFAR-100, and Tiny-ImageNet in Table~\ref{tab:vision_datasets}. 
CIFAR-10 and CIFAR-100~\cite{krizhevsky2009learning} each contain $60{,}000$ natural images of size $32\times 32$, split into $50{,}000$ training and $10{,}000$ test examples. CIFAR-10 comprises 10 coarse-grained object categories, whereas CIFAR-100 has finer labeling with 100 fine-grained categories organized into 20 superclasses. Tiny-ImageNet~\cite{tinyimagenet} is a downsampled subset of ImageNet with 200 classes, where each class contains 500 training images and 50 validation images of size $64\times 64$.

\begin{table}[h]
\small
\centering
\caption{Summary of the vision datasets used in our experiments.}
\vspace{-2mm}
\label{tab:vision_datasets}
\renewcommand{\arraystretch}{0.9}
\begin{tabular}{lccc}
\toprule
\textbf{Dataset} & \textbf{\#Classes} & \textbf{Image Size} & \textbf{Train / Test} \\
\midrule
CIFAR-10~\cite{krizhevsky2009learning}        & 10   & $32\times32$ & 50,000 / 10,000 \\
CIFAR-100~\cite{krizhevsky2009learning}       & 100  & $32\times32$ & 50,000 / 10,000 \\
Tiny-ImageNet~\cite{tinyimagenet}   & 200  & $64\times64$ & 100,000 / 10,000 \\
\bottomrule
\end{tabular}
\vspace{-2mm}
\end{table}

\noindent
\textbf{Language datasets.}
For natural language understanding, we experiment on four tasks from the GLUE~\cite{wang2018glue} benchmark: SST-2, QQP, QNLI, and MNLI (Table~\ref{tab:glue_datasets}. 
SST-2 is a binary sentiment classification task over movie reviews. 
QQP is a paraphrase identification task that determines whether two Quora questions express the same semantic intent. 
QNLI is a binary classification task that predicts whether a candidate sentence contains the answer to a given question. 
MNLI is a three-way natural language inference benchmark (entailment, contradiction, neutral) covering multiple genres of text. 
We follow the standard GLUE data splits and evaluation protocols.

\begin{table}[h]
\small
\centering
\caption{Summary of GLUE datasets used in our experiments.}
\vspace{-2mm}
\label{tab:glue_datasets}
\renewcommand{\arraystretch}{0.9}
\begin{tabular}{lccc}
\toprule
\textbf{Dataset} & \textbf{Task Type} & \textbf{\#Classes} & \textbf{Train/Test} \\
\midrule
SST-2 & Sentiment Classification (binary)            & 2 & 67,349 / 1,821 \\
QQP   & Duplicate Question Detection (binary)        & 2 & 363,846 / 390,965 \\
QNLI  & Question-Answer NLI (binary)                 & 2 & 104,743 / 5,463 \\
MNLI  & Natural Language Inference (entailment)      & 3 & 392,702 / 9,815 \\
\bottomrule
\end{tabular}
\vspace{-2mm}
\end{table}

\subsection{Models}
We study three representative architecture families, covering five specific models in total, as summarized in Table~\ref{tab:model}.

(1)~\textbf{ResNet-18.}
We adopt a ResNet-18 with Group Normalization (GN), where all Batch Normalization layers are replaced by GN to avoid dependence on batch statistics, which is more suitable for federated and differentially private training. 
To adapt CIFAR-10/100, we follow standard practices and modify the stem by replacing the original $7{\times}7$ convolution with a $3{\times}3$ kernel and removing the initial downsampling layers (i.e., the stride-2 convolution and max-pooling layer). 

(2)~\textbf{ViT-Base.} We use the standard Vision Transformer Base (ViT-Base) architecture, which splits an input image into non-overlapping $16{\times}16$ patches and linearly embeds them into a sequence of tokens with a prepended class token. 
The token sequence is processed by 12 Transformer encoder layers with multi-head self-attention and MLP blocks, and the final class token representation is fed to a linear classifier to produce the output logits.

(3)~\textbf{Swin-Tiny.} Swin-Tiny is a hierarchical Vision Transformer that uses shifted window self-attention. 
The model consists of four stages with depths $[2,2,6,2]$, embedding dimensions $[96,192,384,768]$, and attention heads $[3,6,12,24]$. 
Images are partitioned into $4{\times}4$ patches, and each stage applies window-based attention with window size $7$ and an MLP ratio of $4$, using LayerNorm and relative positional bias. DropPath regularization with depth-scaled rates is used, and the final features are aggregated by global pooling and passed to a linear classifier.

(4)~\textbf{Swin-Base.} Swin-Base is a larger variant of Swin-Tiny, sharing the same hierarchical shifted-window design. 
It is configured with four stages of depths $[2,2,18,2]$, embedding dimensions $[128,256,512,1024]$, and attention heads $[4,8,16,32]$, using a patch size of $4{\times}4$, window size~$7$, and MLP ratio~$4$. 
Similar to Swin-Tiny, the final stage features are pooled globally and fed into a linear classifier to generate predictions.

(5)~\textbf{RoBERTa-Base.}
We use the RoBERTa-Base architecture, which follows the standard Transformer encoder design with 12 layers, 12 attention heads, and a hidden size of 768. 
It employs byte-pair encoding (BPE) tokenization and is pretrained with a masked language modeling objective on large-scale corpora using dynamic masking. 
The final representation corresponding to the classification token is fed into a task-specific linear classifier.

\begin{table}[h]
  \small
  \centering
  \caption{Overview of the model architectures used in our experiments. 
  ``Depth'' denotes the total number of layers (blocks for ResNet/Swin, encoder layers for ViT/RoBERTa), 
  and ``Stages'' denotes the number of blocks per stage for hierarchical models.}
  \vspace{-2mm}
  \label{tab:model_arch}
  \renewcommand{\arraystretch}{1.0}
  \begin{tabular}{lccccc}
    \toprule
    \textbf{Model} & \textbf{Family} & \textbf{Depth} & \textbf{Stages} & \textbf{Width / Dim} & \textbf{Heads} \\
    \midrule
    ResNet-18 (GN) 
      & CNN 
      & 18 
      & $[2,2,2,2]$ 
      & $[64,128,256,512]$ 
      & -- \\
    ViT-Base
      & ViT Transformer
      & 12 
      & -- 
      & 768 hidden,\ 3072 MLP 
      & 12 / layer \\
    Swin-Tiny 
      & Swin Transformer
      & 12 
      & $[2,2,6,2]$ 
      & $[96,192,384,768]$ 
      & $[3,6,12,24]$ \\
    Swin-Base 
      & Swin Transformer
      & 24 
      & $[2,2,18,2]$ 
      & $[128,256,512,1024]$ 
      & $[4,8,16,32]$ \\
    RoBERTa-Base 
      & NLP Transformer
      & 12 
      & -- 
      & 768 hidden,\ 3072 FFN 
      & 12 / layer \\
    \bottomrule
  \end{tabular}
  \label{tab:model}
\end{table}

\begin{algorithm}[h]
\caption{DP-LocalAdamW}
\label{alg:DPLocalAdamW}
\begin{algorithmic}[1]
\STATE \textbf{Initial} initial model $\boldsymbol{\theta}^0$; rounds $T$; local steps $K$; step size $\eta$; weight decay $\lambda$; AdamW $(\beta_1,\beta_2,\epsilon)$; clip norm $C$; noise multiplier $\sigma$; $S$ clients per round

\FOR{$t=1$ \TO $T$}
  \FOR{selected clients $i=1,\dots,S$ in parallel}
    \STATE $\boldsymbol{\theta}_i^{t,0} \leftarrow \boldsymbol{\theta}^t$;\ \ $\boldsymbol{m}_i^{t,0} \leftarrow \boldsymbol{0}$;\ \ $\boldsymbol{v}_i^{t,0} \leftarrow \boldsymbol{0}$
    \FOR{$k=1$ \TO $K$}
      \STATE sample $\mathcal{D}_i^k \subset\mathcal{D}_i$ of size $\lfloor s R\rfloor$
      \FOR{each sample $j \in \mathcal{D}_i^k$}
        \STATE $g_{ij} \leftarrow \nabla f_i(\boldsymbol{\theta}_i^{t,k};\xi_{ij})$
        \STATE $\bar{g}_{ij} \leftarrow g_{ij} / \max(1, \|g_{ij}\|_2 / C)$
      \ENDFOR
      \STATE $\tilde g_i^{t,k} \leftarrow \frac{1}{s R} \sum_{j \in \mathcal{D}_i^k} \bar{g}_{i j}+\frac{ C}{s R}\mathcal{N}\!\left(0, \sigma^2 C^2 I\right)$ 
      \STATE $\boldsymbol{m}_i^{t,k} \leftarrow \beta_1 \boldsymbol{m}_i^{t,k-1} + (1-\beta_1)\,\tilde g_i^{t,k}$
      \STATE $\boldsymbol{v}_i^{t,k} \leftarrow \beta_2 \boldsymbol{v}_i^{t,k-1} + (1-\beta_2)\,\tilde g_i^{t,k} \odot \tilde g_i^{t,k}$
      \STATE $\hat{\boldsymbol{m}}_i^{t,k} \leftarrow \boldsymbol{m}_i^{t,k}/(1-\beta_1^{k})$
      \STATE $\hat{\boldsymbol{v}}_i^{t,k} \leftarrow \boldsymbol{v}_i^{t,k}/(1-\beta_2^{k})$
      \STATE $\boldsymbol{\theta}_i^{t,k+1} \leftarrow \boldsymbol{\theta}_i^{t,k} - \eta\,(\hat{\boldsymbol{m}}_i^{t,k}/(\sqrt{\hat{\boldsymbol{v}}_i^{t,k}} + \epsilon)-\lambda \boldsymbol{\theta}_i^{t,k})$
    \ENDFOR
    \STATE Clients send $(\boldsymbol{\theta}_i^{t,K} - \boldsymbol{\theta}_i^{t,0})$ \ to server
  \ENDFOR
  \STATE $\boldsymbol{\theta}^{t+1} \leftarrow \boldsymbol{\theta}^{t} + \frac{1}{S}\sum_{i=1}^S(\boldsymbol{\theta}_i^{t,K} - \boldsymbol{\theta}_i^{t,0})$ 
  \STATE Broadcast $\boldsymbol{\theta}^{t+1}$ to clients
\ENDFOR

\end{algorithmic}
\end{algorithm}

\subsection{Complete Configuration}
\label{app: experimental setup}

\noindent
\textbf{Federated setup.}
In all experiments, we fine-tune the models using parameter-efficient Low-Rank Adaptation (~\textbf{LoRA}).
We then detail the federated learning configurations adopted in our experiments.
Table~\ref{tab:fl_vision_dataset_setting} specifies the settings for all vision benchmarks, including the number of clients $N$, communication rounds $T$, local update steps $K$, client participation rate $l$, local batch size $sR$, DP noise multiplier $\sigma$, failure probability $\delta$, and LoRA hyperparameters $(r, \alpha_L)$.
Table~\ref{tab:fl_NLP_dataset_setting} reports the corresponding configuration for the GLUE benchmarks with RoBERTa-Base and LoRA, using the same federated parameters while additionally documenting the maximum sequence length and dropout rate.

\begin{table}[t]
  \small
  \centering
  \caption{Federated configurations for vision models across CIFAR-10/100 and Tiny-ImageNet. We report the number of clients $N$, communication rounds $T$, local steps $K$, client participation rate $l$, local batch size $sR$, DP noise multiplier $\sigma$, failure probability $\delta$, and LoRA hyperparameters $(r, \alpha_L)$ and dropout rate.}
  \vspace{-2mm}
  \renewcommand{\arraystretch}{1.0}
  \begin{tabular}{llcccccccccc}
    \toprule
    \textbf{Dataset} & \textbf{Model} 
      & \textbf{$N$} & \textbf{$T$} & \textbf{$K$} 
      & \textbf{$l$} & \textbf{$sR$} & \textbf{$\sigma$} & \textbf{$\delta$}& \textbf{$r$ (LoRA)} & \textbf{$\alpha_L$ (LoRA)} & \textbf{Dropout}\\
    \midrule
        \multirow{2}{*}{CIFAR-10} 
      & ResNet-18 (GN) 
      & 50 & 300 & 50 & 0.2 & 50 & 1.0 & $10^{-5}$  & 16  & 32 & 0.1\\
      & ViT-Base
      & 50 & 100 & 20 & 0.1 & 16 & 1.0 & $10^{-5}$ & 16  & 32 & 0.1 \\
      & Swin-Tiny/Base
      & 50 & 100 & 20 & 0.1 & 16 & 1.0 & $10^{-5}$ & 16  & 32 & 0.1 \\
      \midrule
    \multirow{2}{*}{CIFAR-100} 
      & ResNet-18 (GN) 
      & 50 & 300 & 50 & 0.2 & 50 & 1.0 & $10^{-5}$ & 16  & 32 & 0.1\\
      & ViT-Base
      & 50 & 100 & 20 & 0.1 & 16 & 1.0 & $10^{-5}$ & 16  & 32 & 0.1\\
      & Swin-Tiny/Base 
      & 50 & 100 & 20 & 0.1 & 16 & 1.0 & $10^{-5}$ & 16  & 32 & 0.1\\
    \midrule
    Tiny-ImageNet 
      & Swin-Tiny/Base 
      & 50 & 100 & 20 & 0.1 & 16 & 1.0 & $10^{-6}$ & 16  & 32 & 0.1\\
    \bottomrule
  \end{tabular}
  \label{tab:fl_vision_dataset_setting}
\end{table}

\begin{table}[t]
\small
\centering
\caption{Federated configurations for RoBERTa-Base with LoRA on GLUE tasks. 
We report the number of clients $N$, communication rounds $T$, local steps $K$, 
client participation rate $l$, local batch size $sR$, DP noise multiplier $\sigma$, 
failure probability $\delta$, and LoRA hyperparameters $(r, \alpha_L)$, as well as 
the maximum sequence length and dropout rate.}
\vspace{-2mm}
\renewcommand{\arraystretch}{0.9}
\begin{tabular}{llccccccccccc}
\toprule
\textbf{Dataset} & \textbf{Model} & \textbf{$N$} & \textbf{$T$} & \textbf{$K$} & \textbf{$l$} & \textbf{$sR$} & \textbf{$\sigma$} & \textbf{$\delta$}& \textbf{$r$ (LoRA)} & \textbf{$\alpha_L$ (LoRA)} & \textbf{Length\_max} & \textbf{Dropout} \\
\midrule
SST-2  &  \multirow{4}{*}{RoBERTa-Base} & 4 & 100 & 20 & 1.0 & 16 & 1.0 & $10^{-5}$ & 16 & 32 & 128 & 0.1 \\
QQP    &                                       & 4 & 100 & 20 & 1.0 & 16 & 1.0 & $10^{-6}$ & 16 & 32 & 128 & 0.1 \\
QNLI   &                                       & 4 & 100 & 20 & 1.0 & 16 & 1.0 & $10^{-5}$ & 16 & 32 & 128 & 0.1 \\
MNLI   &                                       & 4 & 100 & 20 & 1.0 & 16 & 1.0 & $10^{-6}$ & 16 & 32 & 128 & 0.1 \\
\bottomrule
\end{tabular}
\label{tab:fl_NLP_dataset_setting}
\vspace{-2mm}
\end{table}

\noindent
\textbf{Other setup.} DP-FedAvg, DP-SCAFFOLD, DP-FedAvg-LS, and DP-FedSAM adopt learning rates selected from \{${10^{-2}, 3{\times}10^{-2}, 5{\times}10^{-2}, 10^{-1}, 3{\times}10^{-1}}$\} with a fixed weight decay of $0.001$. For DP-FedSAM, the SAM perturbation $\rho$ is selected via grid search over $\{0.01,\,0.05,\,0.1,\,0.5\}$. For DP-LocalAdamW and DP-FedAdamW, the learning rate is chosen from \{${10^{-4}, 2{\times}10^{-4}, 3{\times}10^{-4}, 5{\times}10^{-4}, 8{\times}10^{-4}, 10^{-3}}$\}, combined with weight decay in $\{0.01, 0.001\}$ and standard AdamW parameters $(\beta_1{=}0.9,\beta_2{=}0.999)$. We employ cosine learning-rate decay throughout, and DP-FedAdamW additionally uses parameter $\gamma{=}0.5$ and weight decay $\lambda{=}0.01$. For all visual fine-tuning tasks, we initialize from official ImageNet-22K pre-trained weights to ensure consistency across methods. Across all tasks, the gradient clipping threshold is chosen by grid search over $\{0.05,\,0.1,\,0.2,\,0.3,\,0.5\}$, and we find that larger clipping values can trigger gradient explosion. To obtain a simple and robust configuration across all benchmarks, we therefore adopt a unified clipping threshold of $C{=}0.1$.

\begin{table}[h]
\small
\centering
\caption{Hyperparameter configuration of ResNet-18 (CIFAR-10/100) across different algorithms.}
\vspace{-2mm}
\renewcommand{\arraystretch}{0.9}
\begin{tabular}{lcccccc}
\toprule
\textbf{Method} & \textbf{Local Optimizer} & \textbf{$\eta$} & \textbf{$\beta_1$} & \textbf{$\beta_2$} & \textbf{Weight Decay} \\
\midrule
DP-FedAvg    &  SGD & $10^{-1}$ & - & - & 0.001 \\
DP-SCAFFOLD  &  SGD & $10^{-1}$ & - & - & 0.001 \\
DP-FedAvg-LS &  SGD & $10^{-1}$ & - & - & 0.001 \\
DP-FedSAM    &  SAM & $10^{-1}$ & - & - & 0.001 \\
DP-LocalAdamW&  AdamW & $3{\times}10^{-4}$ & 0.9 & 0.999 & 0.01 \\
DP-FedAdamW  &  AdamW & $3{\times}10^{-4}$ & 0.9 & 0.999 & 0.01 \\
\bottomrule
\end{tabular}
\label{tab:fl_NLP_dataset_setting}
\vspace{-2mm}
\end{table}

\section{DP-LocalAdamW Algorithm}

For completeness, we provide in Algorithm~\ref{alg:DPLocalAdamW} the full local training procedure of \textbf{DP-LocalAdamW}.

\section{Theoretical Analysis Details}

\subsection{Proof of Theorem 1 and Convergence Analysis}
\label{app:convergence_analysis}

\renewcommand{\theassumption}{A.\arabic{assumption}} 
\setcounter{assumption}{0}  

\begin{assumption}[Smoothness]
	\label{smoothness}
	(Smoothness) \textit{The non-convex $f_{i}$ is a $L$-smooth function for all $i\in[m]$, i.e., $\Vert\nabla f_{i}(\boldsymbol{\theta}_1)-\nabla f_{i}(\boldsymbol{\theta}_2)\Vert\leq L\Vert\boldsymbol{\theta}_1-\boldsymbol{\theta}_2\Vert$, for all $\boldsymbol{\theta}_1,\boldsymbol{\theta}_2\in\mathbb{R}^{d}$.}
\end{assumption}
\begin{assumption}[Bounded Stochastic Gradient]
	\label{bounded_stochastic_gradient_I}
    \textit{$\boldsymbol{g}_{i}^{t}=\nabla f_{i}(\boldsymbol{\theta}_{i}^{t}, \xi_i^{t})$ computed by using a sampled mini-batch data $\xi_i^{t}$ in the local client $i$ is an unbiased estimator of $\nabla f_{i}$ with bounded variance, i.e., $\mathbb{E}_{\xi_i^{t}}[\boldsymbol{g}_{i}^{t}]=\nabla f_{i}(\boldsymbol{\theta}_{i}^{t})$ and     $\mathbb{E}_{\xi_i^{t}}\Vert g_{i}^{t} - \nabla f_{i}(\boldsymbol{\theta}_{i}^{t})\Vert^{2} \leq \sigma_{l}^{2}$, for all $\boldsymbol{\theta}_{i}^{t}\in\mathbb{R}^{d}$.}
\end{assumption}
\begin{assumption}[Bounded Stochastic Gradient II]
	\label{bounded_stochastic_gradient_II}
	 \textit{Each element of stochastic gradient $\boldsymbol{g}_{i}^{t}$ is bounded, i.e., $\Vert\boldsymbol{g}_{i}^{t}\Vert_{\infty}=\Vert f_{i}(\boldsymbol{\theta}_{i}^{t},\xi_i^{t})\Vert_{\infty}\leq G_{g}$, for all $\boldsymbol{\theta}_{i}^{t}\in\mathbb{R}^{d}$ and any sampled mini-batch data $\xi_i^{t}$.}
\end{assumption}

\begin{assumption}[Bounded Heterogeneity]
\label{bounded_heterogeneity_appendix}
\textit{The gradient dissimilarity between clients is bounded: $\Vert\nabla f_{i}(\boldsymbol{\theta}_1)-\nabla f_{i}(\boldsymbol{\theta}_2)\Vert^{2}\leq\sigma_{g}^{2}$, for all $\boldsymbol{\theta}\in\mathbb{R}^{d}$.}
\end{assumption}

In this section, we give the theoretical analysis of our proposed DP-FedAdamW algorithm. Firstly we state some standard assumptions for the non-convex function $F$.

 We use the common gradient bound condition in our proof. 
Then we can upper bound $G_{\vartheta}$ as:
\begin{align*}
	G_{\vartheta} \
	&= \left\{\Vert\vartheta_{i}^{t,k}\Vert_{\infty}\right\}=\left\{\Vert\frac{1}{\sqrt{(1-\beta_1^{k})^2/(1-\beta_2^{t})\boldsymbol{v}_{i}^{t,k}}+\epsilon}\Vert_{\infty}\right\}\\
	&=\left\{\Vert\frac{1}{\sqrt{(1-\beta_1^{k})^2/(1-\beta_2^{t})(\beta_2  \boldsymbol{v}^{t, k-1}_i+\left(1-\beta_2\right) \boldsymbol{g}^{t,k}_i\cdot \boldsymbol{g}^{t,k}_i})+\epsilon}\Vert_{\infty}\right\}.
\end{align*}
$\sqrt{(1-\beta_1^{k})^2/(1-\beta_2^{t})(\beta_2  \boldsymbol{v}^{t, k-1}_i+\left(1-\beta_2\right) \boldsymbol{g}^{t,k}_i\cdot \boldsymbol{g}^{t,k}_i}$ is bounded as:
\begin{equation}
	\Vert\sqrt{(1-\beta_1^{k})^2/(1-\beta_2^{t})(\beta_2  \boldsymbol{v}^{t, k-1}_i+\left(1-\beta_2\right) \boldsymbol{g}^{t,k}_i\cdot \boldsymbol{g}^{t,k}_i}\Vert_{\infty}\leq G_g.
\end{equation}
Thus we bound $G_{\vartheta}$ as $\frac{1}{G_{g}}\leq G_{\vartheta} \leq\frac{1}{\epsilon}$.


\subsection{ Main Lemmas} 
\begin{lemma} \label{lem:bias-var} Suppose $\left\{X_1, \cdots, X_\tau\right\} \subset \mathbb{R}^d$ be random variables that are potentially dependent. If their marginal means and variances satisfy $\mathbb{E}\left[X_i\right]=\mu_i$ and $\mathbb{E}\left[\| X_i-\right.$ $\left.\mu_i \|^2\right] \leq \sigma^2$, then it holds that
	$$\mathbb{E}\left[\left\|\sum_{i=1}^\tau X_i\right\|^2\right] \leq\left\|\sum_{i=1}^\tau \mu_i\right\|^2+\tau^2 \sigma^2.
	$$
	If they are correlated in the Markov way such that $\mathbb{E}\left[X_i \mid X_{i-1}, \cdots X_1\right]=\mu_i$ and $\mathbb{E}\left[\left\|X_i-\mu_i\right\|^2 \mid\right.$ $\left.\mu_i\right] \leq \sigma^2$, i.e., the variables $\left\{X_i-\mu_i\right\}$ form a martingale. Then the following tighter bound holds:
	$$
	\mathbb{E}\left[\left\|\sum_{i=1}^\tau X_i\right\|^2\right] \leq 2 \mathbb{E}\left[\left\|\sum_{i=1}^\tau \mu_i\right\|^2\right]+2 \tau \sigma^2.
	$$
\end{lemma}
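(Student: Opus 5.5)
The plan is to write each summand as its mean plus a centered fluctuation, $X_i=\mu_i+\zeta_i$ with $\zeta_i:=X_i-\mu_i$, and to treat the two claims separately. The first claim uses only marginal information. The second relies on the orthogonality of martingale increments.

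\emph{First (general) bound.} Expand
\begin{equation*}
\mathbb{E}\Big\|\sum_i X_i\Big\|^2=\Big\|\sum_i\mu_i\Big\|^2+2\Big\langle \sum_i\mu_i,\ \mathbb{E}\sum_i\zeta_i\Big\rangle+\mathbb{E}\Big\|\sum_i\zeta_i\Big\|^2 .
\end{equation*}
Since $\mathbb{E}[\zeta_i]=0$ and the $\mu_i$ are deterministic, the cross term vanishes. The fluctuation term is handled by the Cauchy--Schwarz (Jensen) bound $\|\sum_{i=1}^\tau\zeta_i\|^2\le \tau\sum_{i=1}^\tau\|\zeta_i\|^2$. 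Taking expectations gives at most $\tau\cdot\tau\sigma^2=\tau^2\sigma^2$. No independence is needed here, which is why the bound carries the factor $\tau^2$.

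\emph{Second (martingale) bound.} Now the $\mu_i$ may be random, since they are conditional means given the past. I use $\|a+b\|^2\le 2\|a\|^2+2\|b\|^2$ to get
\begin{equation*}
\mathbb{E}\Big\|\sum_i X_i\Big\|^2\le 2\,\mathbb{E}\Big\|\sum_i\mu_i\Big\|^2+2\,\mathbb{E}\Big\|\sum_i\zeta_i\Big\|^2 .
\end{equation*}
The key step is to show the cross terms $\mathbb{E}\langle\zeta_i,\zeta_j\rangle$ vanish for $i<j$. Condition on $X_1,\dots,X_{j-1}$. Then $\zeta_i$ is measurable with respect to this history, and $\mathbb{E}[\zeta_j\mid X_{j-1},\dots,X_1]=0$ because $\mu_j$ is, by hypothesis, the conditional mean. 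The tower property then kills each cross term. What remains is $\mathbb{E}\|\sum_i\zeta_i\|^2=\sum_i\mathbb{E}\|\zeta_i\|^2\le\tau\sigma^2$. For this I apply the conditional variance bound and take the outer expectation.

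\emph{Main obstacle.} The only delicate point is the measurability bookkeeping in the martingale case. The stated conditioning "given $\mu_i$" in the variance hypothesis has to be read as conditioning on the past $\sigma$-algebra, or at least on something implying $\mathbb{E}\|\zeta_i\|^2\le\sigma^2$ after the outer expectation. I will also check that $\mu_i$ is measurable with respect to the history $X_1,\dots,X_{i-1}$, so that $\zeta_i$ is adapted and the orthogonality argument goes through. Everything else is routine expansion.
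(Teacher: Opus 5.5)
Your proof is correct. In the general case, the mean/fluctuation split plus Cauchy--Schwarz gives the $\tau^2\sigma^2$ term. In the martingale case, applying $\|a+b\|^2\le 2\|a\|^2+2\|b\|^2$ \emph{before} expanding is necessary, because $\mathbb{E}\langle \mu_j,\zeta_i\rangle$ need not vanish for $j>i$; orthogonality of the martingale differences then gives $2\tau\sigma^2$.

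The paper states this lemma without proof, as the standard separating-mean-and-variance tool used in SCAFFOLD-type analyses, and your argument is that standard proof.

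The measurability issue you flag is not a real obstacle. $\mu_i=\mathbb{E}[X_i\mid X_{i-1},\dots,X_1]$ is automatically measurable with respect to the past. The tower property turns the stated hypothesis $\mathbb{E}[\|\zeta_i\|^2\mid\mu_i]\le\sigma^2$ directly into $\mathbb{E}\|\zeta_i\|^2\le\sigma^2$.
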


\begin{lemma} \label{lem:par_sample} Given vectors $v_1, \cdots, v_N \in \mathbb{R}^d$ and $\bar{v}=\frac{1}{N} \sum_{i=1}^N v_i$, if we sample $\mathcal{S} \subset\{1, \cdots, N\}$ uniformly randomly such that $|\mathcal{S}|=S$, then it holds that
	
	$$
	\mathbb{E}\left[\left\|\frac{1}{S} \sum_{i \in \mathcal{S}} v_i\right\|^2\right]=\|\bar{v}\|^2+\frac{N-S}{S(N-1)} \frac{1}{N} \sum_{i=1}^N\left\|v_i-\bar{v}\right\|^2 .
	$$
\end{lemma}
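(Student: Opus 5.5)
We prove the identity by expanding the square around $\bar v$ and using only the first and second inclusion probabilities of uniform sampling without replacement. Write $w_i := v_i - \bar v$, so that $\sum_{i=1}^N w_i = 0$ and $\frac{1}{S}\sum_{i\in\mathcal{S}} v_i = \bar v + \frac{1}{S}\sum_{i\in\mathcal{S}} w_i$. Introduce indicators $\mathbb{1}_i := \mathbb{1}[i\in\mathcal{S}]$ and write $\sum_{i\in\mathcal{S}} w_i = \sum_{i=1}^N \mathbb{1}_i w_i$.

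Expanding the square gives
\begin{equation*}
\mathbb{E}\Big\|\frac{1}{S}\sum_{i\in\mathcal{S}} v_i\Big\|^2 = \|\bar v\|^2 + \frac{2}{S}\Big\langle \bar v, \mathbb{E}\sum_i \mathbb{1}_i w_i\Big\rangle + \frac{1}{S^2}\,\mathbb{E}\Big\|\sum_i \mathbb{1}_i w_i\Big\|^2 .
\end{equation*}
The cross term vanishes, because $\mathbb{E}[\mathbb{1}_i] = S/N$ and therefore $\mathbb{E}\sum_i \mathbb{1}_i w_i = \frac{S}{N}\sum_i w_i = 0$.

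For the quadratic term, expand
\begin{equation*}
\mathbb{E}\Big\|\sum_i \mathbb{1}_i w_i\Big\|^2 = \sum_i \mathbb{E}[\mathbb{1}_i]\,\|w_i\|^2 + \sum_{i\neq j}\mathbb{E}[\mathbb{1}_i\mathbb{1}_j]\,\langle w_i,w_j\rangle .
\end{equation*}
The pairwise inclusion probability is $\mathbb{E}[\mathbb{1}_i\mathbb{1}_j] = \frac{S(S-1)}{N(N-1)}$ for $i\neq j$. Since $\sum_i w_i = 0$, we have $\sum_{i\neq j}\langle w_i,w_j\rangle = \|\sum_i w_i\|^2 - \sum_i\|w_i\|^2 = -\sum_i \|w_i\|^2$. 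Combining,
\begin{equation*}
\mathbb{E}\Big\|\sum_i \mathbb{1}_i w_i\Big\|^2 = \Big(\frac{S}{N} - \frac{S(S-1)}{N(N-1)}\Big)\sum_i\|w_i\|^2 = \frac{S(N-S)}{N(N-1)}\sum_i \|w_i\|^2 .
\end{equation*}

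Dividing by $S^2$ yields $\frac{N-S}{S(N-1)}\cdot\frac{1}{N}\sum_i\|v_i-\bar v\|^2$, which is exactly the claimed second term. The only step that needs care is the pairwise inclusion probability. It holds because $\mathcal{S}$ is uniform over $S$-subsets: $\Pr[i,j\in\mathcal{S}] = \binom{N-2}{S-2}\big/\binom{N}{S}$, which simplifies to $\frac{S(S-1)}{N(N-1)}$. The edge case $N=1$ forces $S=1$; there the variance term is $0$, and we interpret it as such.
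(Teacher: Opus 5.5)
Your proof is correct and follows essentially the same route as the paper: both expand with inclusion indicators, using $\Pr[i\in\mathcal{S}]=S/N$ and $\Pr[i,j\in\mathcal{S}]=S(S-1)/(N(N-1))$. The only difference is that you center at $\bar v$ before expanding, which kills the cross term, whereas the paper expands in the raw $v_i$ and converts $\frac{1}{N}\sum_i\|v_i\|^2$ into the centered form at the end.
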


\begin{proof}
	Letting $\mathbb{I}\{i \in \mathcal{S}\}$ be the indicator for the event $i \in \mathcal{S}_r$, we prove this lemma by direct calculation as follows:
	$$
	\begin{aligned}
		\mathbb{E}\left[\left\|\frac{1}{S} \sum_{i \in \mathcal{S}} v_i\right\|^2\right] & =\mathbb{E}\left[\left\|\frac{1}{S} \sum_{i=1}^N v_i \mathbb{I}\{i \in \mathcal{S}\}\right\|^2\right] \\
		& =\frac{1}{S^2} \mathbb{E}\left[\left(\sum_i\left\|v_i\right\|^2 \mathbb{I}\{i \in \mathcal{S}\}+2 \sum_{i<j} v_i^{\top} v_j \mathbb{I}\{i, j \in \mathcal{S}\}\right)\right] \\
		& =\frac{1}{S N} \sum_{i=1}^N\left\|v_i\right\|^2+\frac{1}{S^2} \frac{S(S-1)}{N(N-1)} 2 \sum_{i<j} v_i^{\top} v_j \\
		& =\frac{1}{S N} \sum_{i=1}^N\left\|v_i\right\|^2+\frac{1}{S^2} \frac{S(S-1)}{N(N-1)}\left(\left\|\sum_{i=1}^N v_i\right\|^2-\sum_{i=1}^N\left\|v_i\right\|^2\right) \\
		& =\frac{N-S}{S(N-1)} \frac{1}{N} \sum_{i=1}^N\left\|v_i\right\|^2+\frac{N(S-1)}{S(N-1)}\|\bar{v}\|^2 \\
		& =\frac{N-S}{S(N-1)} \frac{1}{N} \sum_{i=1}^N\left\|v_i-\bar{v}\right\|^2+\|\bar{v}\|^2 .
	\end{aligned}
	$$
\end{proof}

\subsection{Basic Assumptions and Notations}
Let $\mathcal{F}^0=\emptyset$ and $\mathcal{F}_i^{t, k}:=\sigma\left(\left\{\boldsymbol{\theta}_i^{t, j}\right\}_{0 \leq j \leq k} \cup \mathcal{F}^t\right)$ and $\mathcal{F}^{t+1}:=\sigma\left(\cup_i \mathcal{F}_i^{t, K}\right)$ for all $t \geq 0$ where $\sigma(\cdot)$ indicates the $\sigma$-algebra. Let $\mathbb{E}_t[\cdot]:=\overline{\mathbb{E}}\left[\cdot \mid \mathcal{F}^t\right]$ be the expectation, conditioned on the filtration $\mathcal{F}^t$, with respect to the random variables $\left\{\mathcal{S}^t,\left\{\xi_i^{t, k}\right\}_{1 \leq i \leq N, 0 \leq k<K}\right\}$ in the $t$-th iteration. We also use $\mathbb{E}[\cdot]$ to denote the global expectation over all randomness in algorithms. Through out the proofs, we use $\sum_i$ to represent the sum over $i \in\{1, \ldots, N\}$, while $\sum_{i \in \mathcal{S}^t}$ denotes the sum over $i \in \mathcal{S}^t$. Similarly, we use $\sum_k$ to represent the sum of $k \in\{0, \ldots, K-1\}$. For all $t \geq 0$, we define the following auxiliary variables to facilitate proofs:

$$
\begin{aligned}
	\mathcal{E}_t & :=\mathbb{E}\left[\left\|\nabla f\left(\boldsymbol{\theta}^{t}\right)\odot\frac{1}{S K} \sum_{i \in S^t} \sum_{k=1}^K \vartheta_i^{t, k}-{\Delta}^{t+1}_G \right\|^2\right], \\
	U_t & :=\frac{1}{N K} \sum_i \sum_k \mathbb{E}\left[\left\|\boldsymbol{\theta}_i^{t, k}-\boldsymbol{\theta}^{t}\right\|\right]^2,\\
	\xi_i^{t, k} & :=\mathbb{E}\left[\boldsymbol{\theta}_i^{t, k+1}-\boldsymbol{\theta}_i^{t, k} \mid \mathcal{F}_i^{t, k}\right], \\
	\Xi_t & :=\frac{1}{N} \sum_{i=1}^N \mathbb{E}\left[\left\|\xi_i^{t, 0}\right\|^2\right]. \\
\end{aligned}
$$

 Throughout the Appendix, we let $\Delta:=f\left(\boldsymbol{\theta}^0\right)-f^{\star}, G_0:=\frac{1}{N} \sum_i\left\|\nabla f_i\left(\boldsymbol{\theta}^0\right)\right\|^2, \boldsymbol{\theta}^{-1}:=\boldsymbol{\theta}^0$ and $\mathcal{E}_{-1}:=$ $\mathbb{E}\left[\left\|\nabla f\left(\boldsymbol{\theta}^0\right)-g^0\right\|^2\right]$. We will use the following foundational lemma for all our algorithms.

\subsection{Proof of Theorem \ref{theorem_convergence_rate1}}
\begin{lemma} \label{lem:Fedavg_grad_err}
	Under Assumption \ref{smoothness} , if $\gamma L \leq \frac{1}{24}$, the following holds all $t \geq 0$ :	
	$$
\mathbb{E}\left[f\left(\boldsymbol{\theta}^{t+1}\right)\right] \leq \mathbb{E}\left[f\left(\boldsymbol{\theta}^{t}\right)\right]-\frac{11 \gamma}{24} \mathbb{E}\left[\left\|\nabla f\left(\boldsymbol{\theta}^{t}\right)\right\|^2\right]+\frac{13 \gamma}{24} \mathcal{E}_t+\frac{13 \gamma}{24}\frac{\sigma^2 G_{g}^2}{s^2 R^2}.
	$$
\end{lemma}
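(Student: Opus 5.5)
The plan is a one-step descent argument in the style of FedAvg/SCAFFOLD analyses, with $\boldsymbol{\Delta}_G^{t+1}$ playing the role of the effective global direction. By line 20 of Algorithm~\ref{alg:DP-FedAdamW}, the server update is
\[
\boldsymbol{\theta}^{t+1}-\boldsymbol{\theta}^t=\frac{1}{S}\sum_{i\in\mathcal S^t}(\boldsymbol{\theta}_i^{t,K}-\boldsymbol{\theta}_i^{t,0})=-K\eta\,\boldsymbol{\Delta}_G^{t+1}.
\]
I will identify the effective step size $\gamma$ in the lemma with $K\eta$, since the constant is absorbed into the assumption $\gamma L\le 1/24$. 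By Assumption~\ref{smoothness}, $f$ is $L$-smooth, so
\[
f(\boldsymbol{\theta}^{t+1})\le f(\boldsymbol{\theta}^t)-\gamma\langle\nabla f(\boldsymbol{\theta}^t),\boldsymbol{\Delta}_G^{t+1}\rangle+\frac{L\gamma^2}{2}\|\boldsymbol{\Delta}_G^{t+1}\|^2 .
\]

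Next I will decompose $\boldsymbol{\Delta}_G^{t+1}$ into three pieces. The first is the preconditioned true gradient $\nabla f(\boldsymbol{\theta}^t)\odot\bar\vartheta^t$, where $\bar\vartheta^t:=\frac{1}{SK}\sum_{i,k}\vartheta_i^{t,k}$. The second is the deviation whose squared norm in expectation is $\mathcal{E}_t$. The third is a DP-noise contribution. The noise added to $\tilde g_i^{t,k}$ has coordinatewise variance $\sigma^2C^2/(sR)^2$. After clipping, the averaged gradient and the preconditioned moment are controlled by $G_g$ (Assumption~\ref{bounded_stochastic_gradient_II}, together with the bound $\frac1{G_g}\le G_\vartheta\le\frac1\epsilon$ established above). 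So I will bound the expected squared norm of the noise-induced part of the update direction by $\sigma^2G_g^2/(s^2R^2)$, folding the bias-corrected scaling into $G_g$.

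The inner product will be handled with the polarization identity $-\langle a,b\rangle=\tfrac12(\|a-b\|^2-\|a\|^2-\|b\|^2)$ and Young's inequality. The quadratic term will be handled with $\|\boldsymbol{\Delta}_G^{t+1}\|^2\le 3\|\nabla f\|^2+3\mathcal{E}\text{-term}+3\,\text{noise-term}$, with the preconditioner entries absorbed into the constants. The condition $\gamma L\le 1/24$ makes the $\frac{L\gamma^2}{2}\cdot 3$ factor at most $\gamma/16$. Combining gives a coefficient $-(\tfrac12-\tfrac1{24})\gamma=-\tfrac{11\gamma}{24}$ on $\|\nabla f\|^2$ and $(\tfrac12+\tfrac1{24})\gamma=\tfrac{13\gamma}{24}$ on the error terms. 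I will choose the Young splits exactly so that these fractions appear. Taking full expectation and using that the DP noise is zero-mean and independent of $\mathcal F^t$ (so its cross terms with $\mathcal F^t$-measurable quantities vanish) finishes the inequality.

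The main obstacle is the coordinatewise preconditioner. The descent inner product is against $\nabla f\odot\bar\vartheta^t$ rather than $\nabla f$, so extracting a clean $\|\nabla f\|^2$ needs $\vartheta$ bounded below. I plan to work in the normalized regime, where the preconditioner is treated as bounded in $[1/G_g,1/\epsilon]$ and its spread is pushed into $\mathcal{E}_t$, exactly as the definition of $\mathcal{E}_t$ suggests. A second difficulty is that the noise is correlated with $\vartheta$ through $\boldsymbol{v}$. Here I will use the Bias-Corrected second moment, whose expectation removes the $\sigma^2C^2/(sR)^2$ shift (Challenge 2), to argue that only a variance term of order $\sigma^2G_g^2/(s^2R^2)$ survives.
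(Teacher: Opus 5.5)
Your skeleton is the paper's: $L$-smoothness with $\boldsymbol{\theta}^{t+1}=\boldsymbol{\theta}^t-\gamma\boldsymbol{\Delta}_G^{t+1}$ (so $\gamma=K\eta$), then insert $\nabla f(\boldsymbol{\theta}^t)\odot\bar\vartheta^t$, apply Young, use $\gamma L\le\frac{1}{24}$, and append the noise term. The gap is the step you flagged yourself. The ``normalized regime'' is an extra assumption ($\bar\vartheta^t\approx\mathbf{1}$), not an argument.

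\textbf{Why the preconditioner step fails.} Since $\mathcal{E}_t$ is centered at $\nabla f\odot\bar\vartheta^t$, none of the spread of $\bar\vartheta^t$ can be pushed into it. The descent term is $-\gamma\langle\nabla f,\nabla f\odot\bar\vartheta^t\rangle=-\gamma\sum_j\bar\vartheta^t_j(\partial_j f)^2$. The bound $\bar\vartheta^t\succeq\frac{1}{G_g}$ only turns this into $-\frac{\gamma}{G_g}\|\nabla f\|^2$. In the quadratic term, $\|\nabla f\odot\bar\vartheta^t\|^2\le\epsilon^{-2}\|\nabla f\|^2$, so absorbing the preconditioner into the constants turns $\gamma L\le\frac{1}{24}$ into something like $\gamma L\le\frac{\epsilon^2}{24}$.

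\textbf{The statement is false as written.} No choice of Young splits can produce $\frac{11}{24}$ and $\frac{13}{24}$. Take $\sigma=0$ and $\boldsymbol{\Delta}_G^{t+1}=\nabla f(\boldsymbol{\theta}^t)\odot\bar\vartheta^t$, with every entry of $\bar\vartheta^t$ at most some $c<\frac{11}{24}$. This arises from one deterministic DP-FedAdamW step with $N=S=K=1$, $\bar{\boldsymbol{v}}^t=\boldsymbol{0}$, $\boldsymbol{\Delta}_G^t=\boldsymbol{0}$, $\lambda=0$ and inactive clipping: then $\vartheta_j=1/(|\partial_j f|+\epsilon)$, so it occurs once $|\partial_j f|\ge 3$. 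In this case $\mathcal{E}_t=0$ and the last term vanishes. For $f=\frac{L}{2}\|\boldsymbol{\theta}\|^2$ one gets $f(\boldsymbol{\theta}^{t+1})-f(\boldsymbol{\theta}^t)\ge-\gamma\sum_j\bar\vartheta^t_j(\partial_j f)^2\ge-\gamma c\|\nabla f\|^2$, which is a smaller decrease than the claimed $\frac{11\gamma}{24}\|\nabla f\|^2$.

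\textbf{What your polarization idea does prove.} Run it in the $\bar\vartheta^t$-weighted geometry. With $\boldsymbol{a}=\nabla f(\boldsymbol{\theta}^t)$ and $\boldsymbol{p}=\bar\vartheta^t$,
\[
-\gamma\langle\boldsymbol{a},\boldsymbol{\Delta}_G^{t+1}\rangle=\frac{\gamma}{2}\big\|(\boldsymbol{a}\odot\boldsymbol{p}-\boldsymbol{\Delta}_G^{t+1})\odot\boldsymbol{p}^{-1/2}\big\|^2-\frac{\gamma}{2}\big\|\boldsymbol{a}\odot\boldsymbol{p}^{1/2}\big\|^2-\frac{\gamma}{2}\big\|\boldsymbol{\Delta}_G^{t+1}\odot\boldsymbol{p}^{-1/2}\big\|^2 .
\]
Granting the paper's bound $\frac{1}{G_g}\le\vartheta\le\frac{1}{\epsilon}$, the three terms are at most $\frac{\gamma G_g}{2}\|\boldsymbol{a}\odot\boldsymbol{p}-\boldsymbol{\Delta}_G^{t+1}\|^2$, $-\frac{\gamma}{2G_g}\|\boldsymbol{a}\|^2$ and $-\frac{\gamma\epsilon}{2}\|\boldsymbol{\Delta}_G^{t+1}\|^2$. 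The last one absorbs $\frac{L\gamma^2}{2}\|\boldsymbol{\Delta}_G^{t+1}\|^2$ when $\gamma L\le\epsilon$. Hence
\[
\mathbb{E}\big[f(\boldsymbol{\theta}^{t+1})\big]\le\mathbb{E}\big[f(\boldsymbol{\theta}^{t})\big]-\frac{\gamma}{2G_g}\,\mathbb{E}\big[\|\nabla f(\boldsymbol{\theta}^{t})\|^2\big]+\frac{\gamma G_g}{2}\,\mathcal{E}_t .
\]
So the constants must carry $G_g$, and the step-size condition must carry $\epsilon$.

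\textbf{The paper's proof has the same hole.} The add-and-subtract actually gives $-\gamma\langle\nabla f,\nabla f\odot\bar\vartheta^t\rangle$, but the paper writes $-\gamma\|\nabla f\odot\bar\vartheta^t\|^2$. It then uses $G_g^2\|\nabla f\|^2$ both as a lower bound (descent term) and as an upper bound (quadratic term) for $\|\nabla f\odot\bar\vartheta^t\|^2$. This effectively sets $\bar\vartheta^t\equiv G_g\mathbf{1}$, and it also needs $G_g\ge 1$ to reach $\frac{11}{24}$. The bounds actually available are $G_g^{-2}$ and $\epsilon^{-2}$. The $\sigma^2G_g^2/(s^2R^2)$ term is simply appended.

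\textbf{Two smaller points in your plan.}
\begin{itemize}
\item Even with $\bar\vartheta^t\equiv\mathbf{1}$, your three-way split gives $\frac{3}{2}L\gamma^2\le\frac{\gamma}{16}$, hence $\frac{7}{16}$ and $\frac{9}{16}$, not $\frac{11}{24}$ and $\frac{13}{24}$. The $\frac{1}{24}$ comes from the two-way split $\|\boldsymbol{\Delta}_G^{t+1}\|^2\le 2\|\nabla f\odot\bar\vartheta^t\|^2+2\|\nabla f\odot\bar\vartheta^t-\boldsymbol{\Delta}_G^{t+1}\|^2$.
\item The separate noise piece is unnecessary. $\mathcal{E}_t$ is defined with the full $\boldsymbol{\Delta}_G^{t+1}$, DP noise included, so the lemma's last term is nonnegative slack. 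No bias-correction or independence argument is needed, and one would be delicate anyway, since $\vartheta$ depends nonlinearly on the same noise.
\end{itemize}
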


\begin{proof}
 Since $f$ is $L$-smooth, we have

$$
\begin{aligned}
	f\left(\boldsymbol{\theta}^{t+1}\right) & \leq f\left(\boldsymbol{\theta}^{t}\right)+\left\langle \nabla f\left(\boldsymbol{\theta}^{t}\right), \boldsymbol{\theta}^{t+1}-\boldsymbol{\theta}^{t}\right\rangle+\frac{L}{2}\left\|\boldsymbol{\theta}^{t+1}-\boldsymbol{\theta}^{t}\right\|^2 \\
	& \leq f\left(\boldsymbol{\theta}^{t}\right)+\gamma\left\langle \nabla f\left(\boldsymbol{\theta}^{t}\right) , {\Delta}^{t+1}_G \right\rangle+\frac{L\gamma^2}{2}\left\|{\Delta}^{t+1}_G \right\|^2 \\
	& =f\left(\boldsymbol{\theta}^{t}\right)-\gamma\left\|\nabla f\left(\boldsymbol{\theta}^{t}\right)\odot \frac{1}{S K} \sum_{i \in S^t} \sum_{k=1}^K \vartheta_i^{t, k}\right\|^2+\gamma\left\langle\nabla f\left(\boldsymbol{\theta}^{t}\right), \nabla f\left(\boldsymbol{\theta}^{t}\right)\odot \frac{1}{S K} \sum_{i \in S^t} \sum_{k=1}^K \vartheta_i^{t, k}-{\Delta}^{t+1}_G \right\rangle\\
   & +\frac{L \gamma^2}{2}\left\|{\Delta}^{t+1}_G \right\|^2 .
\end{aligned}
$$
Since $\boldsymbol{\theta}^{t+1}=\boldsymbol{\theta}^{t}-\gamma {\Delta}^{t+1}_G $, using Young's inequality, we further have
$$
\begin{aligned}
	&\mathbb{E} f\left(\boldsymbol{\theta}^{t+1}\right) \\
	& \leq f\left(\boldsymbol{\theta}^{t}\right)-(\gamma G_g^2-\frac{\gamma}{2})\left\|\nabla f\left(\boldsymbol{\theta}^{t}\right)\right\|^2+\frac{\gamma}{2}\left\|\nabla f\left(\boldsymbol{\theta}^{t}\right)\odot\frac{1}{S K} \sum_{i \in S^t} \sum_{k=1}^K \vartheta_i^{t, k}-{\Delta}^{t+1}_G \right\|^2\\
	&+L \gamma^2\left(\left\|\nabla f\left(\boldsymbol{\theta}^{t}\right)\odot\frac{1}{S K} \sum_{i \in S^t} \sum_{k=1}^K \vartheta_i^{t, k}\right\|^2+\left\|\nabla f\left(\boldsymbol{\theta}^{t}\right)\odot\frac{1}{S K} \sum_{i \in S^t} \sum_{k=1}^K \vartheta_i^{t, k}-{\Delta}^{t+1}_G \right\|^2\right) \\
	& \leq f\left(\boldsymbol{\theta}^{t}\right)-(\gamma G_g^2-\frac{\gamma}{2}-L\gamma^2 G_g^2)\left\|\nabla f\left(\boldsymbol{\theta}^{t}\right)\right\|^2+\frac{13 \gamma}{24}\left\|\nabla f\left(\boldsymbol{\theta}^{t}\right)\odot\frac{1}{S K} \sum_{i \in S^t} \sum_{k=1}^K \vartheta_i^{t, k}-{\Delta}^{t+1}_G \right\|^2+\frac{13 \gamma}{24}\frac{\sigma^2 G_{g}^2}{s^2 R^2}\\
	& \leq f\left(\boldsymbol{\theta}^{t}\right)-\frac{11 \gamma}{24}\left\|\nabla f\left(\boldsymbol{\theta}^{t}\right)\right\|^2+\frac{13 \gamma}{24}\left\|\nabla f\left(\boldsymbol{\theta}^{t}\right)\odot\frac{1}{S K} \sum_{i \in S^t} \sum_{k=1}^K \vartheta_i^{t, k}-{\Delta}^{t+1}_G \right\|^2+\frac{13 \gamma}{24}\frac{\sigma^2 G_{g}^2}{s^2 R^2},
\end{aligned}
$$
where the last inequality is due to $\gamma L \leq \frac{1}{24}$, $-(\gamma G_g^2-\frac{\gamma}{2}-L\gamma^2 G_g^2)\leq-\frac{11 \gamma}{24}$. Taking the global expectation, we finish the proof.
\end{proof}

\begin{lemma}\label{lem:Fedavg_client_drift}
	If $\gamma L \leq \frac{\gamma}{6}$, the following holds for $t \geq 1$ :
	
	$$
	\mathcal{E}_t \leq\left(1-\frac{8 \gamma}{9}\right) \mathcal{E}_{t-1}+\frac{4 \gamma^2 L^2}{\gamma} \mathbb{E}\left[\left\|\nabla f\left(\boldsymbol{\theta}^{t-1}\right)\right\|^2\right]+\frac{2 \gamma^2 \sigma_l^2}{S K \epsilon^2}+4 \frac{\gamma}{\epsilon^2} L^2 U_t.
	$$
	Additionally, it holds for $t=0$ that
	$$
	\mathcal{E}_0 \leq(1-\gamma) \mathcal{E}_{-1}+\frac{2 \gamma^2 \sigma_l^2}{S K}+4 \gamma L^2 U_0.
	$$
\end{lemma}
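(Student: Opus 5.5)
This recursion comes from a FedCM/SCAFFOLD-type momentum analysis. In DP-FedAdamW the global estimate $\boldsymbol{\Delta}_G^{t+1}$ plays the role of a server momentum: each local step uses a convex combination of the fresh preconditioned gradient and the previous global direction $\boldsymbol{\Delta}_G^{t}$. Absorbing $\eta$ into the mixing weight, write
\[
\boldsymbol{\Delta}_G^{t+1}=(1-\gamma)\boldsymbol{\Delta}_G^{t}+\gamma\,\frac{1}{SK}\sum_{i\in\mathcal{S}^t}\sum_{k}\tilde g_i^{t,k}\odot\vartheta_i^{t,k}
\]
(up to the weight-decay term, which I treat as part of the gradient of a regularized $f$).

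\textbf{Step 1: split the error.} Set $h^t:=\nabla f(\boldsymbol{\theta}^t)\odot\bar\vartheta^t$, where $\bar\vartheta^t=\frac{1}{SK}\sum\vartheta_i^{t,k}$. Then
\[
h^t-\boldsymbol{\Delta}_G^{t+1}=(1-\gamma)\bigl(h^{t-1}-\boldsymbol{\Delta}_G^{t}\bigr)+(1-\gamma)\bigl(h^t-h^{t-1}\bigr)+\gamma\,\frac{1}{SK}\sum\bigl(\nabla f(\boldsymbol{\theta}^t)\odot\vartheta_i^{t,k}-\tilde g_i^{t,k}\odot\vartheta_i^{t,k}\bigr).
\]
I split the last sum into three parts:
\begin{itemize}
\item a martingale noise part $(\nabla f_i(\boldsymbol{\theta}_i^{t,k})-\tilde g_i^{t,k})\odot\vartheta_i^{t,k}$;
\item a drift part $(\nabla f_i(\boldsymbol{\theta}^t)-\nabla f_i(\boldsymbol{\theta}_i^{t,k}))\odot\vartheta_i^{t,k}$;
\item a client-sampling part $(\nabla f(\boldsymbol{\theta}^t)-\nabla f_i(\boldsymbol{\theta}^t))\odot\vartheta_i^{t,k}$.
\end{itemize}

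\textbf{Step 2: bound each term.}
\begin{itemize}
\item \emph{Noise.} Conditionally on $\mathcal{F}_i^{t,k}$ the noise part has zero mean. The second part of Lemma~\ref{lem:bias-var} gives $\mathbb{E}\|\cdot\|^2\le \sigma_l^2/(SK\epsilon^2)$, using $\|\vartheta\|_\infty\le 1/\epsilon$. Squared with the factor $\gamma^2$ and doubled, this yields the term $2\gamma^2\sigma_l^2/(SK\epsilon^2)$. The DP component is separated and accounted for in the $\sigma^2G_g^2/(sR)^2$ term of Lemma~\ref{lem:Fedavg_grad_err}, which is why it does not appear here.
\item \emph{Client sampling.} This part has zero mean over $\mathcal{S}^t$ by Lemma~\ref{lem:par_sample}. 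Because it is multiplied by $\gamma$ and combined with the cross-term, it only contributes at order $\gamma^2$. I would bound it into the $\nabla f$ term, or argue that it cancels in expectation against the momentum, so that no heterogeneity constant appears.
\item \emph{Drift.} By $L$-smoothness and $\|\vartheta\|_\infty\le1/\epsilon$ it is at most $\frac{L^2}{\epsilon^2}U_t$.
\item \emph{Increment} $h^t-h^{t-1}$. Smoothness gives $\|\nabla f(\boldsymbol{\theta}^t)-\nabla f(\boldsymbol{\theta}^{t-1})\|\le L\gamma\|\boldsymbol{\Delta}_G^{t}\|$. I then bound $\|\boldsymbol{\Delta}_G^t\|^2\le 2\mathcal{E}_{t-1}+2\|\nabla f(\boldsymbol{\theta}^{t-1})\|^2/\epsilon^2$.
\end{itemize}

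\textbf{Step 3: combine.} Apply Young's inequality $\|a+b\|^2\le(1+\gamma/2)\|a\|^2+(1+2/\gamma)\|b\|^2$ to the deterministic parts, with $a=(1-\gamma)(h^{t-1}-\boldsymbol{\Delta}_G^t)$. Since $(1-\gamma)^2(1+\gamma/2)\le 1-\gamma$, the $(1+2/\gamma)$ factor multiplies $\gamma^2L^2(\cdot)$ and the $\gamma^2$ drift terms, producing $O(\gamma L^2)\|\nabla f\|^2$ and $O(\gamma L^2/\epsilon^2)U_t$. The step-size condition ($\gamma L$ small) absorbs the $\mathcal{E}_{t-1}$ piece coming from $\|\boldsymbol{\Delta}_G^t\|^2$, leaving the contraction $1-\frac{8\gamma}{9}$. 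For $t=0$ there is no increment term: with $g^0=0$ the recursion starts from $\mathcal{E}_{-1}$, which gives the second display.

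\textbf{Main obstacle.} The hard part is the time-variation of the preconditioner $\bar\vartheta^t$ inside $h^t-h^{t-1}$. This difference is not controlled by smoothness alone. I would handle it by using that $\boldsymbol{v}$ is initialized from the aggregated $\bar{\boldsymbol v}^t$ and changes slowly ($\beta_2\approx1$), treating the preconditioner as quasi-fixed within the bounds $[1/G_g,1/\epsilon]$ and folding the residual into the $\|\nabla f\|^2$ coefficient. Failing that, I would add an explicit assumption that $\vartheta$ varies slowly across rounds.
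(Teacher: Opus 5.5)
Your skeleton matches the paper's. You write $\boldsymbol{\Delta}_G^{t+1}=(1-\gamma)\boldsymbol{\Delta}_G^{t}+\gamma\cdot(\text{averaged preconditioned gradients})$, apply Young splits with weight $1+\gamma/2$ absorbed by $(1-\gamma)^2$, use Lemma~\ref{lem:bias-var} for the stochastic part and smoothness to produce $U_t$, and use $\|\boldsymbol{\theta}^t-\boldsymbol{\theta}^{t-1}\|^2\le 2\gamma^2(\cdots)$ with the step-size condition to turn $1-\gamma$ into $1-\frac{8\gamma}{9}$. However, two of your steps fail as proposed.

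The first is the client-sampling piece. It is not mean-zero: $\vartheta_i^{t,k}$ depends on $i$, so $\frac{1}{N}\sum_i(\nabla f(\boldsymbol{\theta}^t)-\nabla f_i(\boldsymbol{\theta}^t))\odot\vartheta_i^{t,k}\neq 0$ in general. Lemma~\ref{lem:par_sample} is a second-moment identity, not a statement about means. Even with a common preconditioner, mean-zero only removes the cross term. Lemma~\ref{lem:par_sample} then shows the squared term leaves something of order $\gamma^2\frac{N-S}{S(N-1)}\frac{1}{N}\sum_i\|\nabla f_i(\boldsymbol{\theta}^t)-\nabla f(\boldsymbol{\theta}^t)\|^2$. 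That is a pure heterogeneity quantity: it is not bounded by $\|\nabla f(\boldsymbol{\theta}^{t-1})\|^2$, it is not cancelled by the momentum, and nothing on the right-hand side of the lemma can absorb it. The paper never creates this term. It bounds $\frac{\gamma^2}{\epsilon^2}\mathbb{E}\|\nabla f(\boldsymbol{\theta}^t)-\frac{1}{SK}\sum_{i\in\mathcal{S}^t}\sum_k g_i^{t,k}\|^2$ directly by $\frac{2\gamma^2}{\epsilon^2}(\frac{\sigma_l^2}{SK}+L^2U_t)$ via Lemma~\ref{lem:bias-var}. That step tacitly uses $\frac{1}{S}\sum_{i\in\mathcal{S}^t}\nabla f_i(\boldsymbol{\theta}^t)=\nabla f(\boldsymbol{\theta}^t)$, i.e.\ full participation; the step-size conditions of the final theorem are likewise written with $N$. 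Handling partial participation properly would bring back the $\sigma_g^2$-type dependence the paper claims to avoid.

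The second is the preconditioner. You are right that $h^t-h^{t-1}$ contains $\nabla f(\boldsymbol{\theta}^t)\odot(\bar\vartheta^t-\bar\vartheta^{t-1})$, which smoothness does not control, and the paper does not resolve this either. It drops $\odot\bar\vartheta^t$ when passing to $(1+\frac{\gamma}{2})\mathbb{E}\|(1-\gamma)(\nabla f(\boldsymbol{\theta}^t)-\boldsymbol{\Delta}_G^t)\|^2$. It identifies $\|\nabla f(\boldsymbol{\theta}^{t-1})-\boldsymbol{\Delta}_G^t\|^2$ with $\mathcal{E}_{t-1}$. It also pulls the client-dependent $\vartheta_i^{t,k}$ out of the sum with a single factor $1/\epsilon$. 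In effect it treats the preconditioner as fixed and common.

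Your fallback does not recover the stated bound. Knowing only $\vartheta\in[1/G_g,1/\epsilon]$, the residual has size $(1/\epsilon-1/G_g)^2\|\nabla f(\boldsymbol{\theta}^t)\|^2$, not $O(\gamma^2)$. After the Young factor $1+2/\gamma$ it enters with coefficient $O(1/\gamma)$ rather than $4\gamma^2L^2/\gamma$. It therefore cannot later be absorbed by the $-\frac{11\gamma}{24}\|\nabla f\|^2$ of Lemma~\ref{lem:Fedavg_grad_err}. Nor does the algorithm make $\vartheta$ slowly varying: $\boldsymbol{v}$ is reset to the block mean every round, and bias correction rescales it by $1/(1-\beta_2^k)\approx 10^3$ at $k=1$.

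Your ``martingale'' noise term has a related problem. Since $\vartheta_i^{t,k}$ is computed from $\tilde g_i^{t,k}$, the term $(\nabla f_i(\boldsymbol{\theta}_i^{t,k})-\tilde g_i^{t,k})\odot\vartheta_i^{t,k}$ is not conditionally mean-zero.

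To obtain exactly the displayed inequality, you must state the paper's implicit simplifications. These are a preconditioner that is fixed, common to all clients, steps and rounds, and independent of the current sample; full participation; and the recursion written with the clean $g_i^{t,k}$. Under these assumptions your argument reduces to the paper's. Otherwise you need an explicit $O(\gamma)$ slow-variation assumption on $\bar\vartheta$ together with an extra heterogeneity term.
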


\begin{proof}
For $t>1$,
$$
\begin{aligned}
	\mathcal{E}_t= & \mathbb{E}\left[\left\|\frac{1}{S K} \sum_{i \in S^t} \sum_{k=1}^K \nabla f\left(\boldsymbol{\theta}^{t}\right)\odot\vartheta_i^{t, k}-{\Delta}^{t+1}_G \right\|^2\right] \\
	= & \mathbb{E}\left[\left\|(1-\gamma)\left(\frac{1}{S K} \sum_{i \in S^t} \sum_{k=1}^K \nabla f\left(\boldsymbol{\theta}^{t}\right)\odot\vartheta_i^{t, k}-{\Delta}^{t}_G \right)+\gamma\left(\frac{1}{S K} \sum_{i \in S^t} \sum_{k=1}^K \nabla f\left(\boldsymbol{\theta}^{t}\right)\odot\vartheta_i^{t, k}-\frac{1}{S K} \sum_{i \in S^t} \sum_{k=1}^K g_i^{t, k} \odot \vartheta_i^{t, k}\right)\right\|^2\right] \\
	\leq  & \mathbb{E}\left[\left\|(1-\gamma)\left(\frac{1}{S K} \sum_{i \in S^t} \sum_{k=1}^K \nabla f\left(\boldsymbol{\theta}^{t}\right)\odot\vartheta_i^{t, k}-{\Delta}^{t}_G \right)\right\|^2\right]+\gamma^2\frac{1}{\epsilon^2} \mathbb{E}\left[\left\|\nabla f\left(\boldsymbol{\theta}^{t}\right)-\frac{1}{S K} \sum_{i \in S^t} \sum_{k=1}^K g_i^{t, k}\right\|^2\right] \\
	& +2 \gamma \mathbb{E}\left[\left\langle(1-\gamma)\left(\frac{1}{S K} \sum_{i \in S^t} \sum_{k=1}^K \nabla f\left(\boldsymbol{\theta}^{t}\right)\odot\vartheta_i^{t, k}-{\Delta}^{t}_G \right), \frac{1}{S K} \sum_{i \in S^t} \sum_{k=1}^K \nabla f\left(\boldsymbol{\theta}^{t}\right)\odot\vartheta_i^{t, k}-\frac{1}{S K} \sum_{i \in S^t} \sum_{k=1}^K g_i^{t, k} \odot \vartheta_i^{t, k}\right\rangle\right] .
\end{aligned}
$$
Note that $\left\{\nabla F\left(\boldsymbol{\theta}_i^{t, k} ; \xi_i^{t, k}\right)\right\}_{0 \leq k<K}$ are sequentially correlated. Applying the AM-GM inequality and Lemma  \ref{lem:bias-var}, we have
$$
\mathcal{E}_t \leq\left(1+\frac{\gamma}{2}\right) \mathbb{E}\left[\left\|(1-\gamma)\left(\nabla f\left(\boldsymbol{\theta}^{t}\right)-{\Delta}^{t}_G \right)\right\|^2\right]+2\frac{\gamma}{\epsilon^2} L^2 U_t+2 \frac{\gamma^2}{\epsilon^2}\left(\frac{\sigma_l^2}{S K}+L^2 U_t\right).
$$
Using the AM-GM inequality again and Assumption \ref{smoothness}, we have
$$
\begin{aligned}
	\mathcal{E}_t & \leq(1-\gamma)^2\left(1+\frac{\gamma}{2}\right)\left[\left(1+\frac{\gamma}{2}\right) \mathcal{E}_{t-1}+\left(1+\frac{2}{\gamma}\right) L^2 \mathbb{E}\left[\left\|\boldsymbol{\theta}^{t}-\boldsymbol{\theta}^{t-1}\right\|^2\right]\right]+\frac{2 \gamma^2 \sigma_l^2}{S K\epsilon^2}+4 \frac{\gamma}{\epsilon^2} L^2 U_t \\
	& \leq(1-\gamma) \mathcal{E}_{t-1}+\frac{2}{\gamma} L^2 \mathbb{E}\left[\left\|\boldsymbol{\theta}^{t}-\boldsymbol{\theta}^{t-1}\right\|^2\right]+\frac{2 \gamma^2 \sigma_l^2}{\epsilon^2 S K}+4  \frac{\gamma}{\epsilon^2} L^2 U_t \\
	& \leq\left(1-\frac{8 \gamma}{9}\right) \mathcal{E}_{t-1}+4 \frac{\gamma^2 L^2}{\gamma} \mathbb{E}\left[\left\|\nabla f\left(\boldsymbol{\theta}^{t-1}\right)\right\|^2\right]+\frac{2 \gamma^2 \sigma_l^2}{\epsilon^2 S K}+4 \frac{\gamma}{\epsilon^2} L^2 U_t,
\end{aligned}
$$
where we plug in $\left\|\boldsymbol{\theta}^{t}-\boldsymbol{\theta}^{t-1}\right\|^2 \leq 2 \gamma^2\left(\left\|\nabla f\left(\boldsymbol{\theta}^{t-1}\right)\right\|^2+\left\|{\Delta}^{t}_G -\nabla f\left(\boldsymbol{\theta}^{t-1}\right)\right\|^2\right)$ and use $\gamma L \leq \frac{\gamma}{6}$ in the last inequality. Similarly for $t=0$,
$$
\begin{aligned}
	\mathcal{E}_0 & \leq\left(1+\frac{\gamma}{2}\right) \mathbb{E}\left[\left\|(1-\gamma)\left(\nabla f\left(\boldsymbol{\theta}^0\right)-g^0\right)\right\|^2\right]+2 \frac{\gamma}{\epsilon^2} L^2 U_0+2 \frac{\gamma^2}{\epsilon^2}\left(\frac{\sigma_l^2}{S K}+L^2 U_0\right) \\
	& \leq(1-\gamma) \mathcal{E}_{-1}+\frac{2 \gamma^2  \sigma_l^2}{S K\epsilon^2}+4 \frac{\gamma}{\epsilon^2} L^2 U_0.
\end{aligned}
$$
\end{proof}

\begin{lemma}
	If $\eta L K \leq \frac{1}{\gamma}$, the following holds for $t \geq 0$ :
	
	$$
	U_t \leq 2 e K^2 \Xi_t+K \eta^2 \gamma^2 \frac{1}{\epsilon^2} \sigma_l^2\left(1+2 K^3 L^2 \eta^2 \gamma^2\right).
	$$
\end{lemma}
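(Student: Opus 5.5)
We bound the local drift $U_t$ by unrolling the local recursion over $k$. This is the standard SCAFFOLD/FedAvg-type drift argument, adapted to preconditioned steps whose coordinates are bounded by $G_\vartheta\le 1/\epsilon$.

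\textbf{Step 1: decompose each local step.} Write $\boldsymbol{\theta}_i^{t,k+1}-\boldsymbol{\theta}_i^{t,k}=\xi_i^{t,k}+\zeta_i^{t,k}$, where $\xi_i^{t,k}$ is the conditional mean from the notation section and $\zeta_i^{t,k}$ is a martingale-difference noise term. Since the step is $\eta\gamma$ times a stochastic gradient rescaled elementwise by $\vartheta_i^{t,k}$, Assumption~\ref{bounded_stochastic_gradient_I} and $\|\vartheta_i^{t,k}\|_\infty\le 1/\epsilon$ give
$$\mathbb{E}\|\zeta_i^{t,k}\|^2\le \eta^2\gamma^2\sigma_l^2/\epsilon^2 .$$

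\textbf{Step 2: compare the mean steps.} The alignment term $\gamma\boldsymbol{\Delta}_G^t$ is fixed within the round, so it cancels in $\xi_i^{t,j}-\xi_i^{t,0}$. The remaining difference involves only gradients at different local iterates. Using Assumption~\ref{smoothness} and treating the slowly varying preconditioner as bounded by $1/\epsilon$ and absorbed into the step, we get
$$\|\xi_i^{t,j}-\xi_i^{t,0}\|\le \eta\gamma L\,\|\boldsymbol{\theta}_i^{t,j}-\boldsymbol{\theta}^t\| ,$$
up to the constant conventions the paper has been using.

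\textbf{Step 3: set up a recursion in $k$.} Write
$$\boldsymbol{\theta}_i^{t,k}-\boldsymbol{\theta}^t=\sum_{j<k}\bigl(\xi_i^{t,0}+(\xi_i^{t,j}-\xi_i^{t,0})+\zeta_i^{t,j}\bigr).$$
Apply Lemma~\ref{lem:bias-var} in its martingale form to the $\zeta$ part, and Young's inequality with parameter $1/(K-1)$ to separate the drift part from the $\xi_i^{t,0}$ part. This yields
$$\mathbb{E}\|\boldsymbol{\theta}_i^{t,k+1}-\boldsymbol{\theta}^t\|^2\le\Bigl(1+\tfrac1{K-1}\Bigr)\mathbb{E}\|\boldsymbol{\theta}_i^{t,k}-\boldsymbol{\theta}^t\|^2+K\,\mathbb{E}\|\xi_i^{t,0}\|^2+(\text{noise}).$$
The term $K\eta^2\gamma^2L^2\|\boldsymbol{\theta}_i^{t,k}-\boldsymbol{\theta}^t\|^2$ is absorbed into the contraction factor using $\eta LK\le 1/\gamma$. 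Unrolling over $k\le K$ and using $(1+\frac1{K-1})^K\le e$ gives the leading term $2eK^2\,\mathbb{E}\|\xi_i^{t,0}\|^2$. Averaging over $i$ and $k$ then produces $2eK^2\Xi_t$.

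\textbf{Step 4: the noise term.} The accumulated martingale noise contributes $K\eta^2\gamma^2\sigma_l^2/\epsilon^2$. Its feedback through smoothness produces the higher-order correction $2K^3L^2\eta^2\gamma^2$ times the same quantity. Together these give the second term in the statement.

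\textbf{Main obstacle.} The delicate step is Step 2. The preconditioner $\vartheta_i^{t,k}$ changes with $k$ and is correlated with the gradients, so $\xi_i^{t,j}-\xi_i^{t,0}$ is not a pure gradient difference. My first approach is to use only the uniform bounds $1/G_g\le\vartheta\le 1/\epsilon$ and fold the variation of $\vartheta$ into the constant. Failing that, I would treat $\hat{\boldsymbol{m}}\odot\vartheta$ as a bounded perturbation of $\nabla f_i\odot\vartheta$ and pay an extra $O(G_g^2/\epsilon^2)$ term that is absorbed under the step-size condition.
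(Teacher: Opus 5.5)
Your anchored recursion (compare $\xi_i^{t,k}$ with $\xi_i^{t,0}$, then recurse on the drift $\mathbb{E}\|\boldsymbol{\theta}_i^{t,k}-\boldsymbol{\theta}^t\|^2$) is a legitimate alternative to the paper's argument. In the paper's idealized model it gives $U_t\lesssim K^2\Xi_t+K\eta^2\gamma^2\sigma_l^2/\epsilon^2$. However, the steps meant to deliver the stated constants do not hold as written.

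\textbf{Absorption step.} Under $\eta\gamma LK\le 1$, the term $K\eta^2\gamma^2L^2\,\mathbb{E}\|\boldsymbol{\theta}_i^{t,k}-\boldsymbol{\theta}^t\|^2$ is only bounded by $\frac1K\mathbb{E}\|\boldsymbol{\theta}_i^{t,k}-\boldsymbol{\theta}^t\|^2$. Absorbing it turns your factor into $1+\frac{1}{K-1}+\frac1K$. It is larger still if you keep the factor $2$ from splitting $\xi_i^{t,k}=\xi_i^{t,0}+(\xi_i^{t,k}-\xi_i^{t,0})$, and much larger if you keep the $\epsilon^{-2}$. The product over $K-1$ steps is then roughly $e^2$ to $e^3$, not $e$. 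Also, $(1+\frac{1}{K-1})^K>e$ for every $K$, though you only need $K-1$ unrolls.

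\textbf{Steps 3 and 4.} Step~3 mixes two devices. Once you peel off the martingale part with Lemma~\ref{lem:bias-var}, you no longer have a one-step recursion for $\mathbb{E}\|\boldsymbol{\theta}_i^{t,k+1}-\boldsymbol{\theta}^t\|^2$. In the recursion you actually display, the noise sits inside $\mathbb{E}\|\boldsymbol{\theta}_i^{t,k}-\boldsymbol{\theta}^t\|^2$ and is multiplied by the growth factor. No separate correction $2K^3L^2\eta^2\gamma^2$ comes out of it, so Step~4 is matched to the target rather than derived.

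\textbf{How the paper gets the stated form.} It uses a different ordering. It recurses on the mean steps themselves via \emph{consecutive} differences,
$\mathbb{E}\|\xi_i^{t,j}-\xi_i^{t,j-1}\|^2\le\epsilon^{-2}\eta^2\gamma^2L^2(\mathbb{E}\|\xi_i^{t,j-1}\|^2+\eta^2\gamma^2\sigma_l^2)$.
Because each step couples only to the previous one, this closes under exactly $\eta\gamma LK\le 1$ (modulo an $\epsilon^{-2}$ it silently drops). It gives $\mathbb{E}\|\xi_i^{t,j}\|^2\le e^2\mathbb{E}\|\xi_i^{t,0}\|^2+4\epsilon^{-2}k^2L^2\eta^4\gamma^4\sigma_l^2$. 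The paper then applies the martingale form of Lemma~\ref{lem:bias-var} once to the whole sum and Cauchy--Schwarz to $\|\sum_{j<k}\xi_i^{t,j}\|^2$. This yields pointwise $2e^2k^2\mathbb{E}\|\xi_i^{t,0}\|^2+2k\epsilon^{-2}\eta^2\gamma^2\sigma_l^2(1+4k^3L^2\eta^2\gamma^2)$. Averaging over $k$, as in $U_t$, turns the coefficients into at most $2eK^2$, $K$ and $2K^4$. If you keep your route, you must either tighten the step-size condition or exploit that same averaging. This is harmless for the downstream $\lesssim$ theorem, but the lemma as stated has explicit constants.

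\textbf{The preconditioner obstacle.} The obstacle you flag is real, and the paper does not overcome it. The consecutive-difference bound above is simply asserted, which amounts to freezing $\vartheta$. The same idealized model also drops momentum, weight decay, clipping and the DP noise from the variance, as your Step~1 does. Your ``first approach'' is that same convention, and under it either route goes through.

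Your fallbacks would not prove the stated inequality:
\begin{itemize}
\item The term $\eta\gamma\nabla f_i(\boldsymbol{\theta}^t)\odot(\vartheta_i^{t,j}-\vartheta_i^{t,0})$ is not proportional to $\|\boldsymbol{\theta}_i^{t,j}-\boldsymbol{\theta}^t\|$. Uniform bounds on $\vartheta$ therefore cannot fold it into the Lipschitz constant.
\item An additive $O(K^2\eta^2\gamma^2G_g^2/\epsilon^2)$ error cannot be absorbed by any step-size condition. The right-hand side vanishes when $\sigma_l=0$ and $\xi_i^{t,0}=0$ for all $i$, while such a term need not.
\end{itemize}
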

\begin{proof}
Recall that $\xi_i^{t, k}:=\mathbb{E}\left[\boldsymbol{\theta}_i^{t, k+1}-\boldsymbol{\theta}_i^{t, k} \mid \mathcal{F}_i^{t, k}\right]=-\eta\left((1-\gamma) {\Delta}^{t}_G +\gamma \nabla f_i\left(\boldsymbol{\theta}_i^{t, k}\right)\odot \vartheta_i^{t, k}\right)$. Then we have
$$
\begin{aligned}
	\mathbb{E}\left[\left\|\xi_i^{t, j}-\xi_i^{t, j-1}\right\|^2\right] & \leq \frac{1}{\epsilon^2}\eta^2 L^2 \gamma^2 \mathbb{E}\left[\left\|\boldsymbol{\theta}_i^{t, j}-\boldsymbol{\theta}_i^{t, j-1}\right\|^2\right] \\
	& \leq \frac{1}{\epsilon^2}\eta^2 L^2 \gamma^2\left(\eta^2 \gamma^2 \sigma_l^2+\mathbb{E}\left[\left\|\xi_i^{t, j-1}\right\|^2\right)\right..
\end{aligned}
$$
For any $1 \leq j \leq k-1 \leq K-2$, using $\eta L \leq \frac{1}{\gamma K} \leq \frac{1}{\gamma(k+1)}$, we have
$$
\begin{aligned}
	\mathbb{E}\left[\left\|\xi_i^{t, j}\right\|^2\right] & \leq\left(1+\frac{1}{k}\right) \mathbb{E}\left[\left\|\xi_i^{t, j-1}\right\|^2\right]+(1+k) \mathbb{E}\left[\left\|\xi_i^{t, j}-\xi_i^{t, j-1}\right\|^2\right] \\
	& \leq\left(1+\frac{2}{k}\right) \mathbb{E}\left[\left\|\xi_i^{t, j-1}\right\|^2\right]+(k+1)\frac{1}{\epsilon^2} L^2 \eta^4 \gamma^4 \sigma_l^2 \\
	& \leq e^2 \mathbb{E}\left[\left\|\xi_i^{t, 0}\right\|^2\right]+4 \frac{1}{\epsilon^2}k^2 L^2 \eta^4 \gamma^4 \sigma_l^2.
\end{aligned}
$$
where the last inequality is by unrolling the recursive bound and using $\left(1+\frac{2}{k}\right)^k \leq e^2$. By Lemma \ref{lem:bias-var} , it holds that for $k \geq 2$,
$$
\begin{aligned}
	\mathbb{E}\left[\left\|\boldsymbol{\theta}_i^{t, k}-\boldsymbol{\theta}^{t}\right\|^2\right] & \leq 2 \mathbb{E}\left[\left\|\sum_{j=0}^{k-1} \xi_i^{t, j}\right\|^2\right]+2 \frac{1}{\epsilon^2}k \eta^2 \gamma^2 \sigma_l^2 \\
	& \leq 2 k \sum_{j=0}^{k-1} \mathbb{E}\left[\left\|\xi_i^{t, k}\right\|^2\right]+2 \frac{1}{\epsilon^2}k \eta^2 \gamma^2 \sigma_l^2 \\
	& \leq 2 e^2 k^2 \mathbb{E}\left[\left\|\xi_i^{t, 0}\right\|^2\right]+2 \frac{1}{\epsilon^2}k \eta^2 \gamma^2 \sigma_l^2\left(1+4 k^3 L^2 \eta^2 \gamma^2\right).
\end{aligned}
$$
This is also valid for $k=0,1$. Summing up over $i$ and $k$, we finish  the proof.
\end{proof}

\begin{lemma}\label{lem:Fedavg_grad_norm}
	If $288 e(\eta K L)^2\left((1-\gamma)^2+e(\gamma \gamma L R)^2\right) \leq 1$, then it holds for $t \geq 0$ that
	
	$$
	\sum_{t=0}^{R-1} \Xi_t \leq \frac{1}{72 e K^2 L^2} \sum_{t=-1}^{R-2}\left(\mathcal{E}_t+\mathbb{E}\left[\left\|\nabla f\left(\boldsymbol{\theta}^{t}\right)\right\|^2\right]\right)+2 \eta^2 \gamma^2\frac{1}{\epsilon^2} e R G_0.
	$$
\end{lemma}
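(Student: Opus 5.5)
We need to bound $\sum_t \Xi_t$, where $\Xi_t=\frac1N\sum_i\mathbb{E}\|\xi_i^{t,0}\|^2$. From the proof of the previous lemma, the first conditional drift is
\[
\xi_i^{t,0}=-\eta\bigl((1-\gamma)\Delta_G^t+\gamma\nabla f_i(\boldsymbol{\theta}^t)\odot\vartheta_i^{t,0}\bigr).
\]
The plan is to split this vector into a part common to all clients, which is controlled by $\mathcal{E}_{t-1}$ and $\|\nabla f(\boldsymbol{\theta}^{t-1})\|^2$, and a client-specific part. The client-specific part is tied back to the initial dissimilarity $G_0$ through smoothness along the global trajectory. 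This avoids any heterogeneity assumption.

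\textbf{Step 1 (decomposition).} Write $\Delta_G^t=\nabla f(\boldsymbol{\theta}^{t-1})\odot\bar\vartheta^{t-1}-\bigl(\nabla f(\boldsymbol{\theta}^{t-1})\odot\bar\vartheta^{t-1}-\Delta_G^t\bigr)$, with $\bar\vartheta^{t-1}=\frac{1}{SK}\sum_{i,k}\vartheta_i^{t-1,k}$. The second bracket has squared expectation $\mathcal{E}_{t-1}$. Using $\|\vartheta\|_\infty\le 1/\epsilon$ and Young's inequality, this gives
\[
\Xi_t\le 3\eta^2\Bigl[(1-\gamma)^2\bigl(\mathcal{E}_{t-1}+\tfrac{1}{\epsilon^2}\mathbb{E}\|\nabla f(\boldsymbol{\theta}^{t-1})\|^2\bigr)+\tfrac{\gamma^2}{\epsilon^2}\tfrac1N\textstyle\sum_i\mathbb{E}\|\nabla f_i(\boldsymbol{\theta}^t)\|^2\Bigr].
\]
For $t=0$, use $\Delta_G^0=g^0=0$ and the convention $\mathcal{E}_{-1}$, so the first term reduces to $\mathcal{E}_{-1}$.

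\textbf{Step 2 (client gradients via $G_0$).} By Assumption~\ref{smoothness},
\[
\tfrac1N\textstyle\sum_i\|\nabla f_i(\boldsymbol{\theta}^t)\|^2\le 2G_0+2L^2\|\boldsymbol{\theta}^t-\boldsymbol{\theta}^0\|^2 .
\]
Telescoping $\boldsymbol{\theta}^t-\boldsymbol{\theta}^0=\sum_{\tau<t}(\boldsymbol{\theta}^{\tau+1}-\boldsymbol{\theta}^\tau)$ with Cauchy--Schwarz gives a factor $t\le R$. Each increment is bounded, as in the proof of Lemma~\ref{lem:Fedavg_client_drift}, by $\|\boldsymbol{\theta}^{\tau+1}-\boldsymbol{\theta}^\tau\|^2\lesssim \gamma^2\bigl(\|\nabla f(\boldsymbol{\theta}^\tau)\|^2+\mathcal{E}_\tau\bigr)$, with step scaling $\eta K$. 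Summing over $t=0,\dots,R-1$ then produces
\[
\textstyle\sum_t\Xi_t\lesssim \eta^2\bigl((1-\gamma)^2+e(\gamma\gamma LR)^2\bigr)\sum_{t=-1}^{R-2}\bigl(\mathcal{E}_t+\mathbb{E}\|\nabla f(\boldsymbol{\theta}^t)\|^2\bigr)+2e\eta^2\gamma^2\epsilon^{-2}RG_0 .
\]
The $R^2$ arises from the double sum over $t$ and $\tau<t$.

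\textbf{Step 3 (closing with the step-size condition).} Multiply and divide by $K^2L^2$. The hypothesis $288e(\eta KL)^2\bigl((1-\gamma)^2+e(\gamma\gamma LR)^2\bigr)\le1$ turns the coefficient of the bracketed sum into at most $\frac{1}{72eK^2L^2}$, which is the claimed bound.

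\textbf{Main obstacle.} I expect the difficulty to lie in the index bookkeeping of Step 2. The sum of $\|\boldsymbol{\theta}^t-\boldsymbol{\theta}^0\|^2$ must be shifted to run over $\tau=-1,\dots,R-2$, and the constants (the $3$ from Young's inequality, the $2$ from smoothness, and $R$ versus $R^2$) must be matched to the $288e$ in the condition. Some slack, such as $e\ge1$ and absorbing $\epsilon^{-2}$ factors into $\eta$, will likely be needed. If the constants do not line up exactly, I would settle for the same bound up to absolute constants, consistent with the $\lesssim$ in Theorem~\ref{theorem_convergence_rate1}.
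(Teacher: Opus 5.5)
Your proposal is correct and is essentially the paper's argument. It uses the same split of $\xi_i^{t,0}$ into the $\Delta_G^t$ part (controlled by $\mathcal{E}_{t-1}$ and $\mathbb{E}\|\nabla f(\boldsymbol{\theta}^{t-1})\|^2$) and the $\nabla f_i(\boldsymbol{\theta}^t)$ part. It also uses the same increment bound $\mathbb{E}\|\boldsymbol{\theta}^{t}-\boldsymbol{\theta}^{t-1}\|^2\le 2\gamma^2\bigl(\mathcal{E}_{t-1}+\mathbb{E}\|\nabla f(\boldsymbol{\theta}^{t-1})\|^2\bigr)$ and gets the same $R^2$ from the double sum. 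The only variation is that you compare $\nabla f_i(\boldsymbol{\theta}^t)$ directly with $\nabla f_i(\boldsymbol{\theta}^0)$ and apply Cauchy--Schwarz to the telescoped increments, whereas the paper unrolls a one-step Young recursion with $q=1/t$, which is where its factors $e$ come from.

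As written, your factor-$2$ split makes the $G_0$ coefficient $6\eta^2\gamma^2\epsilon^{-2}R$ rather than $2e\,\eta^2\gamma^2\epsilon^{-2}R$. The weighted split $(1+\beta)\|\nabla f_i(\boldsymbol{\theta}^0)\|^2+(1+\beta^{-1})L^2\|\boldsymbol{\theta}^t-\boldsymbol{\theta}^0\|^2$ with $\beta=1/2$ repairs this. It gives $\tfrac92\le 2e$ on the $G_0$ term and $3(1-\gamma)^2+9(\gamma\gamma LR)^2\le 4\bigl((1-\gamma)^2+e(\gamma\gamma LR)^2\bigr)$ for the remaining coefficient, so the exact constants follow. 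The paper handles the $\epsilon^{-2}$ factors no more carefully than your proposed absorption into $\eta$.
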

\begin{proof}
 Note that $\xi_i^{t, 0}=-\eta\left((1-\gamma) {\Delta}^{t}_G +\gamma \nabla f_i\left(\boldsymbol{\theta}^{t}\right)\odot \vartheta_i^{t, k}\right)$,
$$
\frac{1}{N} \sum_{i=1}^N\left\|\xi_i^{t, 0}\right\|^2 \leq 2 \eta^2\left((1-\gamma)^2\left\|{\Delta}^{t}_G \right\|^2+\gamma^2 \frac{1}{N} \sum_{i=1}^N\left\|\nabla f_i\left(\boldsymbol{\theta}^{t}\right)\right\|^2\right).
$$
Using Young's inequality, we have for any $q>0$ that
$$
\begin{aligned}
	\mathbb{E}\left[\left\|\nabla f_i\left(\boldsymbol{\theta}^{t}\right)\right\|^2\right] & \leq(1+q) \mathbb{E}\left[\left\|\nabla f_i\left(\boldsymbol{\theta}^{t-1}\right)\right\|^2\right]+\left(1+q^{-1}\right) L^2 \mathbb{E}\left[\left\|\boldsymbol{\theta}^{t}-\boldsymbol{\theta}^{t-1}\right\|^2\right] \\
	& \leq(1+q) \mathbb{E}\left[\left\|\nabla f_i\left(\boldsymbol{\theta}^{t-1}\right)\right\|^2\right]+2\left(1+q^{-1}\right) \gamma^2 L^2\left(\mathcal{E}_{t-1}+\mathbb{E}\left[\left\|\nabla f\left(\boldsymbol{\theta}^{t-1}\right)\right\|^2\right]\right) \\
	& \leq(1+q)^t \mathbb{E}\left[\left\|\nabla f_i\left(\boldsymbol{\theta}^0\right)\right\|^2\right]+\frac{2}{q} \gamma^2 L^2 \sum_{j=0}^{t-1}\left(\mathcal{E}_j+\mathbb{E}\left[\left\|\nabla f\left(\boldsymbol{\theta}^j\right)\right\|^2\right)(1+q)^{t-j}\right..
\end{aligned}
$$
Take $q=\frac{1}{t}$ and we have

\begin{equation}\label{eqn:vninbvsdvds}
\mathbb{E}\left[\left\|\nabla f_i\left(\boldsymbol{\theta}^{t}\right)\right\|^2\right] \leq e \mathbb{E}\left[\left\|\nabla f_i\left(\boldsymbol{\theta}^0\right)\right\|^2\right]+2 e(t+1) \gamma^2 L^2 \sum_{j=0}^{t-1}\left(\mathcal{E}_j+\mathbb{E}\left[\left\|\nabla f\left(\boldsymbol{\theta}^j\right)\right\|^2\right)\right..
\end{equation}
Note that this inequality is valid for $t=0$. Therefore, using \eqref{eqn:vninbvsdvds}, we have
$$
\begin{aligned}
	\sum_{t=0}^{R-1} \Xi_t \leq & \sum_{t=0}^{R-1} 2 \eta^2 \mathbb{E}\left[(1-\gamma)^2\left\|{\Delta}^{t}_G \right\|^2+\gamma^2\frac{1}{\epsilon^2} \frac{1}{N} \sum_{i=1}^N\left\|\nabla f_i\left(\boldsymbol{\theta}^{t}\right)\right\|^2\right] \\
	\leq & \sum_{t=0}^{R-1} 2 \eta^2\left(2(1-\gamma)^2\left(\mathcal{E}_{t-1}+\mathbb{E}\left[\left\|\nabla f\left(\boldsymbol{\theta}^{t-1}\right)\right\|^2\right]\right)+\gamma^2\frac{1}{\epsilon^2} \frac{1}{N} \sum_{i=1}^N \mathbb{E}\left[\left\|\nabla f_i\left(\boldsymbol{\theta}^{t}\right)\right\|^2\right]\right) \\
	\leq & \sum_{t=0}^{R-1} 4 \eta^2(1-\gamma)^2\left(\mathcal{E}_{t-1}+\mathbb{E}\left[\left\|\nabla f\left(\boldsymbol{\theta}^{t-1}\right)\right\|^2\right]\right) \\
	& +2 \eta^2 \gamma^2\frac{1}{\epsilon^2} \sum_{t=0}^{R-1}\left(\frac{e}{N} \sum_{i=1}^N \mathbb{E}\left[\left\|\nabla f_i\left(\boldsymbol{\theta}^0\right)\right\|^2\right]+2 e(t+1)(\gamma L)^2 \sum_{j=0}^{t-1}\left(\mathcal{E}_j+\mathbb{E}\left[\left\|\nabla f\left(\boldsymbol{\theta}^j\right)\right\|^2\right]\right)\right) \\
	\leq & 4 \eta^2(1-\gamma)^2 \sum_{t=0}^{R-1}\left(\mathcal{E}_{t-1}+\mathbb{E}\left[\left\|\nabla f\left(\boldsymbol{\theta}^{t-1}\right)\right\|^2\right]\right) \\
	& +2 \eta^2 \gamma^2\frac{1}{\epsilon^2}\left(e R G_0+2 e(\gamma L R)^2 \sum_{t=0}^{R-2}\left(\mathcal{E}_t+\mathbb{E}\left[\left\|\nabla f\left(\boldsymbol{\theta}^{t}\right)\right\|^2\right]\right)\right).
\end{aligned}
$$
Rearranging the equation and applying the upper bound of $\eta$,  we finish the proof.
\end{proof}

\begin{theorem}[Convergence for non-convex functions]	
	Under Assumption \ref{smoothness} and \ref{bounded_stochastic_gradient_I} and \ref{bounded_stochastic_gradient_II} , if we take $g^0=0$,
	$$
	\begin{aligned}
		& \gamma=\min \left\{, \sqrt{\frac{S K L \Delta \epsilon^2}{\sigma_l^2 R}}\right\} \text { for any constant } c \in(0,1], \quad \gamma=\min \left\{\frac{1}{24 LG_g}, \frac{\gamma}{6 L}\right\}, \\
		& \eta K L \lesssim \min \left\{1, \frac{1}{\gamma \gamma L R},\left(\frac{L \Delta}{G_0 \gamma^3 R}\right)^{1 / 2}, \frac{1}{(\gamma N)^{1 / 2}}, \frac{1}{\left(\gamma^3 N K\right)^{1 / 4}}\right\}.
	\end{aligned}
	$$
	then DP-FedAdamW converges as
	
	$$
	\frac{1}{R} \sum_{t=0}^{R-1} \mathbb{E}\left[\left\|\nabla f\left(\boldsymbol{\theta}^{t}\right)\right\|^2\right] \lesssim \sqrt{\frac{L \Delta \sigma_l^2}{S K R}}+\frac{L \Delta}{R}+\frac{\sigma^2 G_{g}^2}{s^2 R^2} .
	$$

	Here $G_0:=\frac{1}{N} \sum_{i=1}^N\left\|\nabla f_i\left(\boldsymbol{\theta}^0\right)\right\|^2$.
\end{theorem}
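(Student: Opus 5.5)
The plan is to follow the standard SCAFFOLD/FedAvg-with-momentum template already laid out by Lemmas \ref{lem:Fedavg_grad_err}--\ref{lem:Fedavg_grad_norm}, and combine them into a Lyapunov potential of the form
\[
\Phi_t := \mathbb{E}[f(\boldsymbol{\theta}^t)] + c\,\frac{\gamma}{\gamma}\,\mathcal{E}_{t-1},
\]
with $c$ an absolute constant (e.g.\ $c=\tfrac{13}{24}\cdot\tfrac{9}{8}\cdot 2$) chosen so that the $\mathcal{E}_t$ term produced by Lemma \ref{lem:Fedavg_grad_err} is absorbed by the contraction factor $(1-\tfrac{8\gamma}{9})$ from Lemma \ref{lem:Fedavg_client_drift}.

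\textbf{Step 1 (one-round descent).} I will add Lemma \ref{lem:Fedavg_grad_err} to $c$ times Lemma \ref{lem:Fedavg_client_drift}. The gradient term $\tfrac{4\gamma^2L^2}{\gamma}\mathbb{E}\|\nabla f(\boldsymbol{\theta}^{t-1})\|^2$ is absorbed into $-\tfrac{11\gamma}{24}\mathbb{E}\|\nabla f\|^2$ using the step-size condition $\gamma\le \gamma/(6L)$, which leaves something like $-\tfrac{\gamma}{4}\mathbb{E}\|\nabla f(\boldsymbol{\theta}^t)\|^2$. The leftover terms are:
\begin{itemize}
\item the variance term $\tfrac{2\gamma^2\sigma_l^2}{SK\epsilon^2}$;
\item the drift term $\tfrac{4\gamma}{\epsilon^2}L^2U_t$;
\item the DP term $\tfrac{13\gamma}{24}\tfrac{\sigma^2G_g^2}{s^2R^2}$.
\end{itemize}

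\textbf{Step 2 (controlling drift).} Sum over $t=0,\dots,R-1$. Bound $\sum_t U_t$ by the $U_t$ lemma, which gives $2eK^2\sum_t\Xi_t$ plus a variance piece. Then bound $\sum_t\Xi_t$ by Lemma \ref{lem:Fedavg_grad_norm}, which yields $\tfrac{1}{72eK^2L^2}\sum_t(\mathcal{E}_t+\mathbb{E}\|\nabla f\|^2)+2e\eta^2\gamma^2 RG_0/\epsilon^2$.

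After multiplying by $4\gamma L^2/\epsilon^2$, the recycled $\mathcal{E}_t$ and $\|\nabla f\|^2$ sums carry coefficient $O(\gamma/\epsilon^2)\cdot\tfrac{1}{18}$. These should be absorbed into the negative terms; this is where the smallness of $\eta K L$ is used, and it may require treating $\epsilon$ as a constant or tightening the constants.

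The residual $G_0$ and $\sigma_l^2$ pieces scale with $\eta^2K^2L^2$. The conditions $\eta KL\lesssim (L\Delta/(G_0\gamma^3R))^{1/2}$ and $\eta KL\lesssim (\gamma^3NK)^{-1/4}$ make them at most $O(L\Delta/R)$ and $O(\gamma\sigma_l^2/(SK))$ respectively.

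\textbf{Step 3 (telescoping and tuning).} Telescope $\Phi$. Using $g^0=0$ gives $\mathcal{E}_{-1}\le\|\nabla f(\boldsymbol{\theta}^0)\|^2\lesssim L\Delta$, and $\Phi_0-\Phi_R\le\Delta+O(\mathcal{E}_{-1})$. Dividing by $\gamma R/4$ yields
\[
\frac1R\sum_t\mathbb{E}\|\nabla f(\boldsymbol{\theta}^t)\|^2 \lesssim \frac{\Delta}{\gamma R}+\frac{\gamma\sigma_l^2}{SK\epsilon^2}+\frac{\sigma^2G_g^2}{s^2R^2}.
\]
The choice $\gamma=\min\{\Theta(1/L),\sqrt{SKL\Delta\epsilon^2/(\sigma_l^2R)}\}$ balances the first two terms into $\sqrt{L\Delta\sigma_l^2/(SKR\epsilon^2)}+L\Delta/R$. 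The DP term passes through unchanged, which gives the claimed rate.

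\textbf{Main obstacle.} I expect the hardest part to be Step 2's absorption: the recycled $\mathcal{E}_t$ sum must have coefficient strictly smaller than the contraction gap $\tfrac{8\gamma}{9}c$. I would first try shrinking the constant in the $\eta KL$ condition (the $288e$ factor) until the inequality closes, and check that the $1/\epsilon^2$ and $G_g$ factors coming from $G_\vartheta\in[1/G_g,1/\epsilon]$ can be hidden in $\lesssim$.
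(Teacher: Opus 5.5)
Your proposal is correct at the paper's level of rigor and takes essentially the same route: combine the one-round descent lemma with the $\mathcal{E}_t$ recursion, control $U_t$ and $\sum_t\Xi_t$ through the two drift lemmas, absorb the recycled $\mathcal{E}_t$ and gradient-norm sums, telescope using $\mathcal{E}_{-1}\le 2L\Delta$ (from $g^0=0$), and balance the $1/\gamma$ and $\gamma$ terms. The only difference is bookkeeping. You fold the combination into a Lyapunov potential, while the paper sums the $\mathcal{E}_t$ recursion, solves it as $\sum_t\mathcal{E}_t\le\frac{9}{7\gamma}\mathcal{E}_{-1}+\frac{2}{7}\sum_t\mathbb{E}\|\nabla f(\boldsymbol{\theta}^t)\|^2+\cdots$, and substitutes into the summed descent inequality, which is the same linear combination; the paper also drops the $1/\epsilon^2$ factors you flag without comment (it writes the recycled coefficient as $\gamma/9$ rather than $\gamma/(9\epsilon^2)$), so hiding them in $\lesssim$ through a tighter $\eta KL$ condition is no weaker than the paper's argument.
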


\begin{proof}
 Combining Lemma \ref{lem:Fedavg_grad_err} and \ref{lem:Fedavg_client_drift}, we have

$$
\begin{aligned}
	\mathcal{E}_t \leq & \left(1-\frac{8 \gamma}{9}\right) \mathcal{E}_{t-1}+4 \frac{(\gamma L)^2}{\gamma} \mathbb{E}\left[\left\|\nabla f\left(\boldsymbol{\theta}^{t-1}\right)\right\|^2\right]+\frac{2 \gamma^2 \sigma_l^2}{S K \epsilon^2} \\
	& +4 \frac{\gamma}{\epsilon^2} L^2\left(2 e K^2 \Xi_t+K \eta^2 \gamma^2 \frac{1}{\epsilon^2} \sigma_l^2\left(1+2 K^3 L^2 \eta^2 \gamma^2\right)\right.,
\end{aligned}
$$
and
$$
\mathcal{E}_0 \leq(1-\gamma) \mathcal{E}_{-1}+\frac{2 \gamma^2 \sigma_l^2}{S K \epsilon^2}+4 \gamma \frac{1}{\epsilon^2} L^2\left(2 e K^2 \Xi_0+K \eta^2 \gamma^2 \frac{1}{\epsilon^2} \sigma_l^2\left(1+2 K^3 L^2 \eta^2 \gamma^2\right)\right).
$$
Summing over $t$ from 0 to $R-1$ and applying Lemma \ref{lem:Fedavg_grad_norm},
$$
\begin{aligned}
	\sum_{t=0}^{R-1} \mathcal{E}_t \leq & \left(1-\frac{8 \gamma}{9}\right) \sum_{t=-1}^{R-2} \mathcal{E}_t+4 \frac{(\gamma L)^2}{\gamma} \sum_{t=0}^{R-2} \mathbb{E}\left[\left\|\nabla f\left(\boldsymbol{\theta}^{t}\right)\right\|^2\right]+2 \frac{\gamma^2 \sigma_l^2}{S K \epsilon^2} R \\
	& +4 \gamma\frac{1}{\epsilon^2} L^2\left(2 e K^2 \sum_{t=0}^{R-1} \Xi_t+R K \eta^2 \gamma^2\frac{1}{\epsilon^2} \sigma_l^2\left(1+2 K^3 L^2 \eta^2 \gamma^2\right)\right) \\
	\leq & \left(1-\frac{7 \gamma}{9}\right) \sum_{t=-1}^{R-2} \mathcal{E}_t+\left(4 \frac{(\gamma L)^2}{\gamma}+\frac{\gamma}{9}\right) \sum_{t=-1}^{R-2} \mathbb{E}\left[\left\|\nabla f\left(\boldsymbol{\theta}^{t}\right)\right\|^2\right]+16 \gamma^3 \frac{1}{\epsilon^4}(e \eta K L)^2 R G_0 \\
	& +\frac{2 \gamma^2 \sigma_l^2}{S K \epsilon^2} R+4 \gamma^3\frac{1}{\epsilon^4}(\eta K L)^2\left(\frac{1}{K}+2(\eta K L \gamma)^2\right) \sigma_l^2 R \\
	\leq & \left(1-\frac{7 \gamma}{9}\right) \sum_{t=-1}^{R-2} \mathcal{E}_t+\frac{2 \gamma}{9} \sum_{t=-1}^{R-2} \mathbb{E}\left[\left\|\nabla f\left(\boldsymbol{\theta}^{t}\right)\right\|^2\right]+16 \gamma^3 \frac{1}{\epsilon^4}(e \eta K L)^2 R G_0+\frac{4 \gamma^2 \sigma_l^2}{S K \epsilon^2} R.
\end{aligned}
$$
Here in the last inequality we apply
$$
4 \gamma \frac{1}{\epsilon^4}(\eta K L)^2\left(\frac{1}{K}+2(\eta K L \gamma)^2\right) \leq \frac{2}{N K} \quad \text { and } \quad \gamma L \leq \frac{\gamma}{6} .
$$
Therefore,
$$
\sum_{t=0}^{R-1} \mathcal{E}_t \leq \frac{9}{7 \gamma} \mathcal{E}_{-1}+\frac{2}{7} \mathbb{E}\left[\sum_{t=-1}^{R-2}\left\|\nabla f\left(\boldsymbol{\theta}^{t}\right)\right\|^2\right]+\frac{144}{7}\frac{1}{\epsilon^4}(e \gamma \eta K L)^2 G_0 R+\frac{36 \gamma \sigma_l^2}{7 S K\epsilon^2} R .
$$
Combine this inequality with Lemma \ref{lem:Fedavg_grad_err} and we get
$$
\frac{1}{\gamma} \mathbb{E}\left[f\left(\boldsymbol{\theta}^{t}\right)-f\left(\boldsymbol{\theta}^0\right)\right] \leq-\frac{1}{7} \sum_{t=0}^{R-1} \mathbb{E}\left[\left\|\nabla f\left(\boldsymbol{\theta}^{t}\right)\right\|^2\right]+\frac{39}{56 \gamma} \mathcal{E}_{-1}+\frac{78}{7}\frac{1}{\epsilon^4}(e \gamma \eta K L)^2 G_0 R+\frac{39 \gamma \sigma_l^2}{14 S K \epsilon^2} R.
$$
Finally, noticing that $g^0=0$ implies $\mathcal{E}_{-1} \leq 2 L\left(f\left(\boldsymbol{\theta}^0\right)-f^*\right)=2 L \Delta$, we obtain
$$
\begin{aligned}
	\frac{1}{R} \sum_{t=0}^{R-1} \mathbb{E}\left[\left\|\nabla f\left(\boldsymbol{\theta}^{t}\right)\right\|^2\right] & \lesssim \frac{L \Delta}{\gamma L R}+\frac{\mathcal{E}_{-1}}{\gamma T}+(\gamma \eta K L)^2 \frac{1}{\epsilon^4} G_0+\frac{\gamma \sigma_l^2}{S K \epsilon^2}+\frac{\sigma^2 G_{g}^2}{s^2 T^2} \\
	& \lesssim \frac{L \Delta}{T}+\frac{L \Delta}{\gamma T}+\frac{\gamma \sigma_l^2}{S K \epsilon^2}+(\gamma \eta K L)^2 G_0 \frac{1}{\epsilon^4}+\frac{\sigma^2 G_{g}^2}{s^2 T^2}\\
	& \lesssim \frac{L \Delta}{T}+\sqrt{\frac{L \Delta \sigma_l^2}{S K T \epsilon^2}}+\frac{\sigma^2 G_{g}^2}{s^2 T^2}.
\end{aligned}
$$
\end{proof}


\end{document}